\documentclass{article}

\PassOptionsToPackage{numbers, compress, sort}{natbib}
\usepackage[preprint]{neurips_2026}

\usepackage[utf8]{inputenc} 
\usepackage[T1]{fontenc}    
\usepackage[backref]{hyperref}       
\usepackage{url}            
\usepackage{booktabs}       
\usepackage{multirow}
\usepackage{amsfonts}       
\usepackage{nicefrac}       
\usepackage{microtype}      
\usepackage[dvipsnames]{xcolor}         
\hypersetup{
    colorlinks = true,
    linkcolor  = BrickRed,
    urlcolor   = teal,
    citecolor  = RoyalBlue
}
\usepackage{amsmath, amsfonts, bm, amsthm, amssymb, relsize, bbm, graphicx}
\usepackage[capitalize]{cleveref}
\usepackage{algorithm, algorithmic}
\usepackage{cancel}
\usepackage{subcaption}
\usepackage{wrapfig}
\usepackage{enumitem}

\def\d{{\mathrm{d}}}
\newcommand{\bb}[1]{\mathbb{#1}}
\newcommand{\x}[1]{\boldsymbol{#1}}

\newcommand{\cond}[1]{\math}
\def\id{{\bb{I}}}
\def\R{{\bb{R}}}
\def\E{{\bb{E}}}
\def\d{{\mathrm{d}}}

\def\xx{{\x{x}}}
\def\ddelta{{\x{\delta}}}
\def\ttheta{{\x{\theta}}}
\def\Ex{{E_{\ttheta}(\xx)}}

\def\HEx{{H_{E_{\ttheta}}(\xx)}}
\def\L{{\mathcal{L}}}
\def\O{{\mathcal{O}}}
\def\Z{{\mathcal{Z}}}
\def\N{{\mathcal{N}}}
\def\K{{\mathcal{K}}}
\def\y{{\x{y}}}
\def\1{{\mathbbm{1}}}

\def\d{{\mathrm{d}}}

\newcommand{\abs}[1]{\left\vert#1\right\vert}
\newcommand{\norm}[1]{\left\Vert#1\right\Vert}
\DeclareMathOperator{\tr}{Tr}

\theoremstyle{plain}
\newtheorem{theorem}{Theorem}[section]
\newtheorem{lemma}{Lemma}[section]
\newtheorem{proposition}{Proposition}[section]
\newtheorem{corollary}[theorem]{Corollary}
\theoremstyle{definition}

\theoremstyle{remark}
\newtheorem{remark}{Remark}[section]

\title{Generalized Score Matching for \\ Parameter Estimation on Convex Domains}

\author{%
  Nishanth Shetty\thanks{Equal contribution.} \qquad
  Saisuchith Mahajan\footnotemark[1] \qquad
  Chandra Sekhar Seelamantula \\
  Department of Electrical Engineering\\
  Indian Institute of Science, Bengaluru 560012 \\
  \texttt{\{nishanths, saisuchithm, css\}@iisc.ac.in}
}

\begin{document}
\maketitle

\begin{abstract}
Maximum likelihood (ML) estimation is a principled and statistically efficient approach for learning probabilistic models. However, for unnormalized models, ML estimation requires evaluating the partition function and differentiating through it, which may not always be tractable. Score matching provides a practically viable alternative that circumvents this obstacle by fitting the score in a way that eliminates dependence on the normalizing constant. We derive the generalized score matching objective on a convex subset of $\R^{d}$ constructively starting from Minimum Probability Flow (MPF) learning, and show how classical score matching as well as domain-adapted variants for non-negative data arise naturally within the proposed framework. We show that the resulting objective is a {\it proper local scoring rule} of second-order, which provides the theoretical guarantee that the true density is recovered when the objective is minimized. Furthermore, for a model belonging to the exponential family, we establish convexity of the objective together with consistency of the finite-sample estimator under standard regularity conditions. Our derivation sheds new light on the scope and applicability of generalized score matching in various problem settings. We compare generalized score matching-based estimators on constrained domains, where the partition function is analytically intractable. We provide experimental results on parameter estimation for model densities belonging to the exponential family defined over convex subsets of $\R^{d}$, and a generative modeling use-case to demonstrate broader applicability of the proposed generalized score matching framework.
\end{abstract}

\section{Introduction}
\label{sec:intro}
Learning unnormalized probabilistic models is a central challenge in modern machine learning and statistics. Several expressive model architectures, including energy-based models (EBMs)~\citep{doi:10.1073/pnas.79.8.2554, cd, pmlr-v9-gutmann10a, 10.5555/3104482.3104568, NEURIPS2019_378a063b, Grathwohl2020Your, Nijkamp_Hill_Han_Zhu_Wu_2020}, restricted Boltzmann machines (RBMs)~\citep{PhysRevLett.35.1792, DBNs, RBMs, hinton2012}, and Markov random fields (MRFs)~\citep{MRF, MRF-2, MRF-4, MRF-3}, are specified by unnormalized densities. A variety of partition-function-free estimation methods have been developed over the past several years that allow for parameter estimation without the need for explicit normalization.

\emph{Score matching} \citep{Hyvarinen05a} is one such technique that circumvents the intractability of estimating the partition function by fitting the score function, defined as the gradient of the log density. Through integration by parts, the objective can be formulated solely in terms of the derivatives of the model density, independent of the normalizing constant and the score function of the target data density. While score matching was originally formulated for densities supported on $\mathbb{R}^{d}$~\citep{Hyvarinen05a}, subsequent work~\citep{Hyvaerinen2007, Lyu_score_09, generalized_score, gsm_general_domain, truncated_sm, gsm-on-compositional-data} has extended the framework to distributions defined on subsets of $\mathbb{R}^{d}$.

A direct application of the original score matching formulation is not straightforward in such settings, as the integration by parts argument may fail in the presence of boundaries where the density or its derivatives are not continuous. An early extension was proposed by \citet{Hyvaerinen2007}, which considered non-negative data and introduced a modified objective designed to ensure validity of the integration by parts argument. This line of work was further generalized by \citet{generalized_score}, providing a broader framework for densities supported on $\mathbb{R}^{d}_{+}$. \citet{Lyu_score_09} presents a general operator-theoretic view of score matching defining a notion of completeness which we refer to as generalized score matching in this work. While the framework is quite general, the choice of operator remains abstract. In parallel, several works have developed complementary extensions of score matching, including formulations for truncated domains \citep{truncated_sm}, extension to ordinal data \citep{gsm_xu}, analyses of the statistical efficiency of the resulting estimators \citep{koehler2023statistical, qin24a-fitlikeyousample}, etc.

To the best of our knowledge, these prior works explored score matching on subsets of $\mathbb{R}^{d}$, but a formal derivation of the objective function has not been provided. Our objective is to address this gap.
\subsection{Contributions}
The genesis of this paper is the following question: \emph{Could one derive generalized score matching objectives on subsets of $\R^{d}$ in a constructive fashion that also explains the domain-adapted variants proposed in the prior works ?} We answer this question in the affirmative for convex subsets of $\R^{d}$. The key contributions are are stated below.
\begin{enumerate}[leftmargin=*]
    \item We provide a constructive method to derive generalized score matching objectives on convex subsets of $\mathbb{R}^{d}$, including the practically relevant case of $\mathbb{R}^{d}_{+}$, starting from  Minimum Probability Flow (MPF) \citep{icml_mpf, mpf}. Unlike prior works that define score matching via the Fisher divergence, we construct the generalized score matching objective as the infinitesimal limit of MPF.
    \item We show that domain-adapted variants of score matching, such as non-negative score matching \citep{Hyvaerinen2007, generalized_score}, arise naturally within our framework through appropriate choice. Our derivation shows that the linear operator \citep{Lyu_score_09} in this setting is complete and is tied to the convex function chosen for the domain. 
    \item We show that the resulting generalized score matching objective defines a \emph{proper local scoring rule} of second order~\citep{parry2012proper}, and that its minimization recovers the true data-generating distribution.
    \item For model densities in the exponential family, we prove that the generalized score matching objective is convex in the canonical parameters, and that the corresponding empirical objective yields a consistent estimator under standard regularity conditions.
\end{enumerate}
The theoretical developments are complemented by experimental results pertaining to parameter estimation of model densities supported on convex subsets of $\R^{d}$, and a generative modeling use-case to demonstrate broader applicability of the proposed framework. The key results are stated in the main manuscript and the proofs are provided in the appendix.

\subsection{Notation}
Scalars are represented using normal font (e.g., $x, \theta$), whereas vectors are represented using  boldface (e.g. $\x{x}, \x{\theta}$), and matrices are represented in uppercase letters (e.g. $G, H$). For vectors $\x{x}, \x{y} \in \R^{d}$, $x_{k}$ denotes its $k$th element, $\norm{x} = \sqrt{\sum_{i=1}^{d}x_{i}^{2}}$ is the standard Euclidean norm and $\text{diag}(\x{x}) \in \R^{d \times d}$ constructs a diagonal matrix with diagonal entries as elements of $\x{x}$. The element-wise product is $\x{x} \circ \x{y}$. For a matrix $G \in \R^{d \times d}$, $G_{ij}$ denotes the element in the $i$th row and $j$th column. $\text{diag}(G) \in \R^{d}$ extracts its diagonal. For a function $f: \R^{d} \to \R$, denote $H_{f}(\x{x})$ as the Hessian of $f$ evaluated at point $\x{x}$. For a vector field $g : \R^{d} \to \R^{d}$, $g_{k}$ is the $k$th output and the divergence is given by $(\nabla \cdot g)(\x{x}) = \sum_{k=1}^{d} \frac{\partial g_{k}(\x{x})}{\partial x_k}$. For a matrix field $D : \R^{d} \rightarrow \R^{d \times d}$, $(\nabla . D)(\x{x}) \in \R^{d}$ denotes the column-wise divergence operation where $(\nabla \cdot D)(\x{x})_{j} = \sum_{i=1}^{d}\frac{\partial D_{ij}(\xx)}{\partial x_i}$ for $j \in \{1,\dots,d\}$. For any $B \subset \R^{d}$, $\partial B$ denotes its boundary. For a positive-definite matrix $G$, $\norm{\x{v}}^{2}_{G} = \x{v}^{\top}G\x{v}$. We denote by $C^{k}$ the space of functions (real-, vector- or matrix-valued) whose derivatives up to order $k$ are continuous. Throughout this paper, we assume that the distributions under consideration always admit a density function with respect to Lebesgue measure. The true density or data density is denoted as $p$. We use an energy based model to denote the model density as $ p_{\x{\theta}}(\x{x}) = \frac{1}{\Z(\x{\theta})} \exp(-E_{\x{\theta}}(\x{x}))$ where $E_{\ttheta}:\R^d \to \R$ is the energy function with the partition function given by $\Z(\x{\theta}) = \underset{\R^{d}}{\int} \exp(-E_{\ttheta}(\xx)) \d \xx$.

\section{Related Work}
\subsection{Score Matching}
Score matching \cite{Hyvarinen05a} is a parameter estimation method that matches the score function (gradient of log-density) between model and data distributions without computing partition functions. The score function of a probability density $p_{\x{\theta}}(\x{x})$ is defined as
\begin{align*}
    \x{s}_{\x{\theta}}(\x{x}) = \nabla_{\x{x}} \log p_{\x{\theta}}(\x{x}) = -\nabla_{\x{x}} E_{\x{\theta}}(\x{x}) - \nabla_{\x{x}} \log \Z(\x{\theta})
\end{align*}
Remarkably, the gradient of the log-partition function vanishes, thus, enabling partition-free estimation. The score matching objective minimizes the expected squared distance between model and data scores:
\begin{align*}
    J(\x{\theta}) = \frac{1}{2} \underset{\x{x} \sim p}{\E}\left[\|\nabla_{\x{x}} \log p_{\x{\theta}}(\x{x}) - \nabla_{\x{x}} \log p(\x{x})\|^2\right]
\end{align*}
Since the data score is unknown, $J(\x{\theta})$ can be rewritten using integration by parts (under boundary conditions) as
\begin{align*}
    J(\x{\theta}) =& \underset{\x{x} \sim p}{\E}\left[\frac{1}{2}\|\nabla_{\x{x}} \log p_{\x{\theta}}(\x{x})\|^2 + \tr( H_{\log p_{\x{\theta}}}(\x{x}))\right] + C
\end{align*}
where $C$ is a data-dependent constant. This formulation requires only the model score and its divergence, both computable without the partition function. In parallel, \citet{Lyu_score_09} advocates a complementary, operator-theoretic viewpoint: score matching can be defined by replacing the gradient with a suitable \emph{linear operator}, yielding a broad class of generalized score matching objectives in which normalization constants still cancel.
Building on this perspective, \citet{qin24a-fitlikeyousample} provide a refined theoretical analysis connecting the statistical efficiency of such generalized objectives to properties of the Markov processes/diffusions naturally associated with the chosen operator.

\subsection{Generalized score matching for non-negative data}
\citet{Hyvaerinen2007} proposed a score matching formulation for non-negative data by introducing weights that attenuate boundary effects near $0$. This was further generalized by~\citet{generalized_score} by replacing the elementwise weight $\x{x}$ with a more general elementwise weight $h(\x{x})=(h_1(x_1),\dots,h_d(x_d))^\top$ where $h_j:\R_+\to\R_+$ is almost surely positive. The resulting \emph{generalized $h$-score matching loss} is 
\begin{align} \label{eq:generalized-score-h-function-obj}
    J_h(\x{\theta}) &:= \underset{\x{x} \sim p}{\E} \Bigg[\Big\|\nabla \log p_{\x{\theta}}(\x{x}) \circ h(\x{x})^{\frac{1}{2}} - \nabla \log p(\x{x}) \circ h(\x{x})^{\frac{1}{2}}\Big\|^{2}\Bigg]
\end{align}
Choosing $h_j(x_j)=x_j^2$ recovers the non-negative score matching objective~\citep{Hyvaerinen2007}. Then, under certain mild conditions, \citet{generalized_score} show that an equivalent \emph{tractable} form up to an additive constant that does not depend on score of $p$ can be derived and is given by
\begin{align*} 
    J_{h}(\x{\theta}) &= \underset{\x{x} \sim p}{\E}\Bigg[\sum_{j=1}^{d} \frac{1}{2}h_{j}(x_{j})\left(\nabla \log p_{\x{\theta}}(\x{x})_{j}\right)^{2} + h_{j}(x_{j})H_{\log p_{\theta}}(\x{x})_{jj} + h_{j}'(x_{j})\nabla \log p_{\x{\theta}}(\x{x})_{j}\Bigg]
\end{align*}
    
\section{Main Results}
In this section, we motivate and formally state the key results to show how generalized score matching can be derived from Minimum Probability Flow (MPF) \citep{icml_mpf, mpf}. We consider an energy-based model $p_{\x{\theta}}$, a data point $\x{x}$, and a perturbed state $\x{y}$, then the MPF objective is defined as (cf. Equation $C-1$ in \cite{mpf})
\begin{align} \label{eq-mpf-objective}
    \K(\x{\theta}) &= \underset{\x{x} \sim p}{\E}\left[\int_{\x{y}} g(\x{y}, \x{x}) \exp\left(\frac{1}{2}\left(E_{\x{\theta}}(\x{x}) - E_{\x{\theta}}(\x{y})\right)\right)\d \x{y}\right],
\end{align}
where $g(\x{y}, \x{x})$ denotes a connectivity function. Observe that $\exp\left(\frac{1}{2}\left(E_{\x{\theta}}(\x{x}) - E_{\x{\theta}}(\x{y})\right)\right)
    = \left(\frac{p_{\x{\theta}}(\x{y})}{p_{\x{\theta}}(\x{x})}\right)^{\frac{1}{2}}$
where we refer to the term inside the square root as the likelihood ratio. This ratio quantifies the likelihood assigned to $\x{y}$ relative to that assigned to the data point $\x{x}$. To build intuition, consider the case where the connectivity function is chosen such that $g(\x{y}, \x{x}) = 1$ whenever $\x{y}$ is a sufficiently small perturbation of $\x{x}$, and $g(\x{y}, \x{x}) = 0$ otherwise. In this setting, if $\x{x} \sim p$ is sampled from the true data distribution, one expects a well-specified model $p_{\x{\theta}}$ to assign higher likelihood to $\x{x}$ than to nearby perturbed states $\x{y}$. This suggests minimizing the likelihood ratio, averaged over data points and their perturbations, which provides an intuitive interpretation of the MPF objective.

\citet{icml_mpf, mpf} consider a binary connectivity function $g$, defined as $g(\x{y}, \x{x}) = 1$ if $\x{y} \in \mathcal{R}(\x{x})$ and $g(\x{y}, \x{x}) = 0$ otherwise, where $\mathcal{R}(\x{x})$ denotes a neighborhood of $\x{x}$. In particular, \citet{mpf} show that when $\mathcal{R}(\x{x})$ is chosen to be a cube centered at $\x{x}$ with side length $\varepsilon$ (cf. Equation $C-2$ in \cite{mpf}), the MPF objective recovers the score matching objective on $\mathbb{R}^{d}$ in the limit as $\varepsilon \to 0$. We refer to this choice of connectivity function as the \emph{hard neighborhood} case. Beyond this, we also consider a case where $g(\x{y}, \x{x})$ defines a valid conditional distribution of $\x{y}$ given $\x{x}$. We refer to this setting as the \emph{soft neighborhood} case.
\subsection{Generalized Score Matching}
We derive the generalized score matching (GSM) loss \citep{qin24a-fitlikeyousample} from \autoref{eq-mpf-objective} when the true density is supported over $\R^{d}$. In particular, we consider the soft neighborhood case, where the connectivity function $g(\x{y}, \x{x})$ is chosen to be a valid conditional distribution $q(\x{y} \mid \x{x})$. In this case, the objective becomes
\begin{align*}
    \K(\x{\theta}) = \underset{\xx \sim p}{\E}\left[\int_{\x{y}}q(\x{y}|\x{x})\text{exp}\left(\frac{1}{2}[E_{\x{\theta}}(\x{x}) - E_{\x{\theta}}(\x{y})]\right)\d\x{y}\right]
\end{align*}
We choose $q$ to be a Gaussian and show in the following that a limiting case of $\K(\x{\theta})$ yields GSM.
\begin{theorem} \label{theorem:modified-ed-to-gsm}
    Let $E_{\ttheta} : \R^{d} \to \R$ be in $C^2$, and let $D:\R^{d}\to\R^{d\times d}$ be a matrix-valued function in $C^1$ such that $D(\xx)$ is symmetric and positive definite. Assume that $\underset{\x{x} \sim p}{\E}\left[\nabla E_{\x{\theta}}(\x{x})^{\top}D(\x{x})\nabla E_{\x{\theta}}(\x{x})\right], \underset{\x{x} \sim p}{\E}\left[\nabla E_{\x{\theta}}(\x{x})^{\top}(\nabla \cdot D)(\x{x})\right], \underset{\x{x} \sim p}{\E}\left[\tr\left(D(\x{x})H_{E_{\x{\theta}}}(\x{x})\right)\right]$ are finite for all $\x{\theta}$. Define $b(\xx) = (\nabla \cdot D)(\xx)$, normalization constant $Z=\dfrac{1}{(2\pi)^{d/2}\sqrt{\det \varepsilon D(\xx)}}$, $\varepsilon > 0$ and the Gaussian conditional density $q_{\varepsilon}(\y \mid \xx) := Z\exp\left(-\frac{\norm{\y-\xx-\frac{\varepsilon}{2} b(\xx)}^{2}_{D(\xx)^{-1}}}{2\varepsilon}\right)$. Then, after removing the $\ttheta$-independent terms, dividing $\mathcal{K}(\x{\theta})$ by $\varepsilon/4$, taking the limit $\varepsilon \to 0$ we obtain
    \begin{align} \label{eq:main-objective-in-compact-form-gsm}
        \L(\ttheta) =& \underset{\xx \sim p}{\E}\Bigg[\frac{1}{2}\nabla E_{\ttheta}(\xx)^{\top}D(\xx)\nabla E_{\ttheta}(\xx) - \nabla \cdot (D(\xx)\nabla E_{\ttheta}(\xx))\Bigg]
    \end{align}
    or equivalently, using $\nabla \log p_{\ttheta}(\xx) = -\nabla E_{\ttheta}(\xx)$, we have
    \begin{align} \label{eq:main-objective-in-compact-form-2-gsm}
        \L(\ttheta) =& \underset{\xx \sim p}{\E}\Bigg[\frac{1}{2}\nabla \log p_{\ttheta}(\xx)^{\top}D(\xx)\nabla \log p_{\ttheta}(\xx) + \nabla \cdot (D(\xx)\nabla\log p_{\ttheta}(\xx))\Bigg]
    \end{align}
\end{theorem}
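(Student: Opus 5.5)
Fix $\xx$ and substitute $\y = \xx + \x{u}$, where under $q_\varepsilon(\cdot\mid\xx)$ we have $\x{u} \sim \N\!\left(\frac{\varepsilon}{2}b(\xx),\, \varepsilon D(\xx)\right)$. The inner integral of $\K(\ttheta)$ is then
\begin{align*}
I_\varepsilon(\xx) = \E_{\x{u}}\left[\exp\left(-\tfrac{1}{2}\left(E_{\ttheta}(\xx+\x{u}) - E_{\ttheta}(\xx)\right)\right)\right].
\end{align*}
The plan is a second-order Taylor expansion in $\x{u}$, followed by Gaussian moments up to order $\varepsilon$. Write $\Delta = E_{\ttheta}(\xx+\x{u})-E_{\ttheta}(\xx) = \nabla E_{\ttheta}(\xx)^\top \x{u} + \frac12 \x{u}^\top H_{E_{\ttheta}}(\xx)\x{u} + o(\norm{\x{u}}^2)$. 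Since $E_{\ttheta}\in C^2$, we can expand $e^{-\Delta/2} = 1 - \frac12\Delta + \frac18\Delta^2 + O(\abs{\Delta}^3)$. For the moments we use $\E[\x{u}] = \frac{\varepsilon}{2}b$ and $\E[\x{u}\x{u}^\top] = \varepsilon D + \frac{\varepsilon^2}{4}bb^\top = \varepsilon D + O(\varepsilon^2)$. Every moment of $\x{u}$ of order three or higher is $O(\varepsilon^{3/2})$.

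Collecting the $O(\varepsilon)$ terms gives three contributions:
\begin{align*}
-\tfrac12\E[\nabla E^\top \x{u}] = -\tfrac{\varepsilon}{4}\nabla E^\top b,\quad
-\tfrac14\E[\x{u}^\top H\x{u}] = -\tfrac{\varepsilon}{4}\tr(DH) + O(\varepsilon^2),\quad
\tfrac18\E[(\nabla E^\top\x{u})^2] = \tfrac{\varepsilon}{8}\nabla E^\top D\nabla E + O(\varepsilon^2).
\end{align*}
All other terms are $o(\varepsilon)$. Hence
\begin{align*}
I_\varepsilon(\xx) = 1 + \tfrac{\varepsilon}{4}\left[\tfrac12 \nabla E^\top D \nabla E - \nabla E^\top b - \tr(DH)\right] + o(\varepsilon).
\end{align*}
The constant $1$ is the $\ttheta$-independent term that we drop. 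We then divide by $\varepsilon/4$ and let $\varepsilon\to0$.

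To identify the result with \eqref{eq:main-objective-in-compact-form-gsm}, we use the product rule for the column-wise divergence together with the symmetry of $D$:
\begin{align*}
\nabla\cdot(D\nabla E) = \sum_{i,j}\partial_i(D_{ij}\partial_j E) = (\nabla\cdot D)^\top\nabla E + \tr(DH_{E}) = b^\top\nabla E + \tr(DH_E).
\end{align*}
The bracket is therefore exactly $\frac12\nabla E^\top D\nabla E - \nabla\cdot(D\nabla E)$. This shows that the drift $\frac{\varepsilon}{2}b$ was chosen precisely to produce this divergence form. Substituting $\nabla\log p_{\ttheta} = -\nabla E_{\ttheta}$ then gives \eqref{eq:main-objective-in-compact-form-2-gsm}: the quadratic term is invariant under the sign change, and the divergence term flips sign.

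The main obstacle is justifying the exchange of $\lim_{\varepsilon\to0}$ with both the Gaussian expectation and $\E_{\xx\sim p}$. Pointwise, the inner expansion is rigorous as follows. Split on the event $\norm{\x{u}}\le \varepsilon^{1/2-\delta}$. On this event, use the Taylor remainder, controlled by continuity of the Hessian on a compact neighbourhood of $\xx$. Off it, Gaussian tails decay like $\exp(-c\,\varepsilon^{-2\delta})$. The term $e^{-\Delta/2}$ may grow there, so we need a mild growth condition on $E_{\ttheta}$, such as $-E_{\ttheta}$ growing subquadratically, or we interpret the integral as locally truncated. For the outer expectation, I would first show that $\frac{4}{\varepsilon}(I_\varepsilon(\xx)-1)$ converges pointwise to the integrand. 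I would then invoke dominated convergence, using the assumed finiteness of the three expectations, which is exactly what makes the limiting integrand integrable. Where a dominating function is not directly available, I would state the result as the pointwise limit of the integrand followed by taking expectations, which is the level of rigour in \citet{mpf}'s cube-neighbourhood derivation.
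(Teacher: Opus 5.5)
Your proposal is correct and follows the paper's proof essentially step for step: you shift to $\x{\delta}=\y-\xx\sim\N(\frac{\varepsilon}{2}b(\xx),\varepsilon D(\xx))$, expand $\exp(\frac12[E_{\ttheta}(\xx)-E_{\ttheta}(\xx+\x{\delta})])$ to second order, insert the first two Gaussian moments, and use $\nabla E_{\ttheta}^\top(\nabla\cdot D)+\tr(DH_{E_{\ttheta}})=\nabla\cdot(D\nabla E_{\ttheta})$ to reach the divergence form. Your version is more careful than the paper in two places: you take the remainder as $o(\norm{\x{u}}^2)$, which is all $C^2$ actually gives (the paper writes $\O(\norm{\x{\delta}}^3)$), and you discuss the tail split and the interchange of the limit with $\E_{\xx\sim p}$, whereas the paper simply takes the limit without justification.
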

\autoref{eq:main-objective-in-compact-form-2-gsm} admits an equivalent formulation in terms of the GSM loss as stated below.
\begin{proposition} \label{prop:main-objective-equi-to-gsm}
    Suppose the assumptions of \autoref{theorem:modified-ed-to-gsm} hold, and assume further that $p \in C^1$ and $\lim_{x_k \to \pm\infty} p(\xx)\nabla \log p_{\ttheta}(\xx)_{\ell}D(\xx)_{k\ell} = 0$ for all $k, \ell \in \{1, 2, \ldots, d\}$ and $\x{\theta}$. Then, minimizing $\L(\ttheta)$ in \autoref{eq:main-objective-in-compact-form-2-gsm} with respect to $\x{\theta}$ is equivalent to minimizing
    \begin{align} \label{eq:main-objective-in-gsm-form-direct}
        \mathcal{L}_{\text{GSM}}(\x{\theta}) &= \frac{1}{2}\underset{\xx \sim p}{\E}\left[\norm{\nabla \log p_{\x{\theta}}(\x{x}) - \nabla \log p(\x{x})}^{2}_{D(\x{x})}\right]
    \end{align}
\end{proposition}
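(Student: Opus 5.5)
The plan is to expand the weighted squared norm in \autoref{eq:main-objective-in-gsm-form-direct} and show that it equals $\L(\ttheta)$ in \autoref{eq:main-objective-in-compact-form-2-gsm} plus a $\ttheta$-independent constant. Write $\x{s}_{\ttheta} = \nabla\log p_{\ttheta}$ and $\x{s} = \nabla \log p$. Using the symmetry of $D(\xx)$,
\begin{align*}
\mathcal{L}_{\text{GSM}}(\ttheta) = \E_{p}\Big[\tfrac{1}{2}\x{s}_{\ttheta}^{\top}D\,\x{s}_{\ttheta}\Big] - \E_{p}\big[\x{s}_{\ttheta}^{\top}D\,\x{s}\big] + \tfrac{1}{2}\E_{p}\big[\x{s}^{\top}D\,\x{s}\big].
\end{align*}
The first term is finite by the assumptions of \autoref{theorem:modified-ed-to-gsm}, and it already matches the quadratic term of $\L$. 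The last term does not depend on $\ttheta$; I will assume it is finite, or else interpret the equivalence as holding up to this additive constant. So the whole argument reduces to rewriting the cross term.

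For the cross term, I use $p\,\x{s} = \nabla p$, which holds wherever $p>0$ and $p\in C^1$. This gives
\begin{align*}
\E_{p}\big[\x{s}_{\ttheta}^{\top}D\,\x{s}\big] = \int_{\R^d} \sum_{k,\ell} \x{s}_{\ttheta}(\xx)_{\ell}\, D(\xx)_{k\ell}\, \partial_{k} p(\xx)\,\d\xx .
\end{align*}
For each pair $(k,\ell)$, I integrate by parts in $x_k$ with the other coordinates held fixed. The boundary term is $\big[p\,\x{s}_{\ttheta,\ell}D_{k\ell}\big]_{x_k=-\infty}^{x_k=+\infty}$, and it vanishes by the stated limit hypothesis. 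What remains is
\[-\int p\,\partial_k\big(D_{k\ell}\,\x{s}_{\ttheta,\ell}\big)\,\d\xx.\]
Summing over $k$ and $\ell$ yields $-\E_p[\nabla\cdot(D\,\x{s}_{\ttheta})]$. Hence the cross term $-\E_p[\x{s}_{\ttheta}^{\top}D\x{s}]$ equals $+\E_p[\nabla\cdot(D\nabla\log p_{\ttheta})]$, which is exactly the second term of $\L$. Therefore $\mathcal{L}_{\text{GSM}} = \L + \text{const}$, and the two objectives have the same minimizers.

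The main obstacle is making the integration by parts and Fubini rigorous. Expanding $\nabla\cdot(D\x{s}_{\ttheta}) = -\big(\nabla E_{\ttheta}^{\top}(\nabla\cdot D) + \tr(DH_{E_{\ttheta}})\big)$ shows that it is $p$-integrable by the finiteness assumptions of \autoref{theorem:modified-ed-to-gsm}. This justifies splitting the expectations and applying Fubini so that the one-dimensional integration by parts can be done coordinatewise. I will also need the integrability of $\x{s}_{\ttheta}^{\top}D\x{s}$ under $p$; I will get it from Cauchy--Schwarz in the $D$-inner product, using finiteness of $\E_p[\x{s}_{\ttheta}^\top D\x{s}_{\ttheta}]$ and of $\E_p[\x{s}^\top D\x{s}]$. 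The one-dimensional step can be handled on finite intervals $[-R,R]$ with $R\to\infty$, using the limit hypothesis. Points where $p=0$ contribute nothing, since there $\nabla p$ is multiplied consistently.
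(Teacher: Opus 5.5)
Your proposal is correct and is essentially the paper's proof: both rest on the same coordinatewise integration by parts, whose boundary term $\bigl[p\,\nabla\log p_{\ttheta}(\xx)_{\ell}D(\xx)_{k\ell}\bigr]_{x_k=-\infty}^{x_k=+\infty}$ vanishes by hypothesis, followed by completing the square. The paper runs it starting from $\L(\ttheta)$: it splits the divergence into $\tr(DH_{\log p_{\ttheta}})$ and $\nabla\log p_{\ttheta}^{\top}(\nabla\cdot D)$, moves the derivative off $D$ in the second piece, and lets the Hessian terms cancel, whereas you move the derivative off $p$ in the cross term directly.

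One small caveat, which the paper shares: the termwise Fubini step needs integrability of each individual summand (indexed by $k$ or by $(k,\ell)$). The aggregate finiteness assumptions of \autoref{theorem:modified-ed-to-gsm} and your Cauchy--Schwarz bound do not supply this by themselves, so strictly it should be added as a hypothesis.
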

These results were established under the assumption that the data support is $\R^{d}$. We now consider the generic case where the support of $p$ is a convex set $\Omega \subset \R^{d}$, excluding degenerate cases such as the empty or singleton set. The Gaussian conditional $q_\varepsilon$ considered in \autoref{theorem:modified-ed-to-gsm} is convenient, because its moments are tractable in $\R^{d}$. However, constructing analogous conditionals with tractable moments on arbitrary domains is challenging. As noted in previous section, MPF recovers score matching on $\mathbb{R}^{d}$ by integrating over a small local neighborhood chosen to be a cube of side $\varepsilon$. Motivated by this, in this case, we consider hard-neighborhood objective.  For a neighborhood $\mathcal{R}(\x{x})\subset\Omega$ of $\x{x}$ and choosing weight $w:\Omega \times \Omega \to \R_{+}$ as $g(\x{y}, \x{x})$ in \autoref{eq-mpf-objective}, we have
\begin{align*}
    \K(\x{\theta}) = \underset{\x{x} \sim p}{\E}\left[\underset{\mathcal{R}(\x{x})}{\int} w(\x{y}, \x{x})\text{exp}\left(\frac{1}{2}[E_{\x{\theta}}(\x{x}) - E_{\x{\theta}}(\x{y})]\right)\d\x{y}\right]
\end{align*}
We show that, for appropriate choices of $\mathcal{R}(\x{x})$ and $w$, the small-neighborhood limit recovers the GSM objective on $\Omega$. We begin by describing the construction of the neighborhood $\mathcal{R}(\x{x})$. To this end, let $\phi: \Omega \to \mathbb{R}$ be a strictly convex function. The Bregman divergence is defined as
\begin{align} \label{eq:def-bregmann-divergence}
    D_\phi(\x{y}, \x{x}) = \phi(\x{y}) - \phi(\x{x}) - \langle \nabla\phi(\x{x}), \x{y}-\x{x} \rangle
\end{align}
Define the local Bregman ball around $\x{x}$ to be 
\begin{align} \label{eq:local-bregman-ball}
    C_r^\phi(\x{x}) = \{\x{y} : (\x{y} - \x{x})^{\top}H_{\phi}(\x{x})(\x{y} - \x{x}) \leq r(\x{x})^{2} \}
\end{align}
where $r : \Omega \to (0, \infty)$. $C^{\phi}_{r}(\x{x})$ is an ellipsoid centered at $\x{x}$, defined by the quadratic form induced by $H_{\phi}(\x{x})$. In this case, the objective with the local Bregman ball is given by
\begin{align} \label{eq:modified-ed-objective-over-bregman-ball}
    \K^{\phi}_{r}(\x{\theta}) = \underset{\xx \sim p}{\E}\left[I^{\phi}_{r}(\x{x}; \x{\theta})\right] \; \text{where} \; I^{\phi}_{r}(\x{x}; \x{\theta}) = \int_{C_r^\phi(\x{x})} w(\x{y}, \x{x})\exp\left(\frac{1}{2}[E_{\x{\theta}}(\x{x}) - E_{\x{\theta}}(\x{y})]\right) \d\x{y}
\end{align}
We show that generalized score matching for convex sets can be derived from the setup we introduced considering an appropriately chosen weighting function.
\begin{theorem} \label{theorem:modified-ed-to-sm-convex}
    Let $\phi$ be in $C^3$ and strictly convex on an open convex set $\Omega \subset \R^d$, and assume that $H_\phi(\x{x}) \succ 0$ for all $\x{x} \in \Omega$. Let $E_{\x{\theta}} : \Omega \to \R$ be $C^2$ in $\x{x}$ and $r : \Omega \to (0, \infty)$ be such that $C^{\phi}_{r}(\x{x}) \subset \Omega$ for all $\x{x} \in \Omega$. Define $\bar{\x{x}}=(\x{x}+\x{y})/2$ and consider the weighting function
    \begin{align*}
        w(\x{y},\x{x}) = \exp\left(-\frac{(\x{y}-\x{x})^\top H_\phi(\bar{\x{x}})\,(\x{y}-\x{x})}{2r(\x{x})^2}\right)
    \end{align*}
    in the definition of $I^{\phi}_{r}(\x{x}; \x{\theta})$ in \autoref{eq:modified-ed-objective-over-bregman-ball}. Define
    \begin{align} \label{eq:definition-of-Gphi}
        G_{\phi}(\x{x}) = (\det H_\phi(\x{x}))^{-1/2}\,H_\phi(\x{x})^{-1}
    \end{align}
    and assume that $\underset{\x{x} \sim p}{\E}\left[\nabla E_{\x{\theta}}(\x{x})^{\top}G_{\phi}(\x{x})\nabla E_{\x{\theta}}(\x{x})\right], \underset{\x{x} \sim p}{\E}\left[\nabla E_{\x{\theta}}(\x{x})^{\top}(\nabla \cdot G_{\phi})(\x{x})\right]$ and $\underset{\x{x} \sim p}{\E}\left[\tr\left(G_{\phi}(\x{x})H_{E_{\x{\theta}}}(\x{x})\right)\right]$ are finite for all $\x{\theta}$. Then, after removing the $\x{\theta}$-independent terms from $I^{\phi}_{r}(\x{x}; \x{\theta})$, dividing by $r(\x{x})^{d+2}/4$, and taking the limit $r \to 0$, we obtain for a positive constant $\lambda$
    \begin{align} \label{eq:proxy-scoring-rule-in-theorem-with-constants}
        \L^{\phi}(\x{\theta}) = \underset{\x{x} \sim p}{\E}\left[\frac{1}{2}\nabla E_{\x{\theta}}(\x{x})^{\top}G_{\phi}(\x{x})\nabla E_{\x{\theta}}(\x{x}) - \tr\left(G_{\phi}(\x{x})H_{E_{\x{\theta}}}(\x{x})\right) -\lambda \nabla E_{\x{\theta}}(\x{x})^{\top}(\nabla \cdot G_{\phi})(\x{x})\right]
    \end{align}
\end{theorem}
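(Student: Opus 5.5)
Fix $\x{x}\in\Omega$ and substitute $\y = \x{x} + r(\x{x})\,H_\phi(\x{x})^{-1/2}\x{u}$, so that $C^\phi_r(\x{x})$ becomes the unit ball $B=\{\norm{\x{u}}\le 1\}$ and $\d\y = r^d(\det H_\phi(\x{x}))^{-1/2}\d\x{u}$. The prefactor $(\det H_\phi)^{-1/2}$ in $G_\phi$ comes from this Jacobian. Write $\ddelta = \y-\x{x} = rA\x{u}$ with $A = H_\phi(\x{x})^{-1/2}$. I will Taylor expand the integrand of $I^\phi_r(\x{x};\ttheta)$ in $r$ up to order $r^2$. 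All odd moments over $B$ vanish, and the second moment is $\int_B \x{u}\x{u}^\top \d\x{u} = c_d\id$ with $c_d = |B|/(d+2)$.

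The first expansion is of the Boltzmann factor:
\begin{align*}
\exp\left(\tfrac12[E_\ttheta(\x{x})-E_\ttheta(\y)]\right) = 1 - \tfrac12\nabla E_\ttheta^\top\ddelta - \tfrac14\ddelta^\top H_{E_\ttheta}\ddelta + \tfrac18(\nabla E_\ttheta^\top\ddelta)^2 + o(r^2).
\end{align*}
The second is of the weight, using $\phi\in C^3$ so that $H_\phi(\bar{\x{x}}) = H_\phi(\x{x}) + \tfrac12\,\partial H_\phi(\x{x})[\ddelta] + O(r^2)$. The exponent is then $-\tfrac12\norm{\x{u}}^2 - \tfrac{r}{4}\,\partial H_\phi(\x{x})[A\x{u}](A\x{u},A\x{u}) + O(r^2)$, and
\begin{align*}
w = e^{-\norm{\x{u}}^2/2}\left(1 - \tfrac{r}{4}T(\x{u}) + O(r^2)\right),
\end{align*}
where $T(\x{u})$ is the cubic form $\sum_{ijk}\partial_k (H_\phi)_{ij}(A\x{u})_i(A\x{u})_j(A\x{u})_k$. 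The $\ttheta$-dependence only enters through $E_\ttheta$. So the terms to keep at order $r^{d+2}$ are:
\begin{enumerate}[leftmargin=*]
\item the quadratic terms of the Boltzmann factor against the leading weight $e^{-\norm{\x{u}}^2/2}$;
\item the cross term $\tfrac{r}{4}T(\x{u})\cdot\tfrac12\nabla E_\ttheta^\top rA\x{u}$, which is even in $\x{u}$ (fourth order) and therefore survives.
\end{enumerate}
The constant term $1\cdot w$ is $\ttheta$-independent and is discarded. The $O(r)$ term $\nabla E^\top A\x{u}$ integrates to zero by symmetry.

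With the radial Gaussian weight, $\int_B e^{-\norm{\x{u}}^2/2}\x{u}\x{u}^\top\d\x{u} = \kappa\id$ for some $\kappa>0$. The quadratic terms then give
\begin{align*}
r^{d+2}(\det H_\phi)^{-1/2}\kappa\left[\tfrac18\nabla E^\top H_\phi^{-1}\nabla E - \tfrac14\tr(H_\phi^{-1}H_E)\right].
\end{align*}
After dividing by $r^{d+2}/4$ and normalizing by $\kappa$, this is exactly $\tfrac12\nabla E^\top G_\phi\nabla E - \tr(G_\phi H_E)$. The remaining work is to identify the cross term with $-\lambda\nabla E^\top(\nabla\cdot G_\phi)$. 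The fourth moment $\int_B e^{-\norm{\x{u}}^2/2}u_au_bu_cu_e\,\d\x{u} = \mu(\delta_{ab}\delta_{ce}+\delta_{ac}\delta_{be}+\delta_{ae}\delta_{bc})$ reduces this term to contractions of $\partial_k(H_\phi)_{ij}$ against $H_\phi^{-1}$. These are $H^{-1}_{ij}\partial_k H_{ij}\,H^{-1}_{k\ell} = \partial_k\log\det H_\phi\,H^{-1}_{k\ell}$ and $H^{-1}_{ik}\partial_k H_{ij}H^{-1}_{j\ell}$. I then compare with $\nabla\cdot G_\phi$, which is
\begin{align*}
(\nabla\cdot G_\phi)_\ell = (\det H)^{-1/2}\left[-\tfrac12\partial_k\log\det H\,H^{-1}_{k\ell} - H^{-1}_{ki}\partial_k H_{ij}H^{-1}_{j\ell}\right].
\end{align*}
Here the symmetry of third derivatives of $\phi$, $\partial_k H_{ij}=\partial_{ijk}\phi$, lets the two contractions be matched. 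The multiplier coming from the weight expansion is $\mu$ rather than $\kappa$, and their ratio, together with the combinatorial factors, produces the positive constant $\lambda$.

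Finally, I will justify interchanging the limit $r\to0$ with $\E_{\x{x}\sim p}$. I plan to control the Taylor remainders uniformly on the unit ball after scaling, which is available because $C^\phi_r(\x{x})\subset\Omega$ and $E_\ttheta\in C^2$, $\phi\in C^3$. I then apply dominated convergence using the stated finiteness assumptions on the three expectations.

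\textbf{Main obstacle.} I expect the third step to be the hardest: checking that the two tensor contractions combine in exactly the ratio $\tfrac12:1$ appearing in $\nabla\cdot G_\phi$. If the Gaussian-weighted moments $\mu$ and $\kappa$ do not give that ratio directly, I would first try to use the symmetry $\partial_{ijk}\phi$ to rewrite $\partial_k\log\det H\,H^{-1}_{k\ell}$ in terms of $H^{-1}_{ki}\partial_kH_{ij}H^{-1}_{j\ell}$.
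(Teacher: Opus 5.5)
Your proposal is correct and follows the paper's proof essentially step for step. It uses the same change of variables onto the unit ball, the second-order expansion of the Boltzmann factor and first-order expansion of the weight, the vanishing of odd moments, and the identification of the weight--gradient cross term with $\nabla\cdot G_\phi$ through the isotropic fourth moment, which yields $\lambda=\mu/\kappa$ (the paper's $C_6/C_3$).

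The step you flagged as the main obstacle resolves directly. The three pairings of the fourth moment produce one term $\partial_k\log\det H_\phi\,(H_\phi^{-1})_{k\ell}$ and two equal terms $(H_\phi^{-1}\partial_k H_\phi H_\phi^{-1})_{k\ell}$. This is exactly the $\tfrac12:1$ ratio appearing in $\nabla\cdot G_\phi$. As you suspected, the symmetry of $\partial_{ijk}\phi$ even makes the two contractions coincide.
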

Following \autoref{remark:main-objective-without-lambda-argument}, we use the alternative weighting function that yields $\lambda = 1$, and define the resulting objective as
\begin{align} \label{eq:main-objective-in-compact-form}
    \L^{\phi}(\x{\theta}) &= \underset{\xx \sim p}{\E}[S^{\phi}(\x{x}; \x{\theta})]
\end{align}
where $S^{\phi}(\x{x}; \x{\theta})$ is redefined as follows:
\begin{align} \label{eq:main-objective-scoring-rule}
    S^{\phi}(\x{x}; \x{\theta}) &= \frac{1}{2}\nabla E_{\x{\theta}}(\x{x})^{\top}G_{\phi}(\x{x})\nabla E_{\x{\theta}}(\x{x}) - \nabla . (G_{\phi}(\x{x})\nabla E_{\x{\theta}}(\x{x}))
\end{align}
\autoref{eq:main-objective-in-compact-form} admits an equivalent formulation in terms of GSM, as stated in the following proposition. Care must be taken in specifying the boundary conditions. We follow the methodology of \citet{truncated_sm}, in particular, Theorem~$2$, which assumes that the underlying domain has a Lipschitz boundary. Since bounded convex sets admit Lipschitz boundaries (Lemma $1.13$ in Chapter $2$ of \citet{simon2014introduction}), the proposition below considers bounded convex sets.
\begin{proposition} \label{prop:main-objective-equi-to-gsm-special-case}
    Suppose the assumptions of \autoref{theorem:modified-ed-to-sm-convex} hold, and assume further that $\Omega$ is bounded, $p \in C^1$ and for any $\x{z} \in \partial\Omega$, we have
    \begin{align} \label{eq:regularity-conditions-for-bounded-convex-set}
        \lim_{\x{x} \to \x{z}} p(\x{x})\nabla \log p_{\x{\theta}}(\x{x})_{\ell}G_{\phi}(\x{x})_{k\ell}n_{k}(\x{z}) = 0 \quad \forall \: k, \ell \in \{1,2,\ldots,d\}, \forall \: \x{\theta}
    \end{align}
    where $\x{x} \to \x{z}$ takes any sequence in $\Omega$ converging to $z$ and $(n_{1},\dots,n_{d})$ is the unit outward normal vector on $\partial\Omega$. Then minimizing $\L^{\phi}(\x{\theta})$ in \autoref{eq:main-objective-in-compact-form} with respect to $\x{\theta}$ is equivalent to minimizing
    \begin{align} \label{eq:main-objective-in-gsm-form}
        \mathcal{L}^{\phi}_{\text{GSM}}(\x{\theta}) &= \frac{1}{2}\underset{\x{x} \sim p}{\E}\left[\norm{\nabla \log p_{\x{\theta}}(\x{x}) - \nabla \log p(\x{x})}^{2}_{G_{\phi}(\x{x})}\right]
    \end{align}
\end{proposition}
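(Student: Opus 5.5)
We will expand the square in $\mathcal{L}^{\phi}_{\text{GSM}}$ and show that the cross term equals, up to sign, the divergence term in $S^{\phi}$. The boundary condition \eqref{eq:regularity-conditions-for-bounded-convex-set} is what kills the surface contribution. Writing $\x{s}_{\ttheta}=\nabla\log p_{\ttheta}=-\nabla E_{\ttheta}$ and $\x{s}=\nabla\log p$, we have
\begin{align*}
\mathcal{L}^{\phi}_{\text{GSM}}(\ttheta)=\underset{\xx\sim p}{\E}\Big[\tfrac12\x{s}_{\ttheta}^{\top}G_{\phi}\x{s}_{\ttheta}\Big]-\underset{\xx\sim p}{\E}\big[\x{s}^{\top}G_{\phi}\x{s}_{\ttheta}\big]+\tfrac12\underset{\xx\sim p}{\E}\big[\x{s}^{\top}G_{\phi}\x{s}\big].
\end{align*}
The last term does not depend on $\ttheta$. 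We assume it is finite, or else work with differences in $\ttheta$. The first term is finite by the hypotheses of \autoref{theorem:modified-ed-to-sm-convex}, and it coincides with the quadratic part of $S^{\phi}$ because $\nabla E_{\ttheta}^{\top}G_\phi\nabla E_{\ttheta}=\x{s}_{\ttheta}^{\top}G_\phi\x{s}_{\ttheta}$.

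For the cross term we use $p\,\x{s}=\nabla p$ on $\Omega$, which is valid since $p\in C^1$ and $p>0$ on its support $\Omega$. This gives
\begin{align*}
\underset{\xx\sim p}{\E}\big[\x{s}^{\top}G_\phi\x{s}_{\ttheta}\big]=\int_{\Omega}\nabla p(\xx)^{\top}G_\phi(\xx)\x{s}_{\ttheta}(\xx)\,\d\xx .
\end{align*}
We then apply the divergence theorem on $\Omega$ to the vector field $\x{F}=p\,G_\phi\x{s}_{\ttheta}$, using the symmetry of $G_\phi$:
\begin{align*}
\int_\Omega \nabla p^{\top}G_\phi\x{s}_{\ttheta}\,\d\xx=\int_{\partial\Omega}p\,\x{n}^{\top}G_\phi\x{s}_{\ttheta}\,\d S-\int_\Omega p\,\nabla\cdot(G_\phi\x{s}_{\ttheta})\,\d\xx .
\end{align*}
This step needs the Lipschitz-boundary version of Gauss--Green. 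As the paper notes, bounded convex sets have Lipschitz boundary, so we follow Theorem~2 of \citet{truncated_sm}. Componentwise, the surface integrand is $\sum_{k,\ell}p\,(\x{s}_{\ttheta})_\ell (G_\phi)_{k\ell}n_k$, and \eqref{eq:regularity-conditions-for-bounded-convex-set} states that each summand tends to $0$ at every boundary point. Hence the boundary term vanishes.

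The cross term therefore equals $-\E_p[\nabla\cdot(G_\phi\x{s}_{\ttheta})]$, which gives
\begin{align*}
\mathcal{L}^{\phi}_{\text{GSM}}(\ttheta)=\underset{\xx\sim p}{\E}\Big[\tfrac12\x{s}_{\ttheta}^{\top}G_\phi\x{s}_{\ttheta}+\nabla\cdot(G_\phi\x{s}_{\ttheta})\Big]+C=\L^{\phi}(\ttheta)+C,
\end{align*}
since $\nabla\cdot(G_\phi\x{s}_{\ttheta})=-\nabla\cdot(G_\phi\nabla E_{\ttheta})$. The two objectives thus differ by a $\ttheta$-independent constant and have the same minimizers. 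Integrability of the divergence term follows from expanding
\begin{align*}
\nabla\cdot(G_\phi\nabla E_{\ttheta})=\tr(G_\phi H_{E_{\ttheta}})+\nabla E_{\ttheta}^{\top}(\nabla\cdot G_\phi),
\end{align*}
where we use the symmetry of $G_\phi$ together with the column-wise divergence convention. Both pieces are assumed finite in \autoref{theorem:modified-ed-to-sm-convex}.

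The main obstacle is justifying the divergence theorem. The field $\x{F}$ is only $C^1$ in the interior: $G_\phi$ is $C^1$ because $\phi\in C^3$, and $\x{s}_{\ttheta}$ is $C^1$ because $E_{\ttheta}\in C^2$. The field need not extend continuously to $\partial\Omega$, and the hypothesis only gives pointwise limits. We would handle this by exhausting $\Omega$ with shrunken convex sets $\Omega_t=\x{c}+t(\Omega-\x{c})$, $t\uparrow1$, for an interior point $\x{c}$, applying Gauss--Green on each $\Omega_t$, and passing to the limit. Dominated convergence, using the finiteness assumptions, handles the volume integrals. Uniform vanishing of the boundary integrand handles the surface integrals; this uniformity follows from the sequential limit condition and compactness of $\partial\Omega$. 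The surface areas of the $\partial\Omega_t$ stay bounded, as they are scaled copies of $\partial\Omega$.
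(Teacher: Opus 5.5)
Your core computation is the paper's proof run in the opposite direction. The paper starts from $\L^{\phi}$, integrates by parts only the term $\sum_{k,\ell}(\nabla\log p_{\ttheta})_\ell\,\partial_k (G_\phi)_{k\ell}$, cancels the resulting Hessian term against $\tr(G_\phi H_{\log p_{\ttheta}})$, and completes the square; you expand the square and integrate the cross term by parts. Both rest on the same flux $\int_{\partial\Omega}p\,\x{n}^{\top}G_\phi\nabla\log p_{\ttheta}\,\d s$ being killed by \eqref{eq:regularity-conditions-for-bounded-convex-set}. The paper applies integration by parts on $\Omega$ directly, so your exhaustion by $\Omega_t=\x{c}+t(\Omega-\x{c})$ tries to be more rigorous than the paper. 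The homothety is a good choice, since the outward normal at $\x{c}+t(\x{z}-\x{c})$ is exactly $\x{n}(\x{z})$.

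The step that does not hold is your claim that uniform vanishing of the surface integrand follows from the sequential limit condition and compactness of $\partial\Omega$. Compactness upgrades pointwise to uniform convergence only for a fixed function that tends to $0$ at every boundary point. In \eqref{eq:regularity-conditions-for-bounded-convex-set}, however, each term carries the weight $n_k(\x{z})$ of the limit point. So nothing is assumed about $p\,(\nabla\log p_{\ttheta})_\ell (G_\phi)_{k\ell}$ near points where $n_k(\x{z})=0$, or near edges and vertices of a polytope such as $\mathcal{S}^{d}$, where no normal exists. At neighbouring boundary points this possibly unbounded quantity is multiplied by a small but nonzero $n_k$.

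For a concrete failure, take the unit disk with $\x{c}=\x{0}$, $G_\phi=\id$, $\nabla\log p_{\ttheta}\equiv(0,1)$ and $p\propto F_2$, where $F_2(\x{x})=\frac{1-\norm{\x{x}}^2}{(1-\norm{\x{x}}^2)^2+x_2^2}$. The hypothesis holds at every boundary point: $F_2\to0$ wherever $z_2\neq0$, and $n_2=0$ at $(\pm1,0)$. Yet at $\x{z}=(\cos\alpha,\sin\alpha)$ with $\sin\alpha=(1-t^2)/t$, the integrand equals $1/(2t)$ up to the normalizing constant of $p$. This particular $p$ has $\E_{\xx\sim p}[\norm{\nabla\log p(\xx)}^{2}]=\infty$. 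However, placing ever taller and thinner bumps of $p$ that accumulate at $(1,0)$ destroys uniformity with all your integrability assumptions in force.

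Under those assumptions $\nabla\cdot(p\,G_\phi\nabla\log p_{\ttheta})\in L^1(\Omega)$, so the fluxes through $\partial\Omega_t$ do converge, to $\int_\Omega\nabla\cdot(p\,G_\phi\nabla\log p_{\ttheta})\,\d\x{x}$. What your argument lacks is a proof that this limit is $0$, and that needs an extra ingredient. One option is to require $p\,G_\phi\nabla\log p_{\ttheta}\to\x{0}$ at every $\x{z}\in\partial\Omega$ without the normal weighting; then your compactness argument is correct as written. Another is to assume an integrable majorant on $\partial\Omega$ for the pulled-back integrand and apply dominated convergence, using that normals exist $\mathcal{H}^{d-1}$-a.e. on the boundary of a convex set.
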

We address the case of unbounded sets in \autoref{appendix:proof-main-objective-equi-to-gsm-special-case}. An immediate consequence of \autoref{prop:main-objective-equi-to-gsm-special-case} is that the score matching objective in \autoref{eq:main-objective-in-gsm-form} corresponds to the linear operator $\mathcal{L}$ defined as
\begin{align} \label{eq:definition-linear-operator-gsm-framework}
    (\mathcal{L}g)(\x{x}) = G_{\phi}(\x{x})^{\frac{1}{2}}\nabla g(\x{x})
\end{align}
in the framework of \citet{Lyu_score_09}. This holds because $\norm{\x{v}}^{2}_{G} = \x{v}^{\top}G\x{v} = \x{v}^{\top}G^{\frac{1}{2}}G^{\frac{1}{2}}\x{v} = \norm{G^{\frac{1}{2}}\x{v}}^{2}$ for a positive definite matrix $G$. The following proposition shows that the operator $\mathcal{L}$ is complete.
\begin{proposition} \label{prop-linear-operator-is-complete}
    Let $\mathcal{L}$ be the operator defined in \autoref{eq:definition-linear-operator-gsm-framework}, then $\mathcal{L}$ is complete, that is, for densities $p_{1}(\x{x})$ and $p_{2}(\x{x})$ with support $\Omega$, suppose $\frac{(\mathcal{L}p_{1})(\x{x})}{p_{1}(\x{x})} = \frac{(\mathcal{L}p_{2})(\x{x})}{p_{2}(\x{x})}$ almost everywhere, then $p_{1}(\x{x}) = p_{2}(\x{x})$ almost everywhere.
\end{proposition}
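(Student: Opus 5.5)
Since $\mathcal{L}$ acts linearly on the gradient, the identity $\frac{(\mathcal{L}p_i)(\xx)}{p_i(\xx)} = G_{\phi}(\xx)^{\frac{1}{2}}\frac{\nabla p_i(\xx)}{p_i(\xx)} = G_{\phi}(\xx)^{\frac{1}{2}}\nabla \log p_i(\xx)$ holds wherever $p_i > 0$. On the support $\Omega$ both densities are positive, and we assume, as throughout this section, that they are differentiable there. The plan is to cancel the matrix factor, integrate the resulting equality of scores along segments in the convex set $\Omega$, and then fix the constant of integration by normalization.

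The first step removes $G_{\phi}^{\frac{1}{2}}$. By \autoref{eq:definition-of-Gphi}, $G_{\phi}(\xx) = (\det H_\phi(\xx))^{-1/2} H_\phi(\xx)^{-1}$. The hypothesis $H_\phi(\xx) \succ 0$ on $\Omega$ makes this matrix symmetric positive definite, so its symmetric square root is invertible at every $\xx \in \Omega$. Multiplying the assumed equality by $G_{\phi}(\xx)^{-\frac{1}{2}}$ therefore gives $\nabla \log p_1(\xx) = \nabla \log p_2(\xx)$ almost everywhere in $\Omega$. Setting $f = \log p_1 - \log p_2$, this says $\nabla f = 0$ a.e. on $\Omega$.

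The second step shows that $f$ is constant. If the densities are $C^1$ on $\Omega$, as assumed in \autoref{prop:main-objective-equi-to-gsm-special-case}, then $\nabla f$ is continuous. A continuous function that vanishes almost everywhere vanishes everywhere, because its nonzero set would be a nonempty open set of positive measure. So $\nabla f \equiv 0$ on $\Omega$. Since $\Omega$ is convex, it is connected; for any $\xx, \y \in \Omega$ the segment between them lies in $\Omega$, and the fundamental theorem of calculus gives $f(\y) - f(\xx) = \int_0^1 \nabla f(\xx + t(\y - \xx))^{\top}(\y - \xx)\,\d t = 0$. Hence $p_1 = c\,p_2$ on $\Omega$ for some constant $c > 0$.

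The final step uses normalization. Integrating over $\Omega$ gives $1 = c$, so $p_1 = p_2$ on $\Omega$, and hence almost everywhere.

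The only delicate point is passing from "almost everywhere" to "everywhere" for the gradients. If one wants to avoid assuming $C^1$, I would instead work with $f \in W^{1,1}_{\mathrm{loc}}(\Omega)$ and use the standard fact that a Sobolev function with a.e.-vanishing weak gradient on a connected open set is a.e. constant. Either way, the convexity of $\Omega$ (in fact, connectedness suffices) is exactly what makes the integration step valid.
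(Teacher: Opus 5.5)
Your proposal is correct and follows the same route as the paper. Like the paper, you cancel the invertible factor $G_{\phi}(\x{x})^{\frac{1}{2}}$ to get $\nabla \log p_1 = \nabla \log p_2$ a.e., use convexity (hence connectedness) of $\Omega$ to conclude $p_1/p_2$ is constant, and fix the constant by normalization. Your extra care in passing from an a.e.-vanishing gradient to constancy, via the $C^1$ assumption or the $W^{1,1}_{\mathrm{loc}}$ fact, fills in a step the paper's proof leaves implicit. That step does need some such regularity, since a pointwise gradient that vanishes only almost everywhere does not by itself force a function to be constant.
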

\begin{remark}
    It immediately follows from \autoref{prop-linear-operator-is-complete} that if $\mathcal{L}^{\phi}_{GSM}(\x{\theta}) = 0$, then $p_{\x{\theta}} = p$ a.e.
\end{remark}
By choosing appropriate convex functions $\phi$, we retrieve score matching objectives in different settings. 
\begin{corollary}
    For $\phi(\x{x}) = \frac{1}{2}\norm{\x{x}}^{2}$ defined on $\Omega = \mathbb{R}^d$, we get Hyv\"arinen's score matching objective
    \begin{align*}
        \mathcal{L}^{\phi}_{GSM}(\x{\theta}) = \frac{1}{2}\underset{\x{x} \sim p}{\E}\left[\norm{\nabla \log p_{\x{\theta}}(\x{x}) - \nabla \log p(\x{x})}^{2}\right]
    \end{align*}
\end{corollary}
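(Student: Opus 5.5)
This corollary reduces to computing $G_{\phi}$ for the quadratic potential and substituting it into \autoref{prop:main-objective-equi-to-gsm-special-case}; the only care needed is that $\Omega=\R^{d}$ is unbounded, so the boundary argument must be redone at infinity.

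\emph{Computing $G_{\phi}$.} For $\phi(\x{x})=\frac{1}{2}\norm{\x{x}}^{2}$ on $\Omega=\R^{d}$, $\phi$ is $C^{3}$ and strictly convex, with gradient $\nabla\phi(\x{x})=\x{x}$. Its Hessian is $H_{\phi}(\x{x})=\id$, which is positive definite everywhere. Hence $\det H_{\phi}(\x{x})=1$, and by \autoref{eq:definition-of-Gphi}
\begin{align*}
G_{\phi}(\x{x})=(\det H_{\phi}(\x{x}))^{-1/2}H_{\phi}(\x{x})^{-1}=\id .
\end{align*}
Consequently $(\nabla\cdot G_{\phi})(\x{x})=\x{0}$, and the Bregman ball $C^{\phi}_{r}(\x{x})$ is the Euclidean ball of radius $r(\x{x})$, which trivially lies in $\Omega$.

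\emph{Substituting.} With $G_{\phi}=\id$, the weighted norm $\norm{\cdot}_{G_{\phi}(\x{x})}$ in \autoref{eq:main-objective-in-gsm-form} is the Euclidean norm, so $\mathcal{L}^{\phi}_{GSM}$ is exactly Hyv\"arinen's objective. At the level of the scoring rule \autoref{eq:main-objective-scoring-rule}, $S^{\phi}$ becomes $\frac12\norm{\nabla E_{\ttheta}}^{2}-\tr(H_{E_{\ttheta}})$. This is the integrated-by-parts Hyv\"arinen form, and the $\lambda$-term drops out, so the choice of $\lambda$ is irrelevant here.

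\emph{Handling $\Omega=\R^{d}$.} \autoref{prop:main-objective-equi-to-gsm-special-case} is stated for bounded $\Omega$, so it cannot be applied directly. Instead, note that with $D(\x{x})=\id$ the objective $\L^{\phi}$ coincides with the $\L$ of \autoref{theorem:modified-ed-to-gsm}, since $\nabla\cdot(\id\,\nabla E_{\ttheta})=\tr(H_{E_{\ttheta}})$. We then invoke \autoref{prop:main-objective-equi-to-gsm} (equivalently, the unbounded case treated in the appendix). Its hypotheses reduce to the classical Hyv\"arinen decay conditions
\begin{align*}
\lim_{x_{k}\to\pm\infty} p(\x{x})\,\nabla\log p_{\ttheta}(\x{x})_{k}=0 \quad \text{for all } k,
\end{align*}
which we assume.

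This step is the only real obstacle. Everything else is the one-line Hessian computation, after which the displayed identity is immediate up to the usual $\ttheta$-independent constant $\frac12\E\norm{\nabla\log p}^{2}$.
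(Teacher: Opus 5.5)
Your argument is correct, and its core matches the paper's. The paper's entire proof is $H_{\phi}(\x{x}) = \id \Rightarrow G_{\phi}(\x{x}) = \id$, followed by a direct appeal to \autoref{prop:main-objective-equi-to-gsm-special-case}.

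The difference is in the equivalence step. You noticed that \autoref{prop:main-objective-equi-to-gsm-special-case} is stated only for bounded $\Omega$. For $\Omega = \R^{d}$, the paper's citation therefore implicitly relies on the appendix's unbounded-set extension, which uses cutoffs $\rho_{n}$ and extra integrability assumptions; its boundary condition is vacuous there since $\partial\R^{d} = \emptyset$.

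You instead identify $\L^{\phi}$ with the objective $\L$ of \autoref{theorem:modified-ed-to-gsm} at $D = \id$. This is legitimate because $\nabla\cdot G_{\phi} = \x{0}$ makes both the $\lambda$-term and the choice of weighting in \autoref{remark:main-objective-without-lambda-argument} irrelevant. You then invoke \autoref{prop:main-objective-equi-to-gsm}, whose boundary hypothesis with $D=\id$ is exactly Hyv\"arinen's coordinatewise decay condition. The paper's route keeps the corollary inside the convex-domain framework. Yours cites a result whose hypotheses actually fit the unbounded setting, and it recovers the classical assumptions.

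Two minor corrections. First, you also inherit $p \in C^{1}$ and the finiteness of $\E[\norm{\nabla E_{\ttheta}(\x{x})}^{2}]$ and $\E[\tr(H_{E_{\ttheta}}(\x{x}))]$, so the hypotheses do not reduce solely to the decay condition. Second, the appendix's unbounded-case argument is not ``equivalent'' to \autoref{prop:main-objective-equi-to-gsm}: it rests on integrability assumptions rather than pointwise decay at infinity, so it is an alternative sufficient condition.
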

The proof follows directly from \autoref{prop:main-objective-equi-to-gsm-special-case} since $H_{\phi}(\x{x}) = \id \implies G_{\phi}(\x{x}) = \id$.
\begin{remark} \label{remark-hvya-non-neg-sm-and-yu-gsm}
    For $\phi(\x{x}) = -\sum_{i=1}^{d}\log x_{i}$ defined on $\Omega = \R^{d}_{+}$, we have $G_{\phi}(\x{x}) = \left(\prod_{i=1}^{d}x_{i}\right)\text{diag}\left(\left[x_{1}^{2},\dots,x_{d}^{2}\right]^{\top}\right)$ and the objective function turns out to be
    \begin{align*}
        \mathcal{L}^{\phi}_{GSM}(\x{\theta}) &= \frac{1}{2}\underset{\x{x} \sim p}{\E}\Bigg[\prod_{i=1}^{d}x_{i}\Big\|\nabla \log p_{\theta}(\x{x}) \circ \x{x} - \nabla \log p(\x{x}) \circ \x{x}\Big\|^{2}\Bigg]
    \end{align*}
    This objective is similar to the non-negative score matching objective proposed in \citep{Hyvaerinen2007}, but with the extra term $\prod_{i=1}^{d}x_{i}$, which is due to the presence of the determinant in the definition of $G_{\phi}(\x{x})$. Extending further, we can work with a strictly convex function $\phi(\x{x}) = \sum_{i=1}^{d}\phi_{i}(x_{i})$ where $\phi_{i} : \mathbb{R}_{+} \to \R$ is strictly convex. In this case,
    \begin{align*}
        G_{\phi}(\x{x}) &= \frac{1}{\prod_{i=1}^{d}\sqrt{\phi''_{i}(x_{i})}}\text{diag}\left(\left[\frac{1}{\phi''_{1}(x_{1})}, \dots, \frac{1}{\phi''_{d}(x_{d})}\right]^{\top}\right)
    \end{align*}
    and the objective in \autoref{eq:main-objective-in-gsm-form}
    \begin{align*}
        \mathcal{L}^{\phi}_{GSM}(\x{\theta}) &= \frac{1}{2}\underset{\x{x} \sim p}{\E}\Bigg[\frac{1}{\prod_{i=1}^{d}\sqrt{\phi''_{i}(x_{i})}} \Big\|\nabla \log p_{\x{\theta}}(\x{x}) \circ h(\x{x})^{\frac{1}{2}} - \nabla \log p(\x{x}) \circ h(\x{x})^{\frac{1}{2}}\Big\|^{2}\Bigg]
    \end{align*}
    where $h(\x{x}) = \left[\frac{1}{\phi_{i}''(x_{1})},\dots,\frac{1}{\phi_{d}''(x_{d})}\right]^{\top}$. This is similar to what \citet{generalized_score} consider but with the extra factor $\frac{1}{\prod_{i=1}^{d}\sqrt{\phi''_{i}(x_{i})}}$.
    \end{remark}
\begin{remark}
    Existing works, including \citep{Hyvaerinen2007, generalized_score, truncated_sm, qin24a-fitlikeyousample, gsm_xu}, take generalized score matching objectives as the starting point. In contrast, through \autoref{theorem:modified-ed-to-gsm}, \autoref{prop:main-objective-equi-to-gsm}, \autoref{theorem:modified-ed-to-sm-convex}, and \autoref{prop:main-objective-equi-to-gsm-special-case}, we provide a principled derivation of these objectives from a unified formulation. To the best of our knowledge, this is a novel development and constitutes one of the main contributions of our paper.
\end{remark}
\begin{remark}
    The objective \autoref{eq-mpf-objective} considered in the manuscript is motivated by its connection to Minimum Probability Flow (MPF) and serves as the primary starting point of our derivation. As shown in \autoref{appendix:generalization-of-main-theorems}, the same proof strategy extends to a broader class of objectives, yielding corresponding generalizations of the main results. This highlights the flexibility of derivation beyond the specific MPF formulation.
\end{remark}
\section{Generalized Score Matching is a Proper Scoring Rule} \label{sec:proper_scoring_rules}
Given a set of probability distributions $\mathcal{P}$ with each element having support $\mathcal{X}$, a scoring rule~\citep{parry2012proper} is defined as the loss $S(\x{x},Q)$ incurred when a sample $\x{x}\sim P \in \mathcal{P}$ is realised and the model distribution choice was $Q\in\mathcal{P}$. The expectation of $S(\xx, Q)$ denoted by $S(P,Q)$ is given by $S(P,Q)=\E_{\x{x} \sim P}\!\left[S(\x{x},Q)\right]$. A scoring rule is considered \emph{proper} if $S(P,Q)\geq S(P,P)$ for all $P,Q\in\mathcal{P}$, and \emph{strictly proper} if the inequality is strict whenever $Q\neq P$. For a simply connected domain $\mathcal{X}\subset\R^n$ and a twice-differentiable strictly positive model density $q(\x{x})$ on $\mathcal{X}$, ~\citet{extensive_scoring_rule} showed that all previously known multidimensional scoring rules can be generated by
\begin{align*}
    \phi[q](\x{x}) = -\frac{1}{2}q(\x{x})^{-1}\sum_{i,j=1}^n G_{ij}(\x{x})\,q_i(\x{x})\,q_j(\x{x})
\end{align*}
where $\phi[y] :=\phi(x_1,\dots,x_n, y, y_1, \dots, y_n)$ is differentiable in $\x{x}$, and twice differentiable, jointly (strictly) concave and $1$-homogeneous in $(y, y_1, \dots, y_n)$, where $q_i:=\frac{\partial q}{\partial x_i}$, $G(\x{x})=[G_{ij}(\x{x})]$ is a symmetric positive definite matrix. The associated (strictly) proper local scoring rule generated is
\begin{align} \label{eq:parry_eq4}
    S(\x{x},Q)=&\sum_{i,j=1}^n\Bigg(G_{ij}(\x{x})\Bigg(\frac{q_{ij}(\x{x})}{q(\x{x})}-\frac{1}{2}\frac{q_i(\x{x})q_j(\x{x})}{q(\x{x})^2} \Bigg) + \dfrac{\partial G_{ij}(\x{x})}{\partial x_i}\,\frac{q_j(\x{x})}{q(\x{x})} \Bigg)
\end{align}
where $q_{ij}:=\frac{\partial^2 q}{\partial x_i\partial x_j}$. This is a fairly general framework and encompasses several previously known loss functions such as maximum likelihood and score matching. We discuss the relevance of proper scoring rules of the second order in \autoref{appendix:proper-scoring-rules}. Additionally, the following proposition formally establishes that the proposed objective in \autoref{eq:main-objective-scoring-rule} constitutes a proper scoring rule.
\begin{proposition} \label{prop:gsm-proper-scoring-rule}
$S^{\phi}(\x{x};\x{\theta})$ as defined in \autoref{eq:main-objective-scoring-rule} is a proper scoring rule with $G(\xx) = G_{\phi}(\xx)$.
\end{proposition}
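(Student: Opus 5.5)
The plan is to show that $S^{\phi}(\x{x};\x{\theta})$ in \autoref{eq:main-objective-scoring-rule} coincides, term by term, with the Parry--Dawid--Lauritzen rule \autoref{eq:parry_eq4} evaluated at $q = p_{\x{\theta}}$ with $G = G_{\phi}$. Properness (indeed strict properness) then follows directly from the cited result of \citet{extensive_scoring_rule}. Its hypotheses require $G$ to be symmetric positive definite and differentiable. Both hold here: $G_{\phi}(\x{x}) = (\det H_\phi(\x{x}))^{-1/2}H_\phi(\x{x})^{-1}$ is a positive scalar times the inverse of a symmetric positive definite matrix. Since $\phi\in C^3$, $H_\phi\in C^1$, and inversion and determinant are smooth on the positive definite cone, so $G_\phi\in C^1$. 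The model density $q=p_{\x{\theta}}\propto \exp(-E_{\x{\theta}})$ is strictly positive and $C^2$ because $E_{\x{\theta}}\in C^2$. Finally, $\Omega$ is convex, hence simply connected.

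For the computation, write $q = \Z(\x{\theta})^{-1}e^{-E}$ with $E=E_{\x{\theta}}$. Then $q_i/q = -\partial_i E$ and $q_{ij}/q = \partial_i E\,\partial_j E - \partial_{ij}E$, and the partition function cancels in every ratio. Hence
\begin{align*}
\frac{q_{ij}}{q}-\frac{1}{2}\frac{q_iq_j}{q^2} = \frac{1}{2}\partial_iE\,\partial_jE - \partial_{ij}E .
\end{align*}
Summing against $G_{ij}$ gives $\frac12\nabla E^\top G_\phi\nabla E - \tr(G_\phi H_E)$, using the symmetry of both matrices. The last term of \autoref{eq:parry_eq4} is $\sum_{i,j}\partial_i (G_\phi)_{ij}\,(-\partial_j E) = -(\nabla\cdot G_\phi)^\top\nabla E$, by the column-wise divergence convention of the Notation section.

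On the other side, the product rule gives
\begin{align*}
\nabla\cdot(G_\phi\nabla E) = \sum_{i,j}\partial_i\big((G_\phi)_{ij}\partial_jE\big) = (\nabla\cdot G_\phi)^\top\nabla E + \tr(G_\phi H_E).
\end{align*}
Substituting this into \autoref{eq:main-objective-scoring-rule} reproduces exactly the three terms above, so $S^{\phi}(\x{x};\x{\theta}) = S(\x{x},P_{\x{\theta}})$ pointwise.

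The main point needing care is bookkeeping rather than analysis. The index convention in $\partial G_{ij}/\partial x_i$ must match the column-wise divergence; symmetry of $G_\phi$ makes this harmless. One should also note that properness is a statement about expectations $S(P,Q)$ over a class $\mathcal{P}$. I would therefore state that the identification holds for every $Q$ of energy-based form on $\Omega$, inheriting whatever integrability and boundary conditions \citet{extensive_scoring_rule} impose. Those are the same conditions already assumed in \autoref{theorem:modified-ed-to-sm-convex} and \autoref{prop:main-objective-equi-to-gsm-special-case}, namely the finiteness of the three expectations and the boundary decay \autoref{eq:regularity-conditions-for-bounded-convex-set}. Alternatively, properness can be read off directly from \autoref{prop:main-objective-equi-to-gsm-special-case}: $\L^\phi$ differs from the nonnegative quantity $\mathcal{L}^\phi_{GSM}$ by a $\x{\theta}$-independent constant, and $\mathcal{L}^\phi_{GSM}$ vanishes at $p_{\x{\theta}}=p$. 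This gives $S(P,Q)\ge S(P,P)$, and \autoref{prop-linear-operator-is-complete} upgrades the inequality to strictness.
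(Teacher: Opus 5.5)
Your proposal is correct and matches the paper's proof: both identify $S^{\phi}(\x{x};\x{\theta})$ term by term with \autoref{eq:parry_eq4} at $q = p_{\x{\theta}}$ and $G = G_{\phi}$. The only difference is that the paper does the bookkeeping through $\log p_{\x{\theta}}$ and the identity $H_{\log p_{\x{\theta}}} = H_{p_{\x{\theta}}}/p_{\x{\theta}} - \nabla p_{\x{\theta}}\nabla p_{\x{\theta}}^{\top}/p_{\x{\theta}}^{2}$, whereas you compute $q_i/q$ and $q_{ij}/q$ directly from $E_{\x{\theta}}$. Beyond what the paper writes, you check the hypotheses of \citet{extensive_scoring_rule} (symmetry, positive definiteness and $C^1$ regularity of $G_{\phi}$, positivity of $q$, simple connectivity of $\Omega$), and you give an alternative argument via \autoref{prop:main-objective-equi-to-gsm-special-case} and \autoref{prop-linear-operator-is-complete}. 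That alternative argument, as stated, additionally needs $\Omega$ bounded (or the appendix extension), $\E_{\x{x}\sim p}[\norm{\nabla \log p(\x{x})}^{2}_{G_{\phi}(\x{x})}] < \infty$, and the boundary condition to hold for $q = p$ as well.
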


\section{Analysis for Exponential Family}
For a model density $p_{\x{\theta}}$ from the exponential family,
\begin{align}  \label{eq-exp-fam}
    \log p_{\x{\theta}}(\x{x}) = \x{\theta}^{\top}t(\x{x}) - \psi(\x{\theta}) + b(\x{x})
\end{align}
where $\x{\theta} \in \Theta \subset \mathbb{R}^{r}, t:\R^{d} \to \R^{r}$ represents the sufficient statistics, $\psi(\x{\theta})$ is the normalizing constant, and $b(\x{x})$ is the base measure with $t$ and $b$ being almost surely differentiable. Let $\{\x{x}_{i}\}_{i=1}^{N}$ drawn i.i.d. from the density $p$ and the finite sample version of the objective in \autoref{eq:main-objective-in-compact-form-2-gsm} is given by
\begin{align*}
    \hat{\L}(\ttheta) =& \dfrac{1}{N}\sum\limits_{i=1}^{N}\Bigg[\frac{1}{2}\nabla \log p_{\ttheta}(\xx_i)^{\top}D(\xx_i)\nabla \log p_{\ttheta}(\xx_i)+ \nabla \cdot (D(\xx_i)\nabla\log p_{\ttheta}(\xx_i))\Bigg]
\end{align*}
\begin{proposition} \label{theorem:convexity}
    $\hat{\L}(\ttheta)$ can be expressed as quadratic
    \begin{align} \label{eq-convex-gsm-loss-exp}
        \hat{\L}(\ttheta) = \frac{1}{2}\ttheta^{\top}\Gamma_N\ttheta + \x{g}_N^{\top}\ttheta + C
    \end{align}
    where $C\in \R$ is a constant independent of $\ttheta$, $J_{t}(\xx)$ denotes the Jacobian of $t$ evaluated at $\xx$, and
    \begin{align*}
        &\begin{aligned}
            \Gamma_N = \dfrac{1}{N}\sum\limits_{i=1}^{N} J_{t}(\xx_i) D(\xx_i) J_{t}(\xx_i)^{\top}
        \end{aligned} \\
        &\begin{aligned}
            \x{g}_N = \dfrac{1}{N}\sum\limits_{i=1}^{N}\Big[E(\xx_i) + J_{t}(\xx_i)(\nabla \cdot D(\xx_i)) + J_{t}(\xx_i) D(\xx_i)\nabla b(\xx_i)\Big]
        \end{aligned}
    \end{align*}
    where, in turn, $E(\xx) \in \R^r$ is a vector with entries
    \begin{align*}
        E_l(\xx) = \sum\limits^{d}_{j,k=1}D_{jk}(\xx)\dfrac{\partial^2 t_{l}(\xx)}{\partial x_j \partial x_k}, \quad l = 1, 2,\dots,r.
    \end{align*}
    Assuming that $\Gamma_N$ is positive semidefinite almost surely, $\hat{\L}(\ttheta)$ is a convex function of $\ttheta$.
\end{proposition}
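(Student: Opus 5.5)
The plan is to substitute the exponential-family form \eqref{eq-exp-fam} into each summand of $\hat{\L}(\ttheta)$ and expand term by term. Differentiating \eqref{eq-exp-fam} in $\xx$ gives
\begin{align*}
\nabla \log p_{\ttheta}(\xx) = J_t(\xx)^{\top}\ttheta + \nabla b(\xx).
\end{align*}
Here $J_t(\xx)\in\R^{r\times d}$ has rows $\nabla t_l(\xx)^{\top}$. The normalizer $\psi(\ttheta)$ drops out because it does not depend on $\xx$. So the model score is affine in $\ttheta$, and the whole argument reduces to bookkeeping of quadratic, linear and constant parts.

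For the quadratic term we get
\begin{align*}
\frac12 (J_t^{\top}\ttheta+\nabla b)^{\top}D(J_t^{\top}\ttheta+\nabla b) = \frac12\ttheta^{\top}J_tDJ_t^{\top}\ttheta + \ttheta^{\top}J_tD\nabla b + \frac12\nabla b^{\top}D\nabla b.
\end{align*}
This step uses the symmetry of $D$ to merge the two cross terms. The first piece contributes to $\Gamma_N$, the second to $\x{g}_N$, and the last to $C$.

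For the divergence term, apply the product rule $\nabla\cdot(D\x{v}) = (\nabla\cdot D)^{\top}\x{v} + \tr(D\,J_{\x{v}})$, valid for symmetric $D$ with the column-wise divergence convention of the paper. Take $\x{v} = J_t^{\top}\ttheta + \nabla b$. The Jacobian of $\x{v}$ is $\sum_l \theta_l H_{t_l} + H_b$, so $\tr(D J_{\x{v}}) = \sum_l \theta_l \sum_{j,k} D_{jk}\partial_{jk} t_l + \tr(DH_b) = \ttheta^{\top}E(\xx) + \tr(DH_b)$. Also $(\nabla\cdot D)^{\top}J_t^{\top}\ttheta = \ttheta^{\top}J_t(\nabla\cdot D)$. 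These supply the remaining two pieces of $\x{g}_N$. The leftover $(\nabla\cdot D)^{\top}\nabla b + \tr(DH_b)$ goes into $C$.

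Averaging over $i=1,\dots,N$ then yields \eqref{eq-convex-gsm-loss-exp} with exactly the stated $\Gamma_N$, $\x{g}_N$ and a $\ttheta$-free constant $C$. Convexity follows because the Hessian of $\hat{\L}$ in $\ttheta$ is $\Gamma_N$, and a quadratic with positive semidefinite Hessian is convex. Indeed $\Gamma_N$ is automatically PSD, since $\x{u}^{\top}J_tDJ_t^{\top}\x{u} = \norm{J_t^{\top}\x{u}}^2_{D}\ge 0$ because $D\succ 0$; the stated assumption is therefore harmless.

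The only delicate step is the divergence expansion. I must check that the column-wise divergence convention for $\nabla\cdot D$ matches the index placement in $J_t(\nabla\cdot D)$, which it does by symmetry of $D$. I also need $t$ and $b$ twice differentiable for the Hessian terms to make sense; I will assume this implicitly, as the statement's use of $\partial^2 t_l$ requires.
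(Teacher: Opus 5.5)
Your proposal is correct and takes essentially the same route as the paper. Both substitute the score $J_t(\x{x})^{\top}\x{\theta} + \nabla b(\x{x})$, which is affine in $\x{\theta}$, into each summand and collect the quadratic, linear and constant parts; your matrix-form product rule $\nabla\cdot(D\x{v}) = (\nabla\cdot D)^{\top}\x{v} + \tr(D J_{\x{v}})$ replaces the paper's index-by-index expansion. Two side remarks are also right: $\Gamma_N$ is automatically positive semidefinite since $D \succ 0$, so the stated assumption is redundant, and that product rule holds under the column-wise convention without symmetry of $D$ (symmetry is needed only to merge the cross terms in the quadratic part).
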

We now show that estimator that minimizes $\hat{\L}(\x{\theta})$ is consistent.
\begin{theorem} \label{theorem:consistency}
Let $\ttheta_0 = \arg\min_{\ttheta} \L(\ttheta)$ be the true parameter that minimizes the objective in \autoref{eq:main-objective-in-compact-form-2-gsm}. Define
\begin{align*}
    \Gamma_0 &= \E[\Gamma_{1}], \x{g}_0 =\E[\x{g}_{1}], \Sigma_0 = \E[(\Gamma_{1}\x{\theta}_{0} + \x{g}_{1})(\Gamma_{1}\x{\theta}_{0} + \x{g}_{1})^{\top}]
\end{align*}
Further, assume that $\Gamma_N$ is a.s. positive definite, $\Gamma_0$, $\Gamma_0^{-1}$, $\x{g}_0$ and $\Sigma_0$ exist and are entry-wise finite. Then, the unconstrained minimizer of $\hat{\L}(\ttheta)$ is a.s. unique with closed-form solution $\hat{\ttheta}_N = -\Gamma_{N}^{-1}\x{g}_N$. Moreover, the estimator is consistent and asymptotically normal, i.e.,
\begin{align*}
    \hat{\ttheta}_N \xrightarrow{a.s.} \ttheta_0 \text{ and }\sqrt{N}(\hat{\ttheta}_N - \ttheta_0) \xrightarrow{d} \N(\x{0}, \Gamma_0^{-1}\Sigma_0\Gamma_0^{-1}) \ \text{as} \ N \to \infty
\end{align*}
\end{theorem}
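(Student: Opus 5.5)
By \autoref{theorem:convexity}, $\hat{\L}(\ttheta)$ is the quadratic $\frac{1}{2}\ttheta^{\top}\Gamma_N\ttheta + \x{g}_N^{\top}\ttheta + C$. Since $\Gamma_N$ is a.s. positive definite, this quadratic is a.s. strictly convex. Setting the gradient $\Gamma_N\ttheta + \x{g}_N$ to zero therefore gives the unique minimizer $\hat{\ttheta}_N = -\Gamma_N^{-1}\x{g}_N$ a.s.

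For the population objective, I will apply the same computation to the expectation. Write $\Gamma_1,\x{g}_1$ for the single-sample quantities at $\xx\sim p$. Because $\Gamma_0$, $\x{g}_0$ are finite, $\L(\ttheta) = \frac{1}{2}\ttheta^{\top}\Gamma_0\ttheta + \x{g}_0^{\top}\ttheta + \text{const}$. Since $\Gamma_0$ is invertible and positive semidefinite (an average of PSD matrices), it is positive definite. Hence $\ttheta_0 = -\Gamma_0^{-1}\x{g}_0$, which is the identity $\Gamma_0\ttheta_0 + \x{g}_0 = \x{0}$.

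\textbf{Consistency.} The i.i.d. terms defining $\Gamma_N$ and $\x{g}_N$ have entry-wise finite expectations. The strong law of large numbers therefore gives $\Gamma_N \to \Gamma_0$ and $\x{g}_N \to \x{g}_0$ a.s. Matrix inversion is continuous at the invertible matrix $\Gamma_0$, so $\Gamma_N^{-1}\to\Gamma_0^{-1}$ a.s. by the continuous mapping theorem. Consequently $\hat{\ttheta}_N \to -\Gamma_0^{-1}\x{g}_0 = \ttheta_0$ a.s.

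\textbf{Asymptotic normality.} This is the main step, but it is an exact algebraic rearrangement rather than a Taylor expansion. Write
\begin{align*}
\sqrt{N}(\hat{\ttheta}_N - \ttheta_0) = -\Gamma_N^{-1}\sqrt{N}\left(\Gamma_N\ttheta_0 + \x{g}_N\right).
\end{align*}
Here $\Gamma_N\ttheta_0 + \x{g}_N = \frac{1}{N}\sum_{i=1}^N \x{v}_i$ with $\x{v}_i = \Gamma^{(i)}\ttheta_0 + \x{g}^{(i)}$, where $\Gamma^{(i)},\x{g}^{(i)}$ are the summands of $\Gamma_N,\x{g}_N$ evaluated at $\xx_i$. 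The vectors $\x{v}_i$ are i.i.d. with mean $\Gamma_0\ttheta_0+\x{g}_0=\x{0}$ and covariance $\Sigma_0$, which is finite by assumption. The multivariate CLT then gives $\sqrt{N}\,\frac{1}{N}\sum_i \x{v}_i \xrightarrow{d} \N(\x{0},\Sigma_0)$. Since $\Gamma_N^{-1}\to\Gamma_0^{-1}$ a.s., and hence in probability, Slutsky's theorem yields
\begin{align*}
\sqrt{N}(\hat{\ttheta}_N-\ttheta_0)\xrightarrow{d}\N\left(\x{0},\Gamma_0^{-1}\Sigma_0\Gamma_0^{-1}\right),
\end{align*}
where the covariance uses the symmetry of $\Gamma_0$.

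The only delicate point is the mean-zero identity $\E[\x{v}_1]=\x{0}$. It rests on identifying $\ttheta_0$ as the stationary point of the population quadratic, which requires interchanging expectation with the quadratic expansion. That interchange is justified by the finiteness assumptions on $\Gamma_0$ and $\x{g}_0$, together with the finiteness conditions of \autoref{theorem:modified-ed-to-gsm}.
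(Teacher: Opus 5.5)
Your proposal is correct and follows the paper's proof essentially step for step. Uniqueness comes from the a.s.\ positive definiteness of $\Gamma_N$. Consistency comes from the strong law of large numbers, continuity of matrix inversion at $\Gamma_0$, and the population first-order condition $\Gamma_0\theta_0+\mathbf{g}_0=\mathbf{0}$. Asymptotic normality comes from the exact identity $\sqrt{N}(\hat{\theta}_N-\theta_0)=-\Gamma_N^{-1}\sqrt{N}(\Gamma_N\theta_0+\mathbf{g}_N)$, which is the paper's centered sum $\frac{1}{\sqrt{N}}\sum_i \mathbf{Z}_i$ in different notation, followed by the CLT and Slutsky's theorem. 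Your observation that $\Gamma_0$ is positive definite, being an invertible average of the PSD matrices $J_t D J_t^{\top}$, makes explicit why $\theta_0$ is the unique stationary point of the population quadratic; the paper leaves this step implicit.
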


\section{Experiments} \label{sec:experiments}
In this section, we evaluate the proposed generalized score matching estimators against existing methods for parameter estimation on constrained domains. Specifically, we focus on distributions supported on positive orthant $\R^{d}_{+}$, the $(d-1)$-simplex, and the standard simplex polytope $\mathcal{S}^{d} = \{\x{x} \in \R^{d} : x_{i} > 0, \sum_{i=1}^{d}x_{i} < 1\}$, where MLE is intractable. We also consider application to generative modeling which pertains to training an implicit VAE~\citep{song2019sliced} on MNIST~\citep{lecun2010mnist} and CelebA~\citep{liu2015faceattributes} using the proposed GSM loss, demonstrating that the framework extends beyond parameter estimation.

\subsection{Truncated Gaussian Model} \label{subsec:truncated-gaussian-polytope-exp}
We consider a truncated Gaussian density of the form
\begin{align*}
    p_{\x{\mu}, K}(\x{x}) \propto \exp\left(-\frac{1}{2}\norm{\x{x} - \x{\mu}}^{2}_{K}\right) \mathbbm{1}_{\Omega}(\x{x}),
\end{align*}
where $\mathbbm{1}_{\Omega}$ denotes the indicator function restricting the support to $\Omega$, $\x{\mu} \in \R^{d}$ is the location parameter, and $K \in \R^{d \times d}$ is the symmetric positive definite precision matrix. We examine two choices of constrained support: the simplex polytope $\Omega = \mathcal{S}^{d}$ and the positive orthant $\Omega = \R^{d}_{+}$. For each domain, experiments are evaluated across multiple sample sizes $N \in \{200, 500, 800\}$, with performance aggregated over $50$ independent trials. For $\mathcal{S}^{10}$, we compare our proposed estimators against the baselines of Truncated Score Matching \cite{truncated_sm} and \citet{generalized_score}. We denote our choices of $\phi$ as $\phi_{1}(\x{x}) = \frac{9}{4}(\sum_{i} x_{i}^{4/3} + (1 - \sum_{i} x_{i})^{4/3}), \phi_{2}(\x{x}) = \sum_{i} x_{i} \log x_{i} + (1 - \sum_{i} x_{i})\log(1 - \sum_{i} x_{i})$, and $\phi_{3}(\x{x}) = -\sum_{i} \log x_{i} - \log(1 - \sum_{i} x_{i})$, alongside baseline choice $h_{1}(\x{x}) = \x{x}$. \autoref{fig:trunc-gm-polytope-10d-results} illustrates Mean Squared Error (MSE) for $\x{\mu}$ and $K$ across sample sizes, and \autoref{tab:trunc-gm-polytope-10d-results} reports quantitative MSE at $N = 800$. We observe that $\phi_{1}$ achieves the lowest median MSE across all sample sizes $N$ for both parameters, yielding a median MSE of $0.0072$ for $\x{\mu}$ at $N=800$, compared to $0.0227$ for $\phi_2$, $0.0828$ for $\phi_3$, $0.0829$ for $h_{1}$, and $0.1050$ for Truncated SM. For precision matrix estimation, the MSE for $h_{1}$ remains around $10^5$--$10^6$ across all $N$, roughly an order of magnitude higher than all other evaluated estimators. Furthermore, while $\phi_3$, Truncated SM, and $h_{1}$ exhibit extreme estimation error outliers extending up to $10^2$--$10^4$ the error spread for $\phi_1$ contracts consistently as sample size increases. Experimental results for the positive orthant ($\Omega = \R^{d}_{+}$) are deferred to Section~\ref{appendix:subsec-truncated-gaussian-pos-orthant-exp}.

\begin{figure}[ht]
    \centering
    \includegraphics[width=0.95\linewidth]{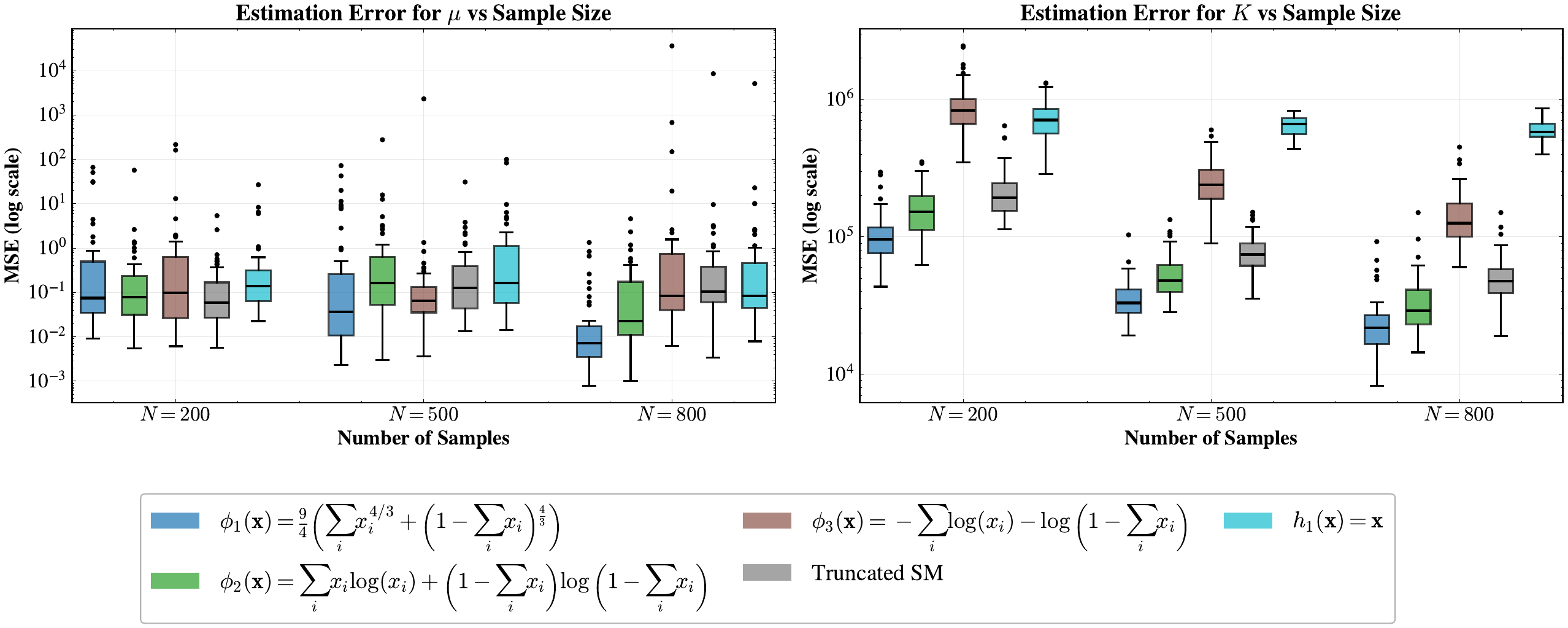}
    \caption{Comparison of parameter estimation error (MSE) for $\x{\mu}$ and $K$ on the simplex polytope $\mathcal{S}^{10}$ across sample sizes $N \in \{200, 500, 800\}$, evaluated over 50 independent trials. Baselines include Truncated Score Matching (Truncated SM) from \citet{truncated_sm} and $h(\x{x}) = \x{x}$ from \citet{generalized_score}.}
    \label{fig:trunc-gm-polytope-10d-results}
\end{figure}

\subsection{Discussion and Practical Considerations} \label{subsec:discussion-and-practical-considerations}
The boundary conditions in \autoref{prop:main-objective-equi-to-gsm-special-case} (\autoref{eq:regularity-conditions-for-bounded-convex-set}) naturally motivate choosing a generator that vanishes at the boundary, i.e., $G_{\phi}(\x{x}) \to \mathbf{0}$ as $\x{x} \to \partial \Omega$. While this requirement mirrors the boundary attenuation ideas introduced by \citet{Hyvaerinen2007} and \citet{generalized_score} for non-negative data, our framework induces these decay dynamics intrinsically from the Hessian of a chosen convex $\phi$ via the generator $G_{\phi}(\x{x})$. A general convex polytope $\Omega = \{\x{x} \in \mathbb{R}^{d} : \x{a}_{k}^{\top}\x{x} < b_{k}, \, 1 \le k \le m\}$ encapsulates our primary experiments (Section \ref{subsec:truncated-gaussian-polytope-exp},\ref{appendix:subsec-truncated-gaussian-pos-orthant-exp},\ref{appendix:subsec-dirichlet-simplex-exp}), where the distance to each facet is given by the affine slack $s_k(\x{x}) = b_{k} - \x{a}_{k}^{\top}\x{x}$. In our empirical evaluations, we instantiated this geometry through three barrier choices: the power barrier $\phi_{1}(\x{x}) = \sum_{k=1}^{m} \frac{9}{4} s_{k}(\x{x})^{4/3}$, entropic barrier $\phi_{2}(\x{x}) = \sum_{k=1}^{m} s_{k}(\x{x}) \log s_{k}(\x{x})$, and logarithmic barrier $\phi_{3}(\x{x}) = -\sum_{k=1}^{m} \log s_{k}(\x{x})$. The exponent $4/3$ in $\phi_1$ is chosen because it recovers the MLE for a 1D exponential (cf. Section \ref{appendix:subsec-exponential-dist}). Notice that all three choices have attenuation behavior near the boundary. On bounded domains such as $\mathcal{S}^{d}$ and $\Delta^{d-1}$, estimators from \citet{generalized_score} yield degraded performance because their underlying generator does not attenuate near boundary. More importantly, our results show that different attenuation rates lead to different empirical performances. While $\phi_1$, $\phi_2$, and $\phi_3$ all attenuate at the boundary, their specific decay rates differ, which suggests that the rate of boundary attenuation is a factor in finite sample estimator performance. While these empirical findings help rule out ineffective choices of $\phi$, a principled theoretical procedure for selecting the optimal $\phi$ remains an open problem.

\section{Conclusions and Outlook} \label{sec:conclusions}
Starting from the MPF objective we showed that, under small perturbations and an appropriately defined neighborhood, a limiting analysis gives rise to the generalized score matching objective for convex subsets of $\R^{d}$. Classical score matching and variants, such as non-negative score matching, arise as special cases within this framework. In particular, extending the analysis to unbounded convex subsets requires a different proof strategy from the bounded setting. Overall, the proposed formalism provides a unified treatment of several existing score matching formulations. 

We proved that the generalized score matching objective defines a \emph{proper local scoring rule} of second order. For densities from the exponential family, we proved that the generalized score matching objective is convex in the canonical parameters, and that the corresponding empirical objective yields a consistent estimator under standard regularity conditions.

A limitation of the proposed framework is that computing the generator $G_{\phi}$ is computationally expensive in very high dimensions, which places a practical constraint on the choice of $\phi$ (cf. \autoref{remark-hvya-non-neg-sm-and-yu-gsm}). An important direction for future work is to develop a systematic approach for choosing $\phi$. Characterizing the MSE-optimal choice of $\phi$, studying the existence of $\phi$ that recovers the maximum likelihood estimate, establishing non-asymptotic guarantees for the corresponding estimators, etc. are all interesting directions for future work.


\newpage

\bibliography{refs}

@article{local-proper-scoring-rule-ehm,
author = {Werner Ehm and Tilmann Gneiting},
title = {{Local proper scoring rules of order two}},
volume = {40},
journal = {The Annals of Statistics},
number = {1},
publisher = {Institute of Mathematical Statistics},
pages = {609 -- 637},
year = {2012},
doi = {10.1214/12-AOS973},
URL = {https://doi.org/10.1214/12-AOS973}
}

@article{forecaster-lerch,
 ISSN = {08834237, 21688745},
 URL = {http://www.jstor.org/stable/26408123},
 author = {Sebastian Lerch and Thordis L. Thorarinsdottir and Francesco Ravazzolo and Tilmann Gneiting},
 journal = {Statistical Science},
 number = {1},
 pages = {106--127},
 publisher = {Institute of Mathematical Statistics},
 title = {Forecaster's Dilemma: Extreme Events and Forecast Evaluation},
 urldate = {2026-05-06},
 volume = {32},
 year = {2017}
}

@book{bishop2007,
  asin = {0387310738},
  author = {Bishop, Christopher M.},
  description = {Amazon.com: Pattern Recognition and Machine Learning (Information Science and Statistics): Christopher M. Bishop: Books},
  dewey = {006.4},
  ean = {9780387310732},
  edition = 1,
  isbn = {0387310738},
  publisher = {Springer},
  title = {Pattern Recognition and Machine Learning (Information Science and Statistics)},
  year = 2007
}

@article{Hyvarinen05a,
  title      = {Estimation of Non-Normalized Statistical Models by Score Matching},
  author     = {A. Hyv{{\"a}}rinen},
  year       = 2005,
  journal    = {Journal of Machine Learning Research},
  volume     = 6,
  number     = 24,
  url        = {http://jmlr.org/papers/v6/hyvarinen05a.html}
}

@article{Hyvaerinen2007,
  title    = {Some extensions of score matching},
  journal  = {Computational Statistics \& Data Analysis},
  volume   = {51},
  number   = {5},
  pages    = {2499-2512},
  year     = {2007},
  issn     = {0167-9473},
  doi      = {https://doi.org/10.1016/j.csda.2006.09.003},
  url      = {https://www.sciencedirect.com/science/article/pii/S0167947306003264},
  author   = {Aapo Hyvärinen}
}

@article{truncated_sm,
author = {Liu, Song and Kanamori, Takafumi and Williams, Daniel J.},
title = {Estimating density models with truncation boundaries using score matching},
year = {2022},
issue_date = {January 2022},
publisher = {JMLR.org},
volume = {23},
number = {1},
issn = {1532-4435},
journal = {J. Mach. Learn. Res.},
month = jan,
articleno = {186},
numpages = {38}
}

@ARTICLE{MRF-2,
  author={Geman, Stuart and Geman, Donald},
  journal={IEEE Transactions on Pattern Analysis and Machine Intelligence}, 
  title={Stochastic Relaxation, Gibbs Distributions, and the Bayesian Restoration of Images}, 
  year={1984},
  volume={PAMI-6},
  number={6},
  pages={721-741},
  doi={10.1109/TPAMI.1984.4767596}}

@inbook{MRF-3,
author = {Yedidia, Jonathan S. and Freeman, William T. and Weiss, Yair},
title = {Understanding belief propagation and its generalizations},
year = {2003},
isbn = {1558608117},
publisher = {Morgan Kaufmann Publishers Inc.},
address = {San Francisco, CA, USA},
booktitle = {Exploring Artificial Intelligence in the New Millennium},
pages = {239–269},
numpages = {31}
}

@ARTICLE{MRF-4,
  author={Boykov, Y. and Veksler, O. and Zabih, R.},
  journal={IEEE Transactions on Pattern Analysis and Machine Intelligence}, 
  title={Fast approximate energy minimization via graph cuts}, 
  year={2001},
  volume={23},
  number={11},
  pages={1222-1239},
  doi={10.1109/34.969114}}

@article{MRF,
author = {Besag, Julian},
title = {Spatial Interaction and the Statistical Analysis of Lattice Systems},
journal = {Journal of the Royal Statistical Society: Series B (Methodological)},
volume = {36},
number = {2},
pages = {192-225},
doi = {https://doi.org/10.1111/j.2517-6161.1974.tb00999.x},
url = {https://rss.onlinelibrary.wiley.com/doi/abs/10.1111/j.2517-6161.1974.tb00999.x},
eprint = {https://rss.onlinelibrary.wiley.com/doi/pdf/10.1111/j.2517-6161.1974.tb00999.x},
year = {1974}
}

@book{billing,
  address = {New York},
  author = {Billingsley, Patrick},
  description = {q-paper},
  edition = {Second},
  isbn = {0-471-19745-9},
  mrclass = {60B10 (28A33 60F17)},
  mrnumber = {MR1700749 (2000e:60008)},
  note = {A Wiley-Interscience Publication},
  pages = {x+277},
  publisher = {John Wiley \& Sons Inc.},
  series = {Wiley Series in Probability and Statistics: Probability and
              Statistics},
  title = {Convergence of probability measures},
  year = 1999
}

@book{hinton2012,
author="Hinton, Geoffrey E.",
title="A Practical Guide to Training Restricted Boltzmann Machines",
bookTitle="Neural Networks: Tricks of the Trade: Second Edition",
year="2012",
publisher="Springer Berlin Heidelberg",
address="Berlin, Heidelberg",
pages="599--619",
isbn="978-3-642-35289-8",
doi="10.1007/978-3-642-35289-8_32",
url="https://doi.org/10.1007/978-3-642-35289-8_32"
}

@inproceedings{RBMs,
author = {Tieleman, Tijmen},
title = {Training restricted Boltzmann machines using approximations to the likelihood gradient},
year = {2008},
isbn = {9781605582054},
publisher = {Association for Computing Machinery},
address = {New York, NY, USA},
url = {https://doi.org/10.1145/1390156.1390290},
doi = {10.1145/1390156.1390290},
booktitle = {Proceedings of the 25th International Conference on Machine Learning},
pages = {1064–1071},
numpages = {8},
location = {Helsinki, Finland},
series = {ICML '08}
}

@ARTICLE{DBNs,
  author={Hinton, Geoffrey E. and Osindero, Simon and Teh, Yee-Whye},
  journal={Neural Computation}, 
  title={A Fast Learning Algorithm for Deep Belief Nets}, 
  year={2006},
  volume={18},
  number={7},
  pages={1527-1554},
  doi={10.1162/neco.2006.18.7.1527}}

@inproceedings{10.5555/3104482.3104568,
author = {Welling, Max and Teh, Yee Whye},
title = {Bayesian learning via stochastic gradient langevin dynamics},
year = {2011},
isbn = {9781450306195},
publisher = {Omnipress},
address = {Madison, WI, USA},
booktitle = {Proceedings of the 28th International Conference on International Conference on Machine Learning},
pages = {681–688},
numpages = {8},
location = {Bellevue, Washington, USA},
series = {ICML'11}
}

@inproceedings{
Grathwohl2020Your,
title={Your classifier is secretly an energy based model and you should treat it like one},
author={Will Grathwohl and Kuan-Chieh Wang and Joern-Henrik Jacobsen and David Duvenaud and Mohammad Norouzi and Kevin Swersky},
booktitle={International Conference on Learning Representations},
year={2020},
url={https://openreview.net/forum?id=Hkxzx0NtDB}
}

@inproceedings{NEURIPS2019_378a063b,
 author = {Du, Yilun and Mordatch, Igor},
 booktitle = {Advances in Neural Information Processing Systems},
 editor = {H. Wallach and H. Larochelle and A. Beygelzimer and F. d\textquotesingle Alch\'{e}-Buc and E. Fox and R. Garnett},
 pages = {},
 publisher = {Curran Associates, Inc.},
 title = {Implicit Generation and Modeling with Energy Based Models},
 url = {https://proceedings.neurips.cc/paper_files/paper/2019/file/378a063b8fdb1db941e34f4bde584c7d-Paper.pdf},
 volume = {32},
 year = {2019}
}

@InProceedings{pmlr-v9-gutmann10a,
  title = 	 {Noise-contrastive estimation: A new estimation principle for unnormalized statistical models},
  author = 	 {Gutmann, Michael and Hyvärinen, Aapo},
  booktitle = 	 {Proceedings of the Thirteenth International Conference on Artificial Intelligence and Statistics},
  pages = 	 {297--304},
  year = 	 {2010},
  editor = 	 {Teh, Yee Whye and Titterington, Mike},
  volume = 	 {9},
  series = 	 {Proceedings of Machine Learning Research},
  address = 	 {Chia Laguna Resort, Sardinia, Italy},
  month = 	 {13--15 May},
  publisher =    {PMLR},
  url = 	 {https://proceedings.mlr.press/v9/gutmann10a.html}
}

@ARTICLE{cd,
  author={Hinton, Geoffrey E.},
  journal={Neural Computation}, 
  title={Training Products of Experts by Minimizing Contrastive Divergence}, 
  year={2002},
  volume={14},
  number={8},
  pages={1771-1800},
  doi={10.1162/089976602760128018}}

@article{
doi:10.1073/pnas.79.8.2554,
author = {J J Hopfield },
title = {Neural networks and physical systems with emergent collective computational abilities.},
journal = {Proceedings of the National Academy of Sciences},
volume = {79},
number = {8},
pages = {2554-2558},
year = {1982},
doi = {10.1073/pnas.79.8.2554},
URL = {https://www.pnas.org/doi/abs/10.1073/pnas.79.8.2554},
eprint = {https://www.pnas.org/doi/pdf/10.1073/pnas.79.8.2554}}

@article{PhysRevLett.35.1792,
  title = {Solvable Model of a Spin-Glass},
  author = {Sherrington, David and Kirkpatrick, Scott},
  journal = {Phys. Rev. Lett.},
  volume = {35},
  issue = {26},
  pages = {1792--1796},
  numpages = {0},
  year = {1975},
  month = {Dec},
  publisher = {American Physical Society},
  doi = {10.1103/PhysRevLett.35.1792},
  url = {https://link.aps.org/doi/10.1103/PhysRevLett.35.1792}
}

@article{Nijkamp_Hill_Han_Zhu_Wu_2020, 
title={On the Anatomy of MCMC-Based Maximum Likelihood Learning of Energy-Based Models}, volume={34}, url={https://ojs.aaai.org/index.php/AAAI/article/view/5973}, DOI={10.1609/aaai.v34i04.5973}, abstractNote={&lt;p&gt;This study investigates the effects of Markov chain Monte Carlo (MCMC) sampling in unsupervised Maximum Likelihood (ML) learning. Our attention is restricted to the family of unnormalized probability densities for which the negative log density (or energy function) is a ConvNet. We find that many of the techniques used to stabilize training in previous studies are not necessary. ML learning with a ConvNet potential requires only a few hyper-parameters and no regularization. Using this minimal framework, we identify a variety of ML learning outcomes that depend solely on the implementation of MCMC sampling.&lt;/p&gt;&lt;p&gt;On one hand, we show that it is easy to train an energy-based model which can sample realistic images with short-run Langevin. ML can be effective and stable even when MCMC samples have much higher energy than true steady-state samples throughout training. Based on this insight, we introduce an ML method with purely noise-initialized MCMC, high-quality short-run synthesis, and the same budget as ML with informative MCMC initialization such as CD or PCD. Unlike previous models, our energy model can obtain realistic high-diversity samples from a noise signal after training.&lt;/p&gt;&lt;p&gt;On the other hand, ConvNet potentials learned with non-convergent MCMC do not have a valid steady-state and cannot be considered approximate unnormalized densities of the training data because long-run MCMC samples differ greatly from observed images. We show that it is much harder to train a ConvNet potential to learn a steady-state over realistic images. To our knowledge, long-run MCMC samples of all previous models lose the realism of short-run samples. With correct tuning of Langevin noise, we train the first ConvNet potentials for which long-run and steady-state MCMC samples are realistic images.&lt;/p&gt;}, number={04}, journal={Proceedings of the AAAI Conference on Artificial Intelligence}, author={Nijkamp, Erik and Hill, Mitch and Han, Tian and Zhu, Song-Chun and Wu, Ying Nian}, year={2020}, month={Apr.}, pages={5272-5280} }

@inproceedings{Lyu_score_09,
  title     = {Interpretation and Generalization of Score Matching},
  author    = {Lyu, S.},
  year      = 2009,
  booktitle = {Proceedings of the Twenty-Fifth Conference on Uncertainty in Artificial Intelligence},
  location  = {Montreal, Quebec, Canada},
  isbn      = 9780974903958
}

@article{extensive_scoring_rule,
	author = {Matthew Parry},
	doi = {10.1214/16-EJS1132},
	journal = {Electronic Journal of Statistics},
	number = {1},
	pages = {1098 -- 1108},
	publisher = {Institute of Mathematical Statistics and Bernoulli Society},
	title = {{Extensive scoring rules}},
	url = {https://doi.org/10.1214/16-EJS1132},
	volume = {10},
	year = {2016}
}

@article{parry2012proper,
author = {Matthew Parry and A. Philip Dawid and Steffen Lauritzen},
title = {{Proper local scoring rules}},
volume = {40},
journal = {The Annals of Statistics},
number = {1},
publisher = {Institute of Mathematical Statistics},
pages = {561 -- 592},
year = {2012},
doi = {10.1214/12-AOS971},
URL = {https://doi.org/10.1214/12-AOS971}
}

@InProceedings{qin24a-fitlikeyousample,
  title = 	 {Fit Like You Sample: Sample-Efficient Generalized Score Matching from Fast Mixing Diffusions},
  author =       {Qin, Yilong and Risteski, Andrej},
  booktitle = 	 {Proceedings of Thirty Seventh Conference on Learning Theory},
  pages = 	 {4413--4457},
  year = 	 {2024},
  editor = 	 {Agrawal, Shipra and Roth, Aaron},
  volume = 	 {247},
  series = 	 {Proceedings of Machine Learning Research},
  month = 	 {30 Jun--03 Jul},
  publisher =    {PMLR},
  url = 	 {https://proceedings.mlr.press/v247/qin24a.html}
}

@inproceedings{fdsm,
author = {Pang, Tianyu and Xu, Kun and Li, Chongxuan and Song, Yang and Ermon, Stefano and Zhu, Jun},
title = {Efficient learning of generative models via finite-difference score matching},
year = {2020},
isbn = {9781713829546},
publisher = {Curran Associates Inc.},
address = {Red Hook, NY, USA},
booktitle = {Proceedings of the 34th International Conference on Neural Information Processing Systems},
articleno = {1609},
numpages = {14},
location = {Vancouver, BC, Canada},
series = {NIPS '20}
}

@inproceedings{icml_mpf,
author = {Sohl-Dickstein, Jascha and Battaglino, Peter and DeWeese, Michael R.},
title = {Minimum probability flow learning},
year = {2011},
isbn = {9781450306195},
publisher = {Omnipress},
address = {Madison, WI, USA},
booktitle = {Proceedings of the 28th International Conference on International Conference on Machine Learning},
pages = {905–912},
numpages = {8},
location = {Bellevue, Washington, USA},
series = {ICML'11}
}

@article{mpf,
  title = {New Method for Parameter Estimation in Probabilistic Models: Minimum Probability Flow},
  author = {Sohl-Dickstein, Jascha and Battaglino, Peter B. and DeWeese, Michael R.},
  journal = {Phys. Rev. Lett.},
  volume = {107},
  issue = {22},
  pages = {220601},
  numpages = {4},
  year = {2011},
  month = {Nov},
  publisher = {American Physical Society},
  doi = {10.1103/PhysRevLett.107.220601},
  url = {https://link.aps.org/doi/10.1103/PhysRevLett.107.220601}
}

@inproceedings{song2019sliced,
	title        = {Sliced Score Matching: {A} Scalable Approach to Density and Score Estimation},
	author       = {Y. Song and S. Garg and J. Shi and S. Ermon},
	year         = 2019,
	booktitle    = {Proceedings of the Thirty-Fifth Conference on Uncertainty in Artificial
                  Intelligence, {UAI}},
	url          = {http://auai.org/uai2019/proceedings/papers/204.pdf}
}

@article{generalized_score,
  author  = {Shiqing Yu and Mathias Drton and Ali Shojaie},
  title   = {Generalized Score Matching for Non-Negative Data},
  journal = {Journal of Machine Learning Research},
  year    = {2019},
  volume  = {20},
  number  = {76},
  pages   = {1--70},
  url     = {http://jmlr.org/papers/v20/18-278.html}
}

@article{gsm_general_domain,
    author = {Yu, Shiqing and Drton, Mathias and Shojaie, Ali},
    title = {Generalized score matching for general domains},
    journal = {Information and Inference: A Journal of the IMA},
    volume = {11},
    number = {2},
    pages = {739-780},
    year = {2021},
    month = {01},
    issn = {2049-8772},
    doi = {10.1093/imaiai/iaaa041},
    url = {https://doi.org/10.1093/imaiai/iaaa041},
    eprint = {https://academic.oup.com/imaiai/article-pdf/11/2/739/44020696/iaaa041.pdf},
}

@inproceedings{energy_discrepancy,
 author = {Schr\"{o}der, Tobias and Ou, Zijing and Lim, Jen and Li, Yingzhen and Vollmer, Sebastian and Duncan, Andrew},
 booktitle = {Advances in Neural Information Processing Systems},
 editor = {A. Oh and T. Naumann and A. Globerson and K. Saenko and M. Hardt and S. Levine},
 pages = {45300--45338},
 publisher = {Curran Associates, Inc.},
 title = {Energy Discrepancies: A Score-Independent Loss for Energy-Based Models},
 url = {},
 volume = {36},
 year = {2023}
}

@inproceedings{
koehler2023statistical,
title={Statistical Efficiency of Score Matching: The View from Isoperimetry},
author={Frederic Koehler and Alexander Heckett and Andrej Risteski},
booktitle={The Eleventh International Conference on Learning Representations },
year={2023},
url={https://openreview.net/forum?id=TD7AnQjNzR6}
}

@inproceedings{
unified-nce,
title={A Unified View on Learning Unnormalized Distributions via Noise-Contrastive Estimation},
author={Jongha Jon Ryu and Abhin Shah and Gregory W. Wornell},
booktitle={Forty-second International Conference on Machine Learning},
year={2025},
url={https://openreview.net/forum?id=Wwj6jjxZet}
}

@article{CondNCE,
  title={Conditional Noise-Contrastive Estimation of Unnormalised Models},
  author={Ciwan Ceylan and Michael U Gutmann},
  journal={ArXiv},
  year={2018},
  volume={abs/1806.03664},
  url={https://api.semanticscholar.org/CorpusID:47019615}
}

@article{gsm_xu,
title = {Generalized score matching},
journal = {Journal of Multivariate Analysis},
volume = {210},
pages = {105473},
year = {2025},
issn = {0047-259X},
doi = {https://doi.org/10.1016/j.jmva.2025.105473},
url = {https://www.sciencedirect.com/science/article/pii/S0047259X25000685},
author = {Jiazhen Xu and Janice L. Scealy and Andrew T.A. Wood and Tao Zou}
}

@article{simon2014introduction,
  title={Introduction to geometric measure theory},
  author={Simon, Leon},
  journal={Tsinghua Lectures},
  volume={2},
  number={2},
  pages={3--1},
  year={2014},
  url={https://math.stanford.edu/~lms/ntu-gmt-text.pdf}
}

@article{lecun2010mnist,
  title={MNIST handwritten digit database},
  author={LeCun, Yann and Cortes, Corinna and Burges, CJ},
  journal={ATT Labs [Online]. Available: http://yann.lecun.com/exdb/mnist},
  volume={2},
  year={2010}
}

@inproceedings{liu2015faceattributes,
  title = {Deep Learning Face Attributes in the Wild},
  author = {Liu, Ziwei and Luo, Ping and Wang, Xiaogang and Tang, Xiaoou},
  booktitle = {Proceedings of International Conference on Computer Vision (ICCV)},
  month = {December},
  year = {2015} 
}

@INPROCEEDINGS{Shetty-mcsm,
  author={Shetty, Nishanth and Seelamantula, Chandra Sekhar},
  booktitle={ICASSP 2025 - 2025 IEEE International Conference on Acoustics, Speech and Signal Processing (ICASSP)}, 
  title={Monte Carlo Score Matching for Image Generation}, 
  year={2025},
  volume={},
  number={},
  pages={1-5},
  doi={10.1109/ICASSP49660.2025.10889041}}

@article{Epperlyetal2024,
  author  = {Epperly, Ethan N. and Tropp, Joel A. and Webber, Robert J.},
  title   = {{XT}race: Making the most of every sample in stochastic trace estimation},
  journal = {SIAM Journal on Matrix Analysis and Applications},
  year    = {2024},
  volume  = {45},
  number  = {1},
  pages   = {1--23},
  doi     = {10.1137/23M1548323}
}

@article{Hutchinson01011990,
author = {M.F. Hutchinson},
title = {A stochastic estimator of the trace of the influence matrix for laplacian smoothing splines},
journal = {Communications in Statistics - Simulation and Computation},
volume = {19},
number = {2},
pages = {433--450},
year = {1990},
publisher = {Taylor \& Francis},
doi = {10.1080/03610919008812866},
URL = {https://doi.org/10.1080/03610919008812866}
}

@inproceedings{
bortoli2025distributional,
title={Distributional Diffusion Models with Scoring Rules},
author={Valentin De Bortoli and Alexandre Galashov and J Swaroop Guntupalli and Guangyao Zhou and Kevin Patrick Murphy and Arthur Gretton and Arnaud Doucet},
booktitle={Forty-second International Conference on Machine Learning},
year={2025},
url={https://openreview.net/forum?id=N82967FcVK}
}

@InProceedings{gsm-on-compositional-data,
  title = 	 {Interaction Models and Generalized Score Matching for Compositional Data},
  author =       {Yu, Shiqing and Drton, Mathias and Shojaie, Ali},
  booktitle = 	 {Proceedings of the Second Learning on Graphs Conference},
  pages = 	 {20:1--20:25},
  year = 	 {2024},
  editor = 	 {Villar, Soledad and Chamberlain, Benjamin},
  volume = 	 {231},
  series = 	 {Proceedings of Machine Learning Research},
  month = 	 {27--30 Nov},
  publisher =    {PMLR},
  url = 	 {https://proceedings.mlr.press/v231/yu24a.html}
}
\bibliographystyle{plainnat}

\clearpage
\appendix

\section{Connections to Related Objectives} \label{sec-connection-related-obj}
In this section, we discuss the connections between the initial objective in \autoref{eq-mpf-objective} and related formulations that have appeared in the literature. Since we started with Minimum Probability Flow (MPF), we begin with a brief overview of MPF and discuss its connection to other related objectives.

\subsection{Minimum Probability Flow Learning}
Minimum Probability Flow (MPF) \cite{icml_mpf, mpf} is a parameter estimation framework that avoids the computation of intractable partition function. MPF introduced the continuous time Markov process with dynamics that transport probability mass between states. Given transition rates $\Gamma(\x{y}, \x{x})$ and the connectivity function $g(\x{y}, \x{x})$ between states $\x{y}$ and $\x{x}$, the probability flow from state $\x{x}$ to state $\x{y}$ is
\begin{align*}
    \Gamma_{\x{\theta}}(\x{y}, \x{x}) &= g(\x{y}, \x{x})\exp\left(\frac{1}{2}\left[E_{\x{\theta}}(\x{x}) - E_{\x{\theta}}(\x{y})\right]\right)
\end{align*}
The MPF objective function measures the expected probability flow from the empirical data distribution $p$ to non-data states over infinitesimal time $\epsilon$. MPF starts from a KL divergence; a first-order Taylor expansion yields the objective in \autoref{eq-mpf-objective} (Equation $C-1$ from \citet{mpf}). Intuitively, we are interested the find parameter $\x{\theta}$ that minimizes the probability flow from data states to non data states. The consistency of the MPF estimator under suitable regularity conditions was established by \citet{icml_mpf}, to which we refer the reader for a detailed treatment.
\subsection{Conditional Noise Contrastive Estimation} \label{appendix:sec-cond-nce-discussion}
The central idea of Noise Contrastive Estimation (NCE) \cite{pmlr-v9-gutmann10a} is to learn a classifier that distinguishes samples from the data distribution $p$ from samples of noise distribution $p_{n}$. It is known that the noise distribution $p_{n}$ must be carefully chosen to guarantee good convergence of the resulting estimator, generally considered hard in practice. To address this limitation, \citet{CondNCE} introduced conditional NCE (CondNCE) where noisy samples are generated conditionally on the observed data samples. This framework was further generalized by \citet{unified-nce} through the introduction of the $f-$CondNCE framework, based on general convex function $f$. We consider the objective proposed by \citet{unified-nce} (cf. Equation $4$)
\begin{align} \label{eq:nce-formulation}
    \mathcal{H}_{f}(\x{\theta}) &= \underset{\substack{\x{y} \sim p, \, \\ \x{x} \sim q(\cdot \mid \x{y})}}{\E}\left[D_{f}\left(\frac{p(\x{x})q(\x{y} \mid \x{x})}{p(\x{y})q(\x{x} \mid \x{y})}, \frac{p_{\x{\theta}}(\x{x})q(\x{y} \mid \x{x})}{p_{\x{\theta}}(\x{y})q(\x{x} \mid \x{y})}\right)\right] - \underset{\substack{\x{x} \sim p, \, \\ \x{y} \sim q(\cdot \mid \x{x})}}{\E}\left[f\left(\frac{p(\x{y})q(\x{x} \mid \x{y})}{p(\x{x})q(\x{y} \mid \x{x})}\right)\right] \nonumber \\
    &= \underset{\substack{\x{x} \sim p, \, \\ \x{y} \sim q(\cdot \mid \x{x})}}{\E}\left[-f'(\rho_{\x{\theta}}(\x{x}, \x{y})) + \rho_{\x{\theta}}(\x{y}, \x{x})f'(\rho_{\x{\theta}}(\x{y}, \x{x})) - f(\rho_{\x{\theta}}(\x{y}, \x{x}))\right]
\end{align}
where $f: \R_{\geq 0} \to \R$ is strictly convex function, $D_{f}$ denotes the Bregman divergence defined as in \autoref{eq:def-bregmann-divergence} and $\rho_{\x{\theta}}(\x{x}, \x{y}) = \frac{p_{\x{\theta}}(\x{x})q(\x{y} \mid \x{x})}{p_{\x{\theta}}(\x{y})q(\x{x} \mid \x{y})}$. Following \citet{unified-nce}, we consider symmetric channel $q(\x{y} \mid \x{x}) = q(\x{x} \mid \x{y})$, then the objective reduces to
\begin{align} \label{eq:nce-formulation-symmetric-channel}
    \mathcal{H}^{S}_{f}(\x{\theta}) = \underset{\substack{\x{x} \sim p, \, \\ \x{y} \sim q(\cdot \mid \x{x})}}{\E}\left[-f'\left(\frac{p_{\x{\theta}}(\x{x})}{p_{\x{\theta}}(\x{y})}\right) + \frac{p_{\x{\theta}}(\x{y})}{p_{\x{\theta}}(\x{x})}f'\left(\frac{p_{\x{\theta}}(\x{y})}{p_{\x{\theta}}(\x{x})}\right) - f\left(\frac{p_{\x{\theta}}(\x{y})}{p_{\x{\theta}}(\x{x})}\right)\right]
\end{align}
Consider the above objective for $f(x) = -\sqrt{x}$, then we have
\begin{align*}
    \mathcal{H}^{S}_{f}(\x{\theta}) &= \underset{\substack{\x{x} \sim p, \, \\ \x{y} \sim q(\cdot \mid \x{x})}}{\E}\left[\sqrt{\frac{p_{\x{\theta}}(\x{y})}{p_{\x{\theta}}(\x{x})}}\right]
\end{align*}
which coincides with \autoref{eq-mpf-objective} when the connectivity function $g(\x{y}, \x{x})$ is chosen as conditional density $q(\x{y} \mid \x{x})$.
\subsection{Energy Discrepancy}
\citet{energy_discrepancy} proposed Energy Discrepancy (ED), a framework that learns the energy function directly without relying on score derivatives. ED constructs a loss based on the energy difference between data points and perturbed samples. Formally, let $\x{x}$ be a data point and $\x{y}$ be a perturbed sample sampled from conditional density $q(\cdot \mid \x{x})$. The contrastive potential $E_{q}(\x{y})$ is defined as (cf. Equation $4$ in \cite{energy_discrepancy})
\begin{align*}
    E_{q}(\x{y}) &= -\log \int_{\x{x}} q(\x{y} \mid \x{x}) \exp(-E_{\x{\theta}}(\x{x})) \d\x{x}
\end{align*}
The Energy Discrepancy objective is then defined as the expected difference between the model energy at the data point and the contrastive potential at the perturbed point
\begin{align*}
    \text{ED}_{q}(p, E_{\x{\theta}}) = \frac{1}{2}\left(\underset{{\x{x} \sim p}}{\E}\left[E_{\x{\theta}}(\x{x})\right] - \underset{\substack{\x{x} \sim p,\, \\\x{y} \sim q(\cdot \mid \x{x})}}{\E}[E_{q}(\x{y})]\right)
\end{align*}
We propose a simplifying modeling assumption by replacing the energy function corresponding to the contrastive potential induced by $q$, i.e., $E_{q}(\x{y})$, with the model energy, $E_{\x{\theta}}(\x{y})$. While this substitution is not exact, it is intuitively justified when the perturbation density $q(\x{y} \mid \x{x})$ is sufficiently {\it{localized}} and $E_{\x{\theta}}$ varies smoothly. In this regime, the log-sum-exp integral defining $E_{q}(\x{y})$ is dominated by contributions from $\x{x}$ in the immediate vicinity of $\x{y}$. Consequently, $E_{q}(\x{y})$ can be locally approximated by $E_{\x{\theta}}(\x{y})$, i.e., $E_{q}(\x{y}) \approx E_{\x{\theta}}(\x{y})$. We define
\begin{align*}
    \hat{\text{ED}}_{q}(p, E_{\x{\theta}}) = \underset{\substack{\x{x} \sim p,\, \\\x{y} \sim q(\cdot \mid \x{x})}}{\E} \left[\frac{1}{2} \big(E_{\x{\theta}}(\x{x}) - E_{\x{\theta}}(\x{y})\big)\right]
\end{align*}
By invoking the convexity of the exponential function and Jensen's inequality, we obtain an upper bound on $\hat{\text{ED}}_q$, that is
\begin{align*}
    \text{exp}\left(\hat{\text{ED}}_{q}(p, E_{\x{\theta}})\right) \leq \underset{\substack{\x{x} \sim p, \, \\ \x{y} \sim q(\cdot \mid \x{x})}}{\E}\left[\exp\left(\frac{1}{2}[E_{\x{\theta}}(\x{x}) - E_{\x{\theta}}(\x{y})]\right)\right]
\end{align*}
Rather than minimizing the left-hand side directly, we consider minimizing this upper bound as a surrogate objective. Observe that this surrogate coincides with \autoref{eq-mpf-objective} when the connectivity function $g(\x{y}, \x{x})$ is chosen as the conditional density $q(\x{y} \mid \x{x})$, corresponding to the soft neighborhood case. A factor of $\frac{1}{2}$ was introduced in the definition of energy discrepancy to show the connection to MPF objective explicit.
\section{Proof of~\autoref{theorem:modified-ed-to-gsm}} \label{appendix:proof-mpf-to-gsm}
For ease of reading, we first present a sketch of the proof, followed by the complete proof.
\begin{proof}[Proof Sketch]
    The proof proceeds via Taylor expansion of the exponential term of the integrand of \autoref{eq:i-eps-theta}. The key steps are as follows: (i) we define $\x{\delta} = \x{y} - \x{x}$, which follows $\x{\delta} \sim \mathcal{N}(\varepsilon b(\x{x}), 2\varepsilon D(\x{x}))$; (ii) We expand $\exp\left(\frac{1}{2}[E_{\x{\theta}}(\x{x}) - E_{\x{\theta}}(\x{y})]\right)$ to second order in $\x{\delta}$ using Taylor expansion, which introduces terms involving $\nabla E_{\x{\theta}}(\x{x})$ and the Hessian $H_{E_{\x{\theta}}}(\x{x})$; (iii) We evaluate $I_{\varepsilon}(\x{x}; \x{\theta})$ by computing the Gaussian moments, where $\mathbb{E}[\x{\delta}] = \varepsilon (\nabla \cdot D)(\x{x})$ and $\mathbb{E}[\x{\delta}\x{\delta}^\top] = \varepsilon D(\x{x}) + O(\varepsilon^2)$. Using the circulant property of trace, we obtain an expression in terms of $\varepsilon$; (iv) after subtracting the $\x{\theta}$-independent constant, normalizing by $\varepsilon/4$, and taking $\varepsilon \to 0$, we use the divergence identity $\nabla \cdot(D(\x{x})\nabla E_{\x{\theta}}) = \nabla E_{\x{\theta}}(\xx)^\top (\nabla \cdot D)(\x{x}) + \text{tr}(D(\x{x})\HEx)$ to recover \autoref{eq:main-objective-in-compact-form-gsm}.
\end{proof}
\begin{proof}
    Define the integral
    \begin{align} \label{eq:i-eps-theta}
        I_{\varepsilon}(\xx; \ttheta) = \int q_{\varepsilon}(\y \mid \xx) \exp\left(\frac{1}{2}[E_{\ttheta}(\xx) - E_{\ttheta}(\y)]\right) \d\y
    \end{align}
    Recall that the conditional density is
    \begin{align*}
        q_{\varepsilon}(\y \mid \x{x}) = Z\exp\left(-\frac{\|\y-\xx-\frac{\varepsilon}{2} b(\xx)\|^{2}_{D(\xx)^{-1}}}{2\varepsilon}\right)
    \end{align*}
    where $Z = \frac{1}{(2\pi)^{d/2}\sqrt{\det \varepsilon D(\xx)}}$ is the normalization constant and $b(\xx) = (\nabla \cdot D)(\xx)$. Due to the construction, it's easy to see that $\x{y} \mid \x{x} \sim \N(\x{x} + \frac{\varepsilon}{2} b(\x{x}), \varepsilon D(\x{x}))$. Define $\ddelta = \y - \xx$, then $\ddelta \mid \xx \sim \N\left(\frac{\varepsilon}{2} b(\xx), \varepsilon D(\xx)\right)$. So \autoref{eq:i-eps-theta} will be
    \begin{align*}
        I_{\varepsilon}(\xx; \ttheta) = \int w_{\varepsilon}(\ddelta \mid \xx)\text{exp}\left(\frac{1}{2}[E_{\ttheta}(\xx) - E_{\ttheta}(\xx + \ddelta)]\right)\d\ddelta
    \end{align*}
    where
    \begin{align*}
        w_{\varepsilon}(\ddelta \mid \xx) = \frac{1}{(2\pi)^{d/2}\sqrt{\det \varepsilon D(\xx)}}\exp\left(-\frac{1}{2}\left(\ddelta-\frac{\varepsilon}{2} b(\xx)\right)^\top(\varepsilon D(\xx))^{-1}\left(\ddelta-\frac{\varepsilon}{2} b(\xx)\right)\right)
    \end{align*}
    Since $E_{\ttheta}$ is $C^2$ and assuming $\varepsilon$ to be small, we can expand $E_{\ttheta}(\xx+\ddelta)$ around $\xx$ to second order.
    \begin{align*}
        E_{\ttheta}(\xx+\ddelta) = \Ex + \nabla \Ex^\top\ddelta + \frac{1}{2}\ddelta^\top \HEx\ddelta + \O(\|\ddelta\|^3)
    \end{align*}
    where $\HEx$ denotes the Hessian of $E_{\ttheta}$ at $\xx$. Therefore,
    \begin{align*}
        \Ex - E_{\ttheta}(\xx + \ddelta) = -\nabla \Ex^\top\ddelta - \frac{1}{2}\ddelta^\top \HEx\ddelta + \O(\|\ddelta\|^3)
    \end{align*}
    Applying the exponential function and expanding to second order in $\ddelta$:
    \begin{align*}
        &\exp\left(\frac{1}{2}[\Ex - E_{\ttheta}(\xx + \ddelta)]\right) \\
        &= \exp\left(-\frac{1}{2}\nabla \Ex^\top\ddelta - \frac{1}{4}\ddelta^\top \HEx\ddelta + \O(\|\ddelta\|^3)\right) \\
        &= 1 - \frac{1}{2}\nabla \Ex^\top\ddelta - \frac{1}{4}\ddelta^\top \HEx\ddelta + \frac{1}{8}\left(\nabla \Ex^\top\ddelta\right)^{2} + \O(\|\ddelta\|^3)
    \end{align*}
    Since $\ddelta$ has variance of order $\varepsilon$, we have $\|\ddelta\| = \O(\varepsilon^{1/2})$, and thus the error terms satisfy $\O(\|\ddelta\|^3) = \O(\varepsilon^{3/2})$. Using the fact that for any scalar $a = \x{v}^\top\x{w}$, we have $a^2 = \tr(\x{v}^\top \x{w} \x{v}^\top \x{w}) = \tr(\x{w}\x{v}^\top\x{w}\x{v}^\top)$, and for any matrix $A$, $\ddelta^\top A\ddelta = \tr(A\ddelta\ddelta^\top)$, we can rewrite:
    \begin{align*}
        \exp\left(\frac{1}{2}[\Ex - E_{\ttheta}(\xx + \ddelta)]\right) &= 1 - \frac{1}{2}\nabla \Ex^\top\ddelta - \frac{1}{4}\tr\left(\HEx\ddelta\ddelta^\top\right) \\ 
        & \quad + \frac{1}{8}\tr\left(\nabla \Ex\nabla \Ex^\top\ddelta\ddelta^\top\right) + \O(\varepsilon^{3/2})
    \end{align*}
    Substituting into $I_{\varepsilon}(\xx; \ttheta)$ and using the linearity of integration and trace:
    \begin{align*}
        I_{\varepsilon}(\xx; \ttheta) &= \int w_{\varepsilon}(\ddelta \mid \xx)\exp\left(\frac{1}{2}[\Ex - E_{\ttheta}(\y)]\right) \d\y \\
        &= 1 - \frac{1}{2}\nabla \Ex^\top\E[\ddelta] - \frac{1}{4}\tr\left(\HEx\E[\ddelta\ddelta^\top]\right) \\
        &\quad + \frac{1}{8}\tr\left(\nabla \Ex\nabla \Ex^\top\E[\ddelta\ddelta^\top]\right) + \O(\varepsilon^{3/2})
    \end{align*}
    where the expectation is taken with respect to the Gaussian distribution $\N(\frac{\varepsilon}{2} b(\xx), \varepsilon D(\xx))$. Also
    \begin{align*}
        \E[\ddelta] &= \frac{\varepsilon}{2} b(\xx) = \frac{\varepsilon}{2} (\nabla \cdot D)(\xx) \\
        \E[\ddelta\ddelta^\top] &= \varepsilon D(\xx) + \E[\ddelta]\E[\ddelta]^\top = \varepsilon D(\xx) + \O(\varepsilon^2)
    \end{align*}
    Substituting these moments:
    \begin{align*}
        I_{\varepsilon}(\xx; \ttheta) &= 1 - \frac{\varepsilon}{4}\nabla \Ex^\top(\nabla \cdot D)(\xx) - \frac{\varepsilon}{4}\tr\left(D(\xx)\HEx\right) \nonumber\\
        &\quad + \frac{\varepsilon}{8}\tr\left(D(\xx)\nabla \Ex\nabla \Ex^\top\right) + \O(\varepsilon^{3/2})
    \end{align*}
    Using the cyclic property of trace, $\tr(D(\xx)\nabla \Ex\nabla \Ex^\top) = \nabla \Ex^\top D(\xx)\nabla \Ex$:
    \begin{align*}
        I_{\varepsilon}(\xx; \ttheta) = 1 - \frac{\varepsilon}{4}\nabla \Ex^\top(\nabla \cdot D)(\xx) - \frac{\varepsilon}{4}\tr\left(D(\xx)\HEx\right) + \frac{\varepsilon}{8}\nabla \Ex^\top D(\xx)\nabla \Ex + \O(\varepsilon^{3/2})
    \end{align*}
    Using the relation $\nabla \cdot\left(D(\xx)\nabla \Ex\right) = \nabla \Ex^\top(\nabla \cdot D)(\xx) + \tr\left(D(\xx)\HEx\right)$, the objective becomes
    \begin{align*}
        \K(\x{\theta}) &= \E_{\x{x} \sim p}\left[1 + \frac{\varepsilon}{8}\nabla \Ex^\top D(\xx)\nabla \Ex - \frac{\varepsilon}{4} \nabla \cdot \left(D(\x{x}) \nabla \Ex\right)+ \O(\varepsilon^{3/2}) \right]
    \end{align*}
    Removing the $\x{\theta}$ independent terms, taking the dividing by $\frac{\varepsilon}{4}$ and taking the limit $\varepsilon \to 0$ yields the desired result in \autoref{eq:main-objective-in-compact-form-gsm}.
\end{proof}
\section{Proof of~\autoref{prop:main-objective-equi-to-gsm}} \label{appendix:proof-main-objective-equi-to-gsm-direct}
The proof follows by applying {\it integration by parts} in \autoref{eq:main-objective-in-compact-form-2-gsm}.
\begin{proof}
    Consider \autoref{eq:main-objective-in-compact-form-2-gsm}
    \begin{align} \label{eq-main-objective-in-model-density-summation-form-gsm}
        \L(\x{\theta}) &= \E_{\x{x} \sim p}\left[\frac{1}{2}\nabla \log p_{\x{\theta}}(\x{x})^{\top}D(\x{x})\nabla \log p_{\x{\theta}}(\x{x}) + \tr\left(D(\x{x})H_{\log p_{\x{\theta}}}(x)\right) + \sum_{k=1}^{d}\sum_{\ell=1}^{d}\nabla \log p_{\x{\theta}}(\x{x})_{\ell}\frac{\partial}{\partial x_{k}}D(\x{x})_{k\ell}\right] \nonumber \\
        &= \E_{\x{x} \sim p}\left[\frac{1}{2}\nabla\log p_{\x{\theta}}(\x{x})^{\top}D(\x{x})\nabla \log p_{\x{\theta}}(\x{x})\right] + \sum_{k=1}^{d}\sum_{\ell=1}^{d}\int_{\x{x}}D(\x{x})_{k\ell}H_{\log p_{\x{\theta}}}(\x{x})_{k\ell}p(\x{x})\d\x{x} \nonumber \\
        & \qquad\qquad+ \underbrace{\sum_{k=1}^{d}\sum_{\ell=1}^{d}\int_{\x{x}}p(\x{x})\nabla \log p_{\x{\theta}}(\x{x})_{\ell}\frac{\partial}{\partial x_{k}}D(\x{x})_{k\ell}\d\x{x}}_{\mathcal{C}(\x{x})}
    \end{align}
    Now consider
    \begin{align*}
        \mathcal{C}(\x{x}) &= \sum_{k=1}^{d}\sum_{\ell=1}^{d}\int_{\x{x}}p(\x{x})\nabla \log p_{\x{\theta}}(\x{x})_{\ell}\frac{\partial}{\partial x_{k}}D(\x{x})_{k\ell}\d\x{x} \\
        &= \sum_{k=1}^{d}\sum_{\ell=1}^{d}\int\dots\int \cancelto{0}{\left[p(\x{x})\nabla \log p_{\x{\theta}}(\x{x})_{\ell}D(\x{x})_{k\ell}\right]^{x_k=+\infty}_{x_k=-\infty}\d\x{x}_{/k}} - \int_{\x{x}} D(\x{x})_{k\ell}\frac{\partial}{\partial x_{k}}(p(\x{x})\nabla \log p_{\x{\theta}}(\x{x}))\d\x{x}\\
        &= - \sum_{k=1}^{d}\sum_{\ell=1}^{d}\int_{\x{x}}D(\x{x})_{k\ell}\frac{\partial}{\partial x_{k}}\left(p(\x{x})\nabla \log p_{\x{\theta}}(\x{x})_{\ell}\right)\d\x{x} \\
        &= -\sum_{k=1}^{d}\sum_{\ell=1}^{d}\int_{\x{x}}\nabla p(\x{x})_{k}D(\x{x})_{k\ell}\nabla \log p_{\x{\theta}}(\x{x})_{\ell}\d\x{x} - \sum_{k=1}^{d}\sum_{\ell=1}^{d}\int_{\x{x}}D(\x{x})_{k\ell}H_{\log p_{\x{\theta}}}(\x{x})_{k\ell}p(\x{x})\d\x{x} \\
        &= -\sum_{k=1}^{d}\sum_{\ell=1}^{d}\int_{\x{x}}\nabla \log p(\x{x})_{k}D(\x{x})_{k\ell}\nabla \log p_{\x{\theta}}(\x{x})_{\ell}p(\x{x})\d\x{x} - \sum_{k=1}^{d}\sum_{\ell=1}^{d}\int_{\x{x}}D(\x{x})_{k\ell}H_{\log p_{\x{\theta}}}(\x{x})_{k\ell}p(\x{x})\d\x{x} \\
        &= -E_{\x{x} \sim p}\left[\nabla \log p_{\x{\theta}}(\x{x})^{\top}D(\x{x})\nabla \log p(\x{x})\right] - \sum_{k=1}^{d}\sum_{\ell=1}^{d}\int_{\x{x}}D(\x{x})_{k\ell}H_{\log p_{\x{\theta}}}(\x{x})_{k\ell}p(\x{x})\d\x{x}
    \end{align*}
    where in the third step we have used integration by parts along with the regularity condition (mentioned in \autoref{prop:main-objective-equi-to-gsm}). Substituting this in \autoref{eq-main-objective-in-model-density-summation-form-gsm} yields
    \begin{align*}
        \L(\x{\theta}) &= \E_{\x{x} \sim p}\left[\frac{1}{2}\nabla\log p_{\x{\theta}}(\x{x})^{\top}D(\x{x})\nabla \log p_{\x{\theta}}(\x{x}) - \nabla \log p_{\x{\theta}}(\x{x})^{\top}D(\x{x})\nabla \log p(\x{x})\right]
    \end{align*}
    Observe that minimizing $\L(\x{\theta})$ with respect to $\x{\theta}$ is equivalent to minimizing
    \begin{align*}
        \L(\x{\theta}) + \E_{\x{x} \sim p}\left[\frac{1}{2}\nabla \log p(\x{x})^{\top}D(\x{x})\nabla \log p(\x{x})\right]
    \end{align*}
    since the added term does not depend on $\x{\theta}$. Therefore, we have
    \begin{align*}
        \L_{GSM}(\x{\theta}) &= \L(\x{\theta}) + \E_{\x{x} \sim p}\left[\frac{1}{2}\nabla \log p(\x{x})^{\top}D(\x{x})\nabla \log p(\x{x})\right] \\
        &= \frac{1}{2}\E_{\x{x} \sim p}\left[\norm{\nabla \log p_{\x{\theta}}(\x{x}) - \nabla \log p(\x{x})}^{2}_{D(\x{x})}\right]
    \end{align*}
    where the last equality follows from expanding the weighted norm. This completes the proof.
\end{proof}

\section{Proof of \autoref{theorem:modified-ed-to-sm-convex}} \label{appendix:proof-modified-ed-to-sm-convex}
To prove \autoref{theorem:modified-ed-to-sm-convex}, we first establish the following auxiliary lemmas. Since the proofs involve repeated use of multivariate integrals, we introduce the following shorthand notation. For an integral over $\R^{d}$ of the form
\begin{align*}
    \int f(x)\, \d x_{1} \cdots \d x_{d},
\end{align*}
we write, for indices $i < j$,
\begin{align*}
    \d x_{i:j} = \d x_{i} \cdots \d x_{j}.
\end{align*}
\subsection{Supporting Lemmas}
\begin{lemma} \label{lemma-linear-exp-term-integral}
    Let $\x{x} \in \R^{d}$ and let $B \subset \R^d$ denote the unit Euclidean ball centered at the origin. For any coordinate index $i \in \{1, 2, \ldots, d\}$, define
    \begin{align*}
         I = \int_{B} x_{i}\exp\left(-\frac{1}{2}\x{x}^{\top}\x{x}\right) \d\x{x}
    \end{align*}
    Then $I = 0$.
\end{lemma}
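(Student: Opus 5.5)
The plan is a symmetry argument: the integrand is odd in the $i$th coordinate, while the domain $B$ and the Gaussian weight are both invariant under reflecting that coordinate. Concretely, I will use the linear map $R_i:\R^d\to\R^d$ that flips the sign of the $i$th coordinate, $R_i(\x{x}) = (x_1,\dots,-x_i,\dots,x_d)$. It has three properties I will check:
\begin{itemize}
    \item it is an orthogonal map, so $\abs{\det R_i} = 1$;
    \item it maps $B$ bijectively onto itself, since $\norm{R_i\x{x}} = \norm{\x{x}}$;
    \item it leaves the weight $\exp(-\frac{1}{2}\x{x}^\top\x{x})$ unchanged, for the same reason.
\end{itemize}

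Before changing variables, I note that the integral is well defined. The integrand is continuous and bounded by $1$ in absolute value on the bounded set $B$, so it is absolutely integrable and no convergence issues arise.

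Next I apply the change of variables $\x{x} = R_i\x{u}$. Because of the three properties above, the domain and the weight are unchanged and there is no Jacobian factor, while the factor $x_i$ becomes $-u_i$. This gives
\begin{align*}
    I = \int_{B} (-u_i)\exp\left(-\frac{1}{2}\x{u}^{\top}\x{u}\right)\d\x{u} = -I,
\end{align*}
and therefore $I = 0$.

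As an alternative that uses the paper's $\d x_{i:j}$ notation, I could use Fubini instead. I would write $B$ as the set where $x_i^2 \le 1 - \sum_{j\ne i}x_j^2$, integrate over $x_i$ first on the symmetric interval $[-a,a]$ with $a = \sqrt{1-\sum_{j\neq i}x_j^2}$, and observe that the inner integral of the odd function $x_i e^{-x_i^2/2}$ over that interval vanishes.

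I do not expect a real obstacle. The only points needing care are the integrability remark and confirming that the reflection preserves both the ball and the weight.
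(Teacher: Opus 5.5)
Your proof is correct and uses the same idea as the paper: the integrand is odd under $x_i \mapsto -x_i$, while both $B$ and the Gaussian weight are invariant under that reflection. Your Fubini alternative is exactly the paper's argument (take $i=1$, integrate $x_1$ first over $\bigl[-\sqrt{1-\sum_{k\ge 2}x_k^2},\,\sqrt{1-\sum_{k\ge 2}x_k^2}\bigr]$, and use that $x_1 e^{-x_1^2/2}$ is odd), and your reflection version is a cleaner single-change-of-variables packaging of it, with your integrability remark supplying a justification the paper leaves implicit.
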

\begin{proof}
    Without loss of generality, assume $i = 1$. By symmetry of the integrand and the domain $B$, the result holds for any choice of $i$. So, we have
    \begin{align*}
        I = \int_{x_{d}=-1}^{1} \exp\left(-\frac{1}{2}x_{d}^{2}\right) \cdots \left( \int_{x_{1}=-\sqrt{1-\sum_{k=2}^{d}x_{k}^{2}}}^{\sqrt{1-\sum_{k=2}^{d}x_{k}^{2}}} x_{1} \exp\left(-\frac{1}{2}x_{1}^{2}\right) \d x_{1} \right) \d x_{2:d}
    \end{align*}
    Since the integrand $x_1 \exp(-\frac{1}{2}x_1^2)$ is an odd function of $x_1$, we can conclude that $I = 0$.
\end{proof}

\begin{lemma} \label{lemma-cross-exp-term-integral}
    Let $\x{x} \in \R^{d}$ with $d \geq 2$, and let $B \subset \R^{d}$ be the unit Euclidean ball centered at the origin. For distinct indices $i \neq j$ where $i, j \in \{1,2,\dots,d\}$, define
    \begin{align*}
        I = \int_{B} x_{i}x_{j}\exp\left(-\frac{1}{2}\x{x}^{\top}\x{x}\right) \d\x{x}
    \end{align*}
    Then $I = 0$.
\end{lemma}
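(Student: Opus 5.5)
We show $I=0$ with a sign-flip symmetry argument, in the same spirit as \autoref{lemma-linear-exp-term-integral}. By relabelling coordinates we may assume $i=1$ and $j=2$. Both the unit ball $B$ and the Gaussian weight $\exp(-\frac{1}{2}\x{x}^{\top}\x{x})$ are invariant under the reflection $R:(x_1,x_2,\dots,x_d)\mapsto(-x_1,x_2,\dots,x_d)$. This is because $R$ is orthogonal, so it preserves $\x{x}^{\top}\x{x}$ and $\norm{\x{x}}\le 1$.

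Under this reflection the factor $x_1x_2$ changes sign. Let $f(\x{x})=x_1x_2\exp(-\frac{1}{2}\x{x}^{\top}\x{x})$. The change of variables $\x{x}=R\x{u}$ has $\abs{\det R}=1$ and $R(B)=B$, so
\begin{align*}
I=\int_B f(\x{x})\,\d\x{x}=\int_B f(R\x{u})\,\d\x{u}=-\int_B f(\x{u})\,\d\x{u}=-I .
\end{align*}
Hence $I=0$.

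The only thing to check is that the integral is absolutely convergent, so that the manipulation is legitimate. This holds because the integrand is continuous and bounded by $1$ on the compact set $B$.

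As an alternative that matches the iterated-integral presentation of \autoref{lemma-linear-exp-term-integral}, integrate first in $x_1$ over the symmetric interval $\abs{x_1}\le\sqrt{1-\sum_{k\ge2}x_k^2}$. With the other coordinates fixed, the integrand $x_1x_2\exp(-\frac{1}{2}\x{x}^{\top}\x{x})$ is odd in $x_1$. The inner integral therefore vanishes for every value of $x_2,\dots,x_d$, and Fubini gives $I=0$.

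No genuine obstacle is expected. The only point needing care is noting that the domain of the inner integral is symmetric in $x_1$ for every fixed $(x_2,\dots,x_d)$.
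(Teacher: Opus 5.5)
Your proof is correct and follows the paper's argument: the paper also reduces to $i=1$, $j=2$, writes $I$ as an iterated integral whose innermost integral runs over the symmetric interval $|x_1|\le\sqrt{1-\sum_{k\ge 2}x_k^2}$, and concludes from oddness in $x_1$. That is exactly your second version, and your reflection $x_1\mapsto -x_1$ is the same symmetry written as a change of variables; you are also slightly more careful than the paper's display, since you keep the factor $x_2$ in the integrand and justify integrability explicitly.
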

\begin{proof}
    Without loss of generality, take $i = 1$ and $j = 2$; by symmetry the value is the same for any distinct pair. So
    \begin{align*}
        I = \int_{x_{d}=-1}^{1} \exp\left(-\frac{1}{2}x_{d}^{2}\right) \cdots  \left(\int_{x_{1}=-\sqrt{1-\sum_{k=2}^{d}x_{k}^{2}}}^{\sqrt{1-\sum_{k=2}^{d}x_{k}^{2}}} x_{1} \exp\left(-\frac{1}{2}x_{1}^{2}\right) \d x_{1} \right)\d x_{2:d}
    \end{align*}
    The innermost integral is an odd function of $x_{1}$,
\end{proof}

\begin{remark} \label{remark-outer-product-exp-term-integral}
    Following the definitions of \autoref{lemma-cross-exp-term-integral}, consider the matrix valued intergral
    \begin{align*}
        I = \int_{B}\exp\left(-\frac{1}{2}\x{x}^{\top}\x{x}\right)\x{x}\x{x}^{\top}\d\x{x}
    \end{align*}
    where the integration is done element-wise. Then $I_{ij} = 0$ for $i \neq j$ from \autoref{lemma-cross-exp-term-integral}. For any $i \in \{1,2,\dots,d\}$, the integral
    \begin{align*}
        I_{ii} = \int_{B}\exp\left(-\frac{1}{2}\x{x}^{\top}\x{x}\right)x_{i}^{2}\d\x{x}
    \end{align*}
    does not depend on $i$. Assuming that $I_{ii} = C_{3}$ where $C_{3}$ is constant, we get
    \begin{align*}
        I = C_{3}\id
    \end{align*}
    where $\id$ is identity matrix of size $d \times d$.
\end{remark}

\begin{lemma} \label{lemma-tri-exp-term-integral}
    Let $\x{x} \in \R^{d}$ and let $B \subset \R^{d}$ be the unit Euclidean ball centered at the origin. For indices $i, j, k \in \{1, \ldots, d\}$ (not necessarily distinct), define
    \begin{align*}
        I = \int_{B} x_{i}x_{j}x_{k}\exp\left(-\frac{1}{2}\x{x}^{\top}\x{x}\right)\d\x{x}
    \end{align*}
    Then $I = 0$.
\end{lemma}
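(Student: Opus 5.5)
The approach is a parity argument: an odd monomial integrated against an even weight over a centrally symmetric domain vanishes. This is the same mechanism as in \autoref{lemma-linear-exp-term-integral} and \autoref{lemma-cross-exp-term-integral}, but it is cleaner to phrase it through the reflection $\x{x}\mapsto-\x{x}$ than through coordinate-by-coordinate iterated integrals. The key observation is that whether or not $i,j,k$ are distinct, the monomial $x_ix_jx_k$ has total degree three, which is odd.

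First, note that the integrand is absolutely integrable. On $B$ it is continuous and bounded, since $|x_ix_jx_k|\le 1$ and the exponential is at most $1$. Therefore $I$ is finite, which is needed for the symmetry argument to be meaningful and not of the form $\infty-\infty$.

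Next, apply the change of variables $\x{u}=-\x{x}$. This map is a linear isometry with Jacobian determinant of absolute value $1$, and it maps $B$ onto itself because $\norm{-\x{x}}=\norm{\x{x}}$. Under it, the factor $\exp(-\frac12\x{u}^\top\x{u})$ is unchanged, while $u_iu_ju_k=(-1)^3x_ix_jx_k=-x_ix_jx_k$. Hence $I=-I$, and so $I=0$.

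If one prefers to mirror the style of the earlier lemmas, one can instead split into cases: all three indices equal (giving $x_i^3$), exactly two equal (giving $x_i^2x_k$ with $k\ne i$), and all three distinct. In each case, choose a coordinate that appears to an odd power ($x_i$ in the first case, $x_k$ in the second, and any of the three in the third). Then integrate over that coordinate innermost, on the symmetric interval $[-\sqrt{1-\sum_{m\neq \cdot}x_m^2},\,\sqrt{1-\sum_{m\neq \cdot}x_m^2}]$, where the integrand is odd in that variable. There is no real obstacle. The only point requiring care is the case split, namely making sure that some coordinate always appears to an odd power. This holds because three odd-or-even exponents summing to $3$ cannot all be even. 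The reflection argument sidesteps the case split entirely, so that is the version I would write.
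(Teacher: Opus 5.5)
Your proof is correct. Your primary route differs mildly from the paper's, which is your fallback version. The paper observes that some coordinate of $x_ix_jx_k$ appears to an odd power and, following \autoref{lemma-cross-exp-term-integral}, integrates that coordinate innermost over a symmetric interval on which the integrand is odd.

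Your antipodal map $\x{x}\mapsto-\x{x}$ uses only that the total degree is odd. This avoids both the case split and the iterated-integral (Fubini) bookkeeping. Your check of absolute integrability is a point the paper leaves implicit.

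The paper's coordinate-wise parity reaches further. It kills any monomial with at least one odd exponent, including even-degree ones such as $x_1x_2$ in \autoref{lemma-cross-exp-term-integral} or $x_1^3x_2$ in \autoref{lemma-quartic-term-exp-integral}, where the global reflection says nothing. That is why the paper can reuse one uniform argument throughout the appendix.

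You can get both cleanliness and generality with the single-coordinate reflection $x_m\mapsto -x_m$, for an index $m$ whose exponent is odd, used as a change of variables. It also maps $B$ onto itself, preserves the Gaussian weight, has unit Jacobian, and flips the sign of the integrand.
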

\begin{proof}
    Observe that product $x_{i}x_{j}x_{k}$ contains an odd power of at least one dimension. Following the steps as proof of \autoref{lemma-cross-exp-term-integral}, we conclude that $I = 0$.
\end{proof}

\begin{lemma} \label{lemma-relation-quad-and-quar-exp-integral}
    Let $\x{x} \in \R^{d}$ with $d \geq 2$, and let $B \subset \R^{d}$ be the unit Euclidean ball centered at the origin. For distinct indices $i \neq j$, where $i, j \in \{1,2,\dots,d\}$, define
    \begin{align*}
        I_{1} = \int_{B}x_{i}^{4}\exp\left(-\frac{1}{2}\x{x}^{\top}\x{x}\right)\d\x{x} \quad \text{ and } \quad I_{2} = \int_{B}x_{i}^{2}x_{j}^{2}\exp\left(-\frac{1}{2}\x{x}^{\top}\x{x}\right)\d\x{x} 
    \end{align*}
    Then $I_{1} = 3I_{2}$.
    \begin{quote}
        Note that by symmetry, both integrals are independent of choice of indices $i$ and $j$.
    \end{quote}
\end{lemma}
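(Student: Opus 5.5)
The cleanest route is an integration by parts on the sphere, or equivalently an appeal to rotational invariance. I would use rotational invariance because it avoids any boundary bookkeeping.

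Write $f(\x{x}) = \exp(-\frac{1}{2}\x{x}^{\top}\x{x})$, which is radial, and restrict to the plane of coordinates $i,j$. Since both $B$ and $f$ are invariant under orthogonal transformations, for any unit vector $\x{u}$ the integral $\int_{B} (\x{u}^{\top}\x{x})^{4} f(\x{x})\,\d\x{x}$ does not depend on $\x{u}$.

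First take $\x{u} = \x{e}_{i}$, which gives $I_{1}$. Then take $\x{u} = (\x{e}_{i}+\x{e}_{j})/\sqrt{2}$, which gives
\begin{align*}
\frac{1}{4}\int_{B}(x_{i}+x_{j})^{4} f(\x{x})\,\d\x{x}.
\end{align*}
Expanding $(x_{i}+x_{j})^{4} = x_{i}^{4} + 4x_{i}^{3}x_{j} + 6x_{i}^{2}x_{j}^{2} + 4x_{i}x_{j}^{3} + x_{j}^{4}$, the odd terms $x_{i}^{3}x_{j}$ and $x_{i}x_{j}^{3}$ integrate to zero. This follows by the same oddness argument as in \autoref{lemma-cross-exp-term-integral}, applied via the reflection $x_{j}\mapsto -x_{j}$, which preserves $B$ and $f$. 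The terms $x_{i}^{4}$ and $x_{j}^{4}$ each integrate to $I_{1}$ by permutation symmetry. The second choice of $\x{u}$ therefore yields $\frac{1}{4}(2I_{1} + 6I_{2})$.

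Equating the two expressions gives $I_{1} = \frac{1}{2}I_{1} + \frac{3}{2}I_{2}$, hence $I_{1} = 3I_{2}$.

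The only step needing care is the rotational invariance. I would justify it by the change of variables $\x{x} = Q\x{z}$ with $Q$ orthogonal and $Q\x{e}_{i} = \x{u}$. The Jacobian is $|\det Q| = 1$, $Q$ maps $B$ onto $B$, and $f(Q\x{z}) = f(\x{z})$, so
\begin{align*}
\int_{B}(\x{u}^{\top}\x{x})^{4} f(\x{x})\,\d\x{x} = \int_{B} z_{i}^{4} f(\x{z})\,\d\x{z}.
\end{align*}
The symmetry statements in the quote (independence from the choice of $i,j$) follow similarly using permutation matrices. No finiteness issues arise because $B$ is bounded and the integrand is continuous.
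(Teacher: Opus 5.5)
Your proposal is correct and follows essentially the same route as the paper, which substitutes the orthogonal $45^\circ$ rotation in the $(i,j)$-plane directly into $I_{1}$, expands $\left(\frac{y_{1}+y_{2}}{\sqrt{2}}\right)^{4}$, drops the odd cross terms, and rearranges $I_{1} = \frac{1}{4}I_{1} + \frac{3}{2}I_{2} + \frac{1}{4}I_{1}$ to get $I_{1}=3I_{2}$. Your reflection $x_{j}\mapsto -x_{j}$ is a more direct justification for discarding the quartic terms $x_{i}^{3}x_{j}$ and $x_{i}x_{j}^{3}$ than the paper's citation of the cubic-moment lemma.
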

\begin{proof}
    Without loss of generality, take $i = 1, j = 2$, by symmetry, the results holds for any distinct pair. Consider the transformation for $I_{1}$
    \begin{align*}
        \x{x} = \begin{bmatrix}
            x_{1} \\ x_{2} \\ x_{3} \\ \vdots \\ x_{d}
        \end{bmatrix} = \begin{bmatrix}
            \frac{y_{1}+y_{2}}{\sqrt{2}} \\ \frac{-y_{1}+y_{2}}{\sqrt{2}} \\ y_{3} \\ \vdots \\ y_{d}
        \end{bmatrix}
    \end{align*}
    which in matrix form is
    \begin{align*}
        \x{x} = Q\x{y} = \begin{pmatrix}
            Q_{2} & 0 \\
            0 & \x{I}_{d-2}
        \end{pmatrix}\x{y}, \quad \text{where} \quad Q_{2} = \begin{pmatrix}
        \frac{1}{\sqrt{2}} & \frac{1}{\sqrt{2}} \\
        -\frac{1}{\sqrt{2}} & \frac{1}{\sqrt{2}}
    \end{pmatrix}
    \end{align*}
    As $Q$ is orthogonal, $\x{x}^{\top}\x{x} = (Q\x{y})^{\top}(Q\x{y}) = \x{y}^{\top}\x{y}$. Because of this, the integration region remains $B$ and the absolute value of Jacobian determinant is $\abs{\det Q} = 1$. Applying this change of variable
    \begin{align*}
        I_{1} &= \int_{B} \left(\frac{y_{1}+y_{2}}{\sqrt{2}}\right)^{4} \exp\left(-\frac{1}{2}\x{y}^{\top}\x{y}\right) \d\x{y} \\
        &= \frac{1}{4}\int_{B} (y_{1}^{4} + 4y_{1}^{3}y_{2} + 6y_{1}^{2}y_{2}^{2} + 4y_{1}y_{2}^{3} + y_{2}^{4}) \exp\left(-\frac{1}{2}\x{y}^{\top}\x{y}\right) \d\x{y} \\
        &= \frac{1}{4}\underbrace{\int_{B} y_{1}^{4} \exp\left(-\frac{1}{2}\x{y}^{\top}\x{y}\right) \d\x{y}}_{I_{1}} + \frac{3}{2}\underbrace{\int_{B} y_{1}^{2}y_{2}^{2} \exp\left(-\frac{1}{2}\x{y}^{\top}\x{y}\right) \d\x{y}}_{I_{2}} + \frac{1}{4}\underbrace{\int_{B} y_{2}^{4} \exp\left(-\frac{1}{2}\x{y}^{\top}\x{y}\right) \d\x{y}}_{I_{1}}
    \end{align*}
    where the terms with odd powers vanish by \autoref{lemma-tri-exp-term-integral}. Rearranging the above equation will give us $I_{1} = 3I_{2}$.
\end{proof}

\begin{lemma} \label{lemma-quartic-term-exp-integral}
    Let $\x{x} \in \R^{d}$, and let $B \subseteq \R^{d}$ be the unit Euclidean ball centered at the origin. For indices $i, j, k, \ell \in \{1, \ldots, d\}$, define
    \begin{align*}
        I = \int_{B} x_{i}x_{j}x_{k}x_{\ell} \exp\left(-\frac{1}{2}\x{x}^{\top}\x{x}\right) \d\x{x}
    \end{align*}
    Then
    \begin{align*}
        I = C_{6}\left(\delta_{ij}\delta_{k\ell} + \delta_{ik}\delta_{j\ell} + \delta_{i\ell}\delta_{jk}\right)
    \end{align*}
    where $\delta_{pq}$ is the Kronecker delta (equal to $1$ if $p = q$ and $0$ otherwise), and $C_{6}$ is a constant.
\end{lemma}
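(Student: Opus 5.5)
The proof splits into cases according to the multiplicity pattern of the index multiset $\{i,j,k,\ell\}$. We then check that the claimed tensor $C_{6}(\delta_{ij}\delta_{k\ell}+\delta_{ik}\delta_{j\ell}+\delta_{i\ell}\delta_{jk})$ reproduces the integral in every case, for a single constant $C_{6}$. Both the unit ball $B$ and the weight $\exp(-\frac{1}{2}\x{x}^{\top}\x{x})$ are invariant under each coordinate reflection $x_{m}\mapsto -x_{m}$ and each coordinate permutation. These symmetries are exactly what \autoref{lemma-linear-exp-term-integral}, \autoref{lemma-cross-exp-term-integral} and \autoref{lemma-tri-exp-term-integral} already exploit.

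\emph{Case 1: some index appears an odd number of times.} The patterns are one index of multiplicity one or three, i.e.\ $\{a,b,c,e\}$ all distinct, $\{a,a,b,c\}$, or $\{a,a,a,b\}$ with $a\neq b$. The monomial is then odd in that coordinate $x_{m}$. Integrating first over $x_{m}$ on the symmetric interval $[-\sqrt{1-\sum_{n\neq m}x_{n}^{2}},\sqrt{1-\sum_{n\neq m}x_{n}^{2}}]$ gives zero, exactly as in the proof of \autoref{lemma-cross-exp-term-integral}. On the right-hand side, every product $\delta\delta$ pairs the four indices into two equal pairs, which is impossible here. So both sides vanish.

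\emph{Case 2: two distinct indices, each appearing twice.} The pattern is $\{a,a,b,b\}$ with $a\neq b$, in any order of $(i,j,k,\ell)$. By permutation symmetry the integral equals $I_{2}$ of \autoref{lemma-relation-quad-and-quar-exp-integral}, and is independent of $a,b$. Exactly one of the three pairings matches the given order. For instance $(a,a,b,b)$ gives $\delta_{ij}\delta_{k\ell}=1$ and the other two products $0$, and the orderings $(a,b,a,b)$ and $(a,b,b,a)$ are analogous. Hence the right-hand side equals $C_{6}$, and we set $C_{6}:=I_{2}$.

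\emph{Case 3: all four indices equal.} All three products equal $1$, so the right-hand side is $3C_{6}=3I_{2}$. The left-hand side is $I_{1}$, which equals $3I_{2}$ by \autoref{lemma-relation-quad-and-quar-exp-integral}. When $d=1$ only this case occurs, and we simply define $C_{6}=I_{1}/3$.

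The only substantive ingredient is the relation $I_{1}=3I_{2}$, which the previous lemma supplies. The main point needing care is bookkeeping: making sure every index pattern is covered, and that a single constant $C_{6}$ works simultaneously in Cases 2 and 3. That consistency is exactly what the rotation argument in \autoref{lemma-relation-quad-and-quar-exp-integral} guarantees.
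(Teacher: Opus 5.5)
Your proof is correct and follows the paper's argument essentially step for step. Parity makes every index pattern with an odd multiplicity vanish, the two-pair case selects exactly one Kronecker product, and the all-equal case is reconciled through $I_{1}=3I_{2}$ from \autoref{lemma-relation-quad-and-quar-exp-integral}. The only cosmetic difference is the choice of constant: the paper fixes $C_{6}=\frac{1}{3}\int_{B}x_{i}^{4}\exp(-\frac{1}{2}\x{x}^{\top}\x{x})\d\x{x}$ uniformly, so $d=1$ needs no separate treatment, whereas you set $C_{6}=I_{2}$ and special-case $d=1$; the two agree for $d\geq 2$ by that same lemma.
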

\begin{proof}
    If the product $x_{i},x_{j},x_{k},x_{\ell}$ contains any index with odd total power, then $I = 0$ by following similar arguments from \autoref{lemma-linear-exp-term-integral}, \autoref{lemma-cross-exp-term-integral}, \autoref{lemma-tri-exp-term-integral}. Therefore, we need indices to appear as an even power and occurs only when four indices can be paired into two distinct pairs or when all four indices are equal.
    \begin{align*}
        I = \begin{cases}
            \int_{B} x_{i}^{4} \exp\left(-\frac{1}{2}\x{x}^{\top}\x{x}\right) \d\x{x} & \text{if } i = j = k = \ell \\
            \int_{B} x_{i}^{2}x_{k}^{2} \exp\left(-\frac{1}{2}\x{x}^{\top}\x{x}\right) \d\x{x} & \text{if two equal pairs are } (i,j) \text{ and } (k,\ell) \text{ with } i \neq k \\
            \int_{B} x_{i}^{2}x_{j}^{2} \exp\left(-\frac{1}{2}\x{x}^{\top}\x{x}\right) \d\x{x} & \text{if two equal pairs are } (i,k) \text{ and } (j,\ell) \text{ with } i \neq j \\
            \int_{B} x_{i}^{2}x_{j}^{2} \exp\left(-\frac{1}{2}\x{x}^{\top}\x{x}\right) \d\x{x} & \text{if two equal pairs are } (i, \ell) \text{ and } (j, k) \text{ with } i \neq j \\
            0 & \text{otherwise}
        \end{cases}
    \end{align*}
    Using \autoref{lemma-relation-quad-and-quar-exp-integral}, we can unify all the cases which is given by
    \begin{align*}
        I = C_{6}\left(\delta_{ij}\delta_{k\ell} + \delta_{ik}\delta_{j\ell} + \delta_{i\ell}\delta_{jk}\right)
    \end{align*}
    where
    \begin{align*}
        C_{6} = \frac{1}{3}\int_{B} x_{i}^{4} \exp\left(-\frac{1}{2}\x{x}^{\top}\x{x}\right) \d\x{x}
    \end{align*}
    By symmetry, this constant is independent of the choice of $i$. The reason we choose $C_{6}$ with $x_{i}^{4}$ and not $x_{i}^{2}x_{j}^{2}$ ($i \neq j$) is to incorporate the case when $d = 1$.
\end{proof}
\subsection{Proof}
For ease of reading, we first present a sketch of the proof followed by the complete proof.
\begin{proof}[Proof Sketch]
    The proof proceeds via Taylor expansion of objective in small radius limit. The key steps are as follows: (i) we apply an invertible linear transformation that maps the Bregman ball $C^{\phi}_{r}(\x{x})$ to a Euclidean ball. This coordinates transformation eases the integration in subsequent steps; (ii) since $r(\x{x})$ is small, we expand $\exp\left(\frac{1}{2}[E_{\x{\theta}}(\x{x}) - E_{\x{\theta}}(\x{y})]\right)$ to second order; (iii) similarly, we expand the soft connectivity function $\exp\left(-\frac{(\x{y}-\x{x})^\top H_\phi(\bar{\x{x}})\,(\x{y}-\x{x})}{2r(\x{x})^2}\right)$ to first order by first expanding $H_\phi(\bar{\x{x}})$ around $H_\phi(\x{x})$; (iv) by symmetry, all odd moments vanish, while the even moments contribute to terms in $\L^{\phi}(\x{\theta})$. After normalizing by $r(\x{x})^{d+2}$ and taking $r(\x{x}) \to 0$, we recover \autoref{eq:proxy-scoring-rule-in-theorem-with-constants}.
\end{proof}
\begin{proof}
    Observe that $r$ exists because $\Omega$ is open set. The objective with the weighting function is given by
    \begin{align*}
        \L^{\phi}_{r}(\x{\theta}) &= \E_{\x{x} \sim p}\left[\int_{\x{y} \in C^{\phi}_{r}(\x{x})}\underbrace{\exp\left(-\frac{(\x{y} - \x{x})^{\top}H_{\phi}(\bar{\x{x}})(\x{y} - \x{x})}{2r(\x{x})^{2}}\right)}_{w(\x{y}, \x{x})}\exp\left(\frac{1}{2}[E_{\x{\theta}}(\x{x}) - E_{\x{\theta}}(\x{y})]\right)\d\x{y}\right]
    \end{align*}
    We approximate the exponentials assuming $r(\x{x})$ is small i.e. $\x{y}$ is close to $\x{x}$. Here, we have $H_{\phi} : \R^{d} \to \R^{d \times d}$. For $\x{y}$ near to $\x{x}$, we approximate $H_{\phi}(\bar{\x{x}})$ by $H_{\phi}(\x{x})$. The $i$-th partial derivative is $\frac{\partial}{\partial x_{i}}H_{\phi}(\x{x}) \in \R^{d \times d}$. By first-order Taylor expansion,
    \begin{align*}
        H_{\phi}(\bar{\x{x}}) = H_{\phi}\left(\x{x} + \frac{\x{y} - \x{x}}{2}\right) \approx H_{\phi}(\x{x}) + \frac{1}{2}\sum_{k=1}^{d}\frac{\partial}{\partial x_{k}}H_{\phi}(\x{x})(y-x)_{k}
    \end{align*}
    where $(y-x)_{k}$ denotes the $k$-th coordinate of $\x{y}-\x{x}$. So
    \begin{align*}
        (\x{y}-\x{x})^{\top}H_{\phi}\left(\bar{\x{x}}\right)(\x{y}-\x{x}) &= (\x{y} - \x{x})^{\top}H_{\phi}(\x{x})(\x{y} - \x{x}) \\ 
        & \quad + \frac{1}{2}\sum_{k=1}^{d}\left((\x{y} - \x{x})^{\top}H_{\phi}(\x{x})(\x{y} - \x{x})\right) (y-x)_{k} + \O\left(\norm{\x{y} - \x{x}}^{4}\right)
    \end{align*}
    Substituting into the weighting function, we have
    \begin{align} \label{eq:smooth-connectivity-approx}
        w(\x{y}, \x{x}) &= \exp\left(-\frac{(\x{y}-\x{x})^{\top}H_{\phi}(\x{x})(\x{y}-\x{x})}{2r(\x{x})^{2}}\right) \times \nonumber \\
        &\quad \exp\left(- \frac{1}{4r(\x{x})^{2}}\sum_{k=1}^{d}\left((\x{y}-\x{x})^{\top}\frac{\partial}{\partial x_{k}}H_{\phi}(\x{x}) (\x{y}-\x{x})\right) (y-x)_{k}\right) \times \nonumber \\
        &\quad \exp\left(\frac{1}{r(\x{x})^{2}}\O\left(\norm{\x{y} - \x{x}}^{4}\right)\right) \nonumber \\
        &= \exp\left(-\frac{(\x{y}-\x{x})^{\top}H_{\phi}(\x{x})(\x{y}-\x{x})}{2r(\x{x})^{2}}\right) \times \nonumber \\
        &\quad \Bigg(1 - \frac{1}{4r(\x{x})^{2}}\sum_{k=1}^{d}\left((\x{y}-\x{x})^{\top}\frac{\partial}{\partial x_{k}}H_{\phi}(\x{x}) (\x{y}-\x{x})\right) (y-x)_{k} + \frac{1}{r(\x{x})^{4}}\O\left(\norm{\x{y} - \x{x}}^{6}\right)\Bigg)
    \end{align}
    where we performed a first-order Taylor expansion of $\exp{z}$. Using the fact that $\norm{\x{y} - \x{x}}$ is of order $r(\x{x})$, the second term in first equality is of order $r(\x{x})$ and third term is of the order $r(\x{x})^{2}$. As we are doing taylor expansion of first order (in terms of $r(\x{x})$), we can safely assume that third term is $0$ and exponential of that will be $1$. \\
    Following the same steps as proof of \autoref{theorem:modified-ed-to-gsm}, we perform a second-order taylor expansion of energy exponential term. So we have
    \begin{align} \label{eq:energy-function-approx}
        \exp\left(\frac{1}{2}[E_{\x{\theta}}(\x{x}) - E_{\x{\theta}}(\x{y})]\right) &= 1 - \frac{1}{2}\nabla E_{\x{\theta}}(\x{x})^{\top}(\x{y} - \x{x}) - \frac{1}{4}(\x{y} - \x{x})^{\top}H_{E_{\x{\theta}}}(\x{x})(\x{y} - \x{x}) + \nonumber \\
        &\quad \frac{1}{8}\left(\nabla E_{\x{\theta}}(\x{x})^{\top}(\x{y} - \x{x})\right)^{2} + \O\left(\norm{\x{y} - \x{x}}^{3}\right)
    \end{align}
    Using the approximations from \autoref{eq:smooth-connectivity-approx} and \autoref{eq:energy-function-approx} in \autoref{eq:modified-ed-objective-over-bregman-ball}, we obtain
    \begin{align*}
        I^{\phi}_{r}(\x{x}; \x{\theta}) &= \int_{\x{y} \in C^{\phi}_{r}(\x{x})} \exp\left(-\frac{(\x{y}-\x{x})^{\top}H_{\phi}(\x{x})(\x{y}-\x{x})}{2r(\x{x})^{2}}\right) \times \\
        &\quad \left(1 - \frac{1}{4r(\x{x})^{2}}\sum_{k=1}^{d}\left((\x{y}-\x{x})^{\top}\frac{\partial}{\partial x_{k}}H_{\phi}(\x{x}) (\x{y}-\x{x})\right) (y-x)_{k} + \frac{1}{r(\x{x})^{4}}\O\left(\norm{\x{y} - \x{x}}^{6}\right) \right) \times \\ 
        &\quad \Bigg(1 - \frac{1}{2}\nabla E_{\x{\theta}}(\x{x})^{\top}(\x{y}-\x{x}) - \frac{1}{4}(\x{y}-\x{x})^{\top}H_{E_{\x{\theta}}}(\x{x})(\x{y}-\x{x}) + \\
        &\quad\quad \frac{1}{8}\left(\nabla E_{\x{\theta}}(\x{x})^{\top}(\x{y}-\x{x})\right)^{2} + \O\left(\norm{\x{y} - \x{x}}^{3}\right)\Bigg)\d\x{y}
    \end{align*}
    We will evaluate the integral term by term. The order terms will be handled at the last. Applying the following change of variables
    \begin{align*}
        \x{u} = \frac{1}{r(\x{x})}H_{\phi}(\x{x})^{\frac{1}{2}}(\x{y} - \x{x}) \quad \text{ or equivalently } \quad \x{y} - \x{x} = r(\x{x})H_{\phi}(\x{x})^{-\frac{1}{2}}\x{u}
    \end{align*}
    Under this transformation, we have
    \begin{align*}
        (\x{y}-\x{x})^{\top}H_{\phi}(\x{x})(\x{y}-\x{x}) &= r(\x{x})^{2}\x{u}^{\top}H_{\phi}(\x{x})^{-\frac{1}{2}}H_{\phi}(\x{x})H_{\phi}(\x{x})^{-\frac{1}{2}}\x{u} \\
        &= r(\x{x})^{2}\x{u}^{\top}\x{u}.
    \end{align*}
    Thus, the Bregman ball $C^{\phi}_{r}(\x{x})$ transforms to unit Euclidean ball $B = \{\x{u} : \x{u}^{\top}\x{u} \leq 1\}$. Also, we have
    \begin{align*}
        \d\x{y} &= \abs{\det\left(r(\x{x})\left(H_{\phi}(\x{x})^{-\frac{1}{2}}\right)^{\top}\right)}\d\x{u} \\
        &= r(\x{x})^{d}\det(H_{\phi}(\x{x}))^{-\frac{1}{2}}\d\x{u}
    \end{align*}
    where we have used the fact that $H_{\phi}(\x{x})$ is symmetric and positive definite. We now evaluate the integral term by term. \\
        1. We have
        \begin{align*}
            I_{1} &= \int_{\x{y} \in C^{\phi}_{r}(\x{x})} \exp\left(-\frac{(\x{y} - \x{x})^{\top}H_{\phi}(\x{x})(\x{y} - \x{x})}{2r(\x{x})^{2}}\right) \d\x{y} \\
            &= C_{1}r(\x{x})^{d}\det(H_{\phi}(\x{x}))^{-\frac{1}{2}}
        \end{align*}
        where $C_{1} = \int_{\x{u} \in B} \exp\left(-\frac{1}{2}\x{u}^{\top}\x{u}\right)\d\x{u}$ is a constant.

        2. We have
        \begin{align*}
            I_{2} &= -\frac{1}{2}\int_{\x{y} \in C^{\phi}_{r}(\x{x})}\exp\left(-\frac{(\x{y}-\x{x})^{\top}H_{\phi}(\x{x})(\x{y}-\x{x})}{2r(\x{x})^{2}}\right) \nabla E_{\x{\theta}}(\x{x})^{\top}(\x{y}-\x{x})\d\x{y} \\
            &= -\frac{1}{2}r(\x{x})^{d+1}\det(H_{\phi}(\x{x}))^{-\frac{1}{2}}\int_{\x{u} \in B}\exp\left(-\frac{1}{2}\x{u}^{\top}\x{u}\right)\left(\nabla E_{\x{\theta}}(\x{x})^{\top}H_{\phi}(\x{x})^{-\frac{1}{2}}\right)\x{u} \d\x{u} \\
            &= -\frac{1}{2}r(\x{x})^{d+1}\det(H_{\phi}(\x{x}))^{-\frac{1}{2}}\sum_{i=1}^{d} \left(\nabla E_{\theta}(\x{u})^{\top}H_{\phi}(\x{x})\right)_{i}\int_{\x{u} \in B}u_{i}\d\x{u} \\
            &= 0
        \end{align*}
        The last steps uses \autoref{lemma-linear-exp-term-integral}.

        3. We have
        \begin{align*}
            I_{3} &= -\frac{1}{4}\int_{\x{y} \in C^{\phi}_{r}(\x{x})}\exp\left(-\frac{(\x{y} - \x{x})^{\top}H_{\phi}(\x{x})(\x{y} - \x{x})}{2r(\x{x})^{2}}\right)(\x{y} - \x{x})^{\top}H_{E_{\x{\theta}}}(\x{x})(\x{y} - \x{x})\d\x{y} \\
            &= -\frac{r(\x{x})^{d+2}}{4}\det(H_{\phi}(\x{x}))^{-\frac{1}{2}}\int_{\x{u} \in B}\exp\left(-\frac{1}{2}\x{u}^{\top}\x{u}\right)\x{u}^{\top}H_{\phi}(\x{x})^{-\frac{1}{2}}H_{E_{\x{\theta}}}(\x{x})H_{\phi}(\x{x})^{-\frac{1}{2}}\x{u}\d\x{u} \\
            &= -\frac{r(\x{x})^{d+2}}{4}\det(H_{\phi}(\x{x}))^{-\frac{1}{2}}\int_{\x{u} \in B}\exp\left(-\frac{1}{2}\x{u}^{\top}\x{u}\right)\tr\left(\x{u}^{\top}H_{\phi}(\x{x})^{-\frac{1}{2}}H_{E_{\x{\theta}}}(\x{x})H_{\phi}(\x{x})^{-\frac{1}{2}}\x{u}\right)\d\x{u} \\
            &= -\frac{r(\x{x})^{d+2}}{4}\det(H_{\phi}(\x{x}))^{-\frac{1}{2}}\int_{\x{u} \in B}\exp\left(-\frac{1}{2}\x{u}^{\top}\x{u}\right)\tr\left(H_{\phi}(\x{x})^{-\frac{1}{2}}\x{u}\x{u}^{\top}H_{\phi}(\x{x})^{-\frac{1}{2}}H_{E_{\x{\theta}}}(\x{x})\right)\d\x{u} \\
            &= -\frac{r(\x{x})^{d+2}}{4}\det(H_{\phi}(\x{x}))^{-\frac{1}{2}}\tr\left(H_{\phi}(\x{x})^{-\frac{1}{2}} \left(\int_{\x{u} \in B}\exp\left(-\frac{1}{2}\x{u}^{\top}\x{u}\right)\x{u}\x{u}^{\top}\d\x{u}\right) H_{\phi}(\x{x})^{-\frac{1}{2}}H_{E_{\x{\theta}}}(\x{x})\right) \\
            &= -\frac{C_{3}r(\x{x})^{d+2}}{4}\det(H_{\phi}(\x{x}))^{-\frac{1}{2}}\tr\left(H_{\phi}(\x{x})^{-1}H_{E_{\x{\theta}}}(\x{x})\right)
        \end{align*}
        where last step follows from \autoref{remark-outer-product-exp-term-integral}, with $C_{3} = \int_{\x{u} \in B}\exp\left(-\frac{1}{2}\x{u}^{\top}\x{u}\right)u_{i}^{2}\d\x{u}$.

        4. We have
        \begin{align*}
            I_{4} &= \frac{1}{8}\int_{\x{y} \in C^{\phi}_{r}(\x{x})}\exp\left(-\frac{(\x{y}-\x{x})^{\top}H_{\phi}(\x{x})(\x{y}-\x{x})}{2r(\x{x})^{2}}\right)\left(\nabla E_{\x{\theta}}(\x{x})^{\top}(\x{y}-\x{x})\right)^{2} \d\x{y} \\
            &= \frac{r(\x{x})^{d+2}}{8}\det(H_{\phi}(\x{x}))^{-\frac{1}{2}}\int_{\x{u} \in B}\exp\left(-\frac{1}{2}\x{u}^{\top}\x{u}\right)\nabla E_{\x{\theta}}(\x{x})^{\top}H_{\phi}(\x{x})^{-\frac{1}{2}}\x{u}\x{u}^{\top}H_{\phi}(\x{x})^{-\frac{1}{2}}\nabla E_{\x{\theta}}(\x{x}) \d\x{u} \\
            &= \frac{C_{3}r(\x{x})^{d+2}}{8}\det(H_{\phi}(\x{x}))^{-\frac{1}{2}} \nabla E_{\x{\theta}}(\x{x})^{\top}H_{\phi}(\x{x})^{-1}\nabla E_{\x{\theta}}(\x{x})
        \end{align*}
        where we apply \autoref{remark-outer-product-exp-term-integral} as in the previous step.

        5. We have
        \begin{align*}
            I_{5} &= -\frac{1}{4r(\x{x})^{2}} \int_{\x{y} \in C^{\phi}_{r}(\x{x})}\exp\left(-\frac{(\x{y}-\x{x})^{\top}H_{\phi}(\x{x})(\x{y}-\x{x})}{2r(\x{x})^{2}}\right) \left(\sum_{k=1}^{d}(\x{y}-\x{x})^{\top}\frac{\partial}{\partial x_{k}}H_{\phi}(\x{x})(\x{y}-\x{x})\right)(y-x)_{k} \d\x{y} \\
            &= -\frac{1}{4r(\x{x})^{2}} \sum_{k,i,j=1}^{d}\frac{\partial}{\partial x_{k}}H_{\phi}(\x{x})_{ij} \int_{\x{y} \in C^{\phi}_{r}(\x{x})}\exp\left(-\frac{(\x{y}-\x{x})^{\top}H_{\phi}(\x{x})(\x{y}-\x{x})}{2r(\x{x})^{2}}\right)(y-x)_{k}(y-x)_{i}(y-x)_{j} \d\x{y} \\
            &= -C_{5}\sum_{k,i,j=1}^{d}\frac{\partial}{\partial x_{k}}H_{\phi}(\x{x})_{ij}\sum_{p,q,s=1}^{d}H_{\phi}(\x{x})^{-\frac{1}{2}}_{kp}H_{\phi}(\x{x})^{-\frac{1}{2}}_{iq}H_{\phi}(\x{x})^{-\frac{1}{2}}_{js} \int_{\x{u} \in B}\exp\left(-\frac{1}{2}\x{u}^{\top}\x{u}\right)u_{p}u_{q}u_{s} \d\x{u} \\
            &= 0
        \end{align*}
        where $C_{5} = \frac{r(\x{x})^{d+1}\det(H_{\phi}(\x{x}))^{-\frac{1}{2}}}{4}$. The last step follows from \autoref{lemma-tri-exp-term-integral}.

        6. Denote $\frac{\partial}{\partial x_{k}}H_{\phi}(\x{x})_{ij} = \partial_{k}H_{\phi}(\x{x})_{ij}$. Then
        \begin{align*}
            I_{6} &= \frac{1}{8r(\x{x})^{2}} \int_{\x{y} \in C^{\phi}_{r}(\x{x})}\exp\left(-\frac{(\x{y}-\x{x})^{\top}H_{\phi}(\x{x})(\x{y}-\x{x})}{2r(\x{x})^{2}}\right) \times \\
            & \quad \Bigg(\sum_{k=1}^{d}(y-x)_{k}\left((\x{y}-\x{x})^{\top}\frac{\partial}{\partial x_{k}}H_{\phi}(\x{x})(\x{y}-\x{x})\right) \left(\nabla E_{\x{\theta}}(\x{x})^{\top}(\x{y}-\x{x})\right)\Bigg) \d\x{y} \\
            &= \frac{1}{8r(\x{x})^{2}}\sum_{k,i,j,\ell=1}^{d}\nabla E_{\x{\theta}}(\x{x})_{\ell}\partial_{k}H_{\phi}(\x{x})_{ij} \times \\
            & \quad \Bigg( \int_{\x{y} \in C^{\phi}_{r}(\x{x})}\exp\left(-\frac{(\x{y}-\x{x})^{\top}H_{\phi}(\x{x})(\x{y}-\x{x})}{2r(\x{x})^{2}}\right)(y-x)_{k}(y-x)_{i}(y-x)_{j}(y-x)_{\ell} \d\x{y} \Bigg) \\
            &= C'_{6}\sum_{k,i,j,\ell=1}^{d}\nabla E_{\x{\theta}}(\x{x})_{\ell}\partial_{k}H_{\phi}(\x{x})_{ij} \times \\
            & \quad \Bigg(\underbrace{\sum_{p,q,t,s=1}^{d}H_{\phi}(\x{x})^{-\frac{1}{2}}_{kp}H_{\phi}(\x{x})^{-\frac{1}{2}}_{iq}H_{\phi}(\x{x})^{-\frac{1}{2}}_{jt}H_{\phi}(\x{x})^{-\frac{1}{2}}_{\ell s}\int\limits_{\x{u} \in B}\exp\left(-\frac{1}{2}\x{u}^{\top}\x{u}\right)u_{p}u_{q}u_{t}u_{s}\d\x{u}\Bigg)}_{T(\x{x})}
        \end{align*}
        where $C'_{6} = \frac{r(\x{x})^{d+2}\det(H_{\phi}(\x{x}))^{-\frac{1}{2}}}{8}$. Using the result of \autoref{lemma-quartic-term-exp-integral}, where $C_{6} = \frac{1}{3}\int_{\x{u} \in B}\exp\left(-\frac{1}{2}\x{u}^{\top}\x{u}\right)u_{i}^{4}\d\x{u}$, we have
        \begin{align*}
            T(\x{x}) &= C_{6} \sum_{p,q,t,s=1}^{d}H_{\phi}(\x{x})^{-\frac{1}{2}}_{kp}H_{\phi}(\x{x})^{-\frac{1}{2}}_{iq}H_{\phi}(\x{x})^{-\frac{1}{2}}_{jt}H_{\phi}(\x{x})^{-\frac{1}{2}}_{\ell s}\left(\delta_{pq}\delta_{ts} + \delta_{pt}\delta_{qs} + \delta_{ps}\delta_{qt}\right) \\
            &= C_{6}\sum_{p=1}^{d}H_{\phi}(\x{x})^{-\frac{1}{2}}_{kp}H_{\phi}(\x{x})^{-\frac{1}{2}}_{ip}\sum_{t=1}^{d}H_{\phi}(\x{x})^{-\frac{1}{2}}_{jt}H_{\phi}(\x{x})^{-\frac{1}{2}}_{\ell t} \\
            &\quad + C_{6} \sum_{p=1}^{d}H_{\phi}(\x{x})^{-\frac{1}{2}}_{kp}H_{\phi}(\x{x})^{-\frac{1}{2}}_{jp}\sum_{q=1}^{d}H_{\phi}(\x{x})^{-\frac{1}{2}}_{iq}H_{\phi}(\x{x})^{-\frac{1}{2}}_{\ell q} \\
            &\quad + C_{6} \sum_{p=1}^{d}H_{\phi}(\x{x})^{-\frac{1}{2}}_{kp}H_{\phi}(\x{x})^{-\frac{1}{2}}_{\ell p}\sum_{q=1}^{d}H_{\phi}(\x{x})^{-\frac{1}{2}}_{iq}H_{\phi}(\x{x})^{-\frac{1}{2}}_{jq}
        \end{align*}
        Since $H_{\phi}(\x{x})$ is a symmetric matrix, we have $\sum_{p=1}^{d}H_{\phi}(\x{x})^{-\frac{1}{2}}_{kp}H_{\phi}(\x{x})^{-\frac{1}{2}}_{ip} = H_{\phi}(\x{x})^{-1}_{ki}$. Applying this identity to all index pairs yields
        \begin{align*}
            T(\x{x}) = C_{6}\left(H_{\phi}(\x{x})^{-1}_{ki}H_{\phi}(\x{x})^{-1}_{j\ell} + H_{\phi}(\x{x})^{-1}_{kj}H_{\phi}(\x{x})^{-1}_{i\ell} + H_{\phi}(\x{x})^{-1}_{k\ell}H_{\phi}(\x{x})^{-1}_{ij}\right)
        \end{align*}
        Substituting $T(\x{x})$ back into $I_{6}$, we obtain
        \begin{align} \label{eq:integral-term-6}
            I_{6} =  C'_{6}C_{6}\sum_{k,i,j,\ell=1}^{d}\nabla E_{\x{\theta}}(\x{x})_{\ell}\partial_{k}H_{\phi}(\x{x})_{ij}\left(H_{\phi}(\x{x})^{-1}_{ki}H_{\phi}(\x{x})^{-1}_{j\ell} + H_{\phi}(\x{x})^{-1}_{kj}H_{\phi}(\x{x})^{-1}_{i\ell} + H_{\phi}(\x{x})^{-1}_{k\ell}H_{\phi}(\x{x})^{-1}_{ij}\right)
        \end{align}
        Using the definition given in \autoref{eq:definition-of-Gphi} and matrix derivative identities (cf. equations C.21, C.22 from Appendix C in \citet{bishop2007}), we have
        \begin{align*}
            \frac{\partial}{\partial x_{k}}G_{\phi}(\x{x}) &= \frac{1}{\det(H_{\phi}(\x{x}))^{\frac{1}{2}}}\frac{\partial}{\partial x_{k}}H_{\phi}(\x{x})^{-1} + \frac{\partial e^{-\frac{1}{2}\log \det(H_{\phi}(\x{x}))}}{\partial x_{k}}H_{\phi}(\x{x})^{-1} \\
            &= -\frac{1}{\det(H_{\phi}(\x{x}))^{\frac{1}{2}}}H_{\phi}(\x{x})^{-1}\partial_{k}H_{\phi}(\x{x})H_{\phi}(\x{x})^{-1} - \frac{1}{2\det(H_{\phi}(\x{x}))^{\frac{1}{2}}}\tr\left(H_{\phi}(\x{x})^{-1}\partial_{k}H_{\phi}(\x{x})\right)H_{\phi}(\x{x})^{-1}
        \end{align*}
        Define
        \begin{align*}
            \mathcal{R} &= \sum_{\ell=1}^{d}\nabla E_{\x{\theta}}(\x{x})_{\ell}\sum_{k=1}^{d}\frac{\partial}{\partial x_{k}}G_{\phi}(\x{x})_{k\ell} \\
            &= \frac{-1}{\det(H_{\phi}(\x{x}))^{\frac{1}{2}}}\sum_{\ell=1}^{d}\nabla E_{\x{\theta}}(\x{x})_{\ell}\sum_{k=1}^{d}\left(\left(H_{\phi}(\x{x})^{-1}\partial_{k}H_{\phi}(\x{x})H_{\phi}(\x{x})^{-1}\right)_{k\ell} + \frac{1}{2}\tr\left(H_{\phi}(\x{x})^{-1}\partial_{k}H_{\phi}(\x{x})\right)H_{\phi}(\x{x})^{-1}_{k\ell}\right) \\
            &= \frac{-1}{2\det(H_{\phi}(\x{x}))^{\frac{1}{2}}}\sum_{\ell=1}^{d}\nabla E_{\x{\theta}}(\x{x})_{\ell}\sum_{k=1}^{d}\left(2\left(H_{\phi}(\x{x})^{-1}\partial_{k}H_{\phi}(\x{x})H_{\phi}(\x{x})^{-1}\right)_{k\ell} + \tr\left(H_{\phi}(\x{x})^{-1}\partial_{k}H_{\phi}(\x{x})\right)H_{\phi}(\x{x})^{-1}_{k\ell}\right) \\
            &= \frac{-1}{2\det(H_{\phi}(\x{x}))^{\frac{1}{2}}}\sum_{\ell=1}^{d}\nabla E_{\x{\theta}}(\x{x})_{\ell}\sum_{k=1}^{d}\bigg(\left(H_{\phi}(\x{x})^{-1}\partial_{k}H_{\phi}(\x{x})H_{\phi}(\x{x})^{-1}\right)_{k\ell} + \left(H_{\phi}(\x{x})^{-1}\partial_{k}H_{\phi}(\x{x})H_{\phi}(\x{x})^{-1}\right)_{\ell k} \\
            &\quad + \tr\left(H_{\phi}(\x{x})^{-1}\partial_{k}H_{\phi}(\x{x})\right)H_{\phi}(\x{x})^{-1}_{k\ell}\bigg) \\
            &= \frac{-1}{2\det(H_{\phi}(\x{x}))^{\frac{1}{2}}}\sum_{\ell,k,i,j=1}^{d}\nabla E_{\x{\theta}}(\x{x})_{\ell}\partial_{k}H_{\phi}(\x{x})_{ij}\Bigg(H_{\phi}^{-1}(\x{x})_{ki}H_{\phi}^{-1}(\x{x})_{j\ell} + H_{\phi}(\x{x})^{-1}_{\ell i}H_{\phi}(\x{x})^{-1}_{jk} \\
            & \quad + H_{\phi}(\x{x})^{-1}_{k\ell}H_{\phi}(\x{x})^{-1}_{ji}\Bigg)
        \end{align*}
        Comparing with \autoref{eq:integral-term-6}, we obtain
        \begin{align*}
            \mathcal{R} = -\frac{1}{2\det(H_{\phi}(\x{x}))^{\frac{1}{2}}}\frac{I_{6}}{C_{6}C_{6}'} \implies I_{6} = -2C_{6}C_{6}'\det(H_{\phi}(\x{x}))^{\frac{1}{2}}\sum_{\ell=1}^{d}\nabla E_{\x{\theta}}(\x{x})_{\ell}\sum_{k=1}^{d}\frac{\partial}{\partial x_{k}}\left(\frac{1}{\det(H_{\phi}(\x{x}))^{\frac{1}{2}}}H_{\phi}(\x{x})^{-1}\right)_{k\ell}
        \end{align*}
        Substituting $C_{6}' = \frac{r(\x{x})^{d+2}\det(H_{\phi}(\x{x}))^{-\frac{1}{2}}}{8}$, we have
        \begin{align*}
            I_{6} = -\frac{C_{6}r(\x{x})^{d+2}}{4}\sum_{\ell=1}^{d}\nabla E_{\x{\theta}}(\x{x})_{\ell}\sum_{k=1}^{d}\frac{\partial}{\partial x_{k}}\left(\frac{1}{\det(H_{\phi}(\x{x}))^{\frac{1}{2}}}H_{\phi}(\x{x})^{-1}\right)_{k\ell}
        \end{align*}

        7. We have
        \begin{align*}
            I_{7} &= \frac{1}{16r(\x{x})^{2}}\int_{\x{y} \in C^{\phi}_{r}(\x{x})}\exp\left(-\frac{(\x{y}-\x{x})^{\top}H_{\phi}(\x{x})(\x{y}-\x{x})}{2r(\x{x})^{2}}\right) \times \\
            &\quad \Bigg(\sum_{k=1}^{d}\left((\x{y}-\x{x})^{\top}\frac{\partial}{\partial x_{k}}H_{\phi}(\x{x})(\x{y}-\x{x})\right)(y-x)_{k} (\x{y}-\x{x})^{\top}H_{E_{\x{\theta}}}(\x{y}-\x{x})\Bigg) \d\x{y} \\
            &= \frac{1}{16r(\x{x})^{2}}\sum_{k,i,j,a,b=1}^{d}H_{E_{\x{\theta}}}(\x{x})_{ab}H_{\phi}(\x{x})_{ij}\\ 
            & \quad \underbrace{\int_{\x{y} \in C^{\phi}_{r}(\x{x})}\exp\left(-\frac{(\x{y}-\x{x})^{\top}H_{\phi}(\x{x})(\x{y}-\x{x})}{2r(\x{x})^{2}}\right)(y-x)_{k}(y-x)_{i}(y-x)_{j}(y-x)_{a}(y-x)_{b} \d\x{y}}_{Q(\x{x})}
        \end{align*}
        Applying the usual change of variables, we have
        \begin{align*}
            Q(\x{x}) &= \int\limits_{\x{y} \in C^{\phi}_{r}(\x{x})}\exp\left(-\frac{(\x{y}-\x{x})^{\top}H_{\phi}(\x{x})(\x{y}-\x{x})}{2r(\x{x})^{2}}\right)(y-x)_{k}(y-x)_{i}(y-x)_{j}(y-x)_{a}(y-x)_{b} \d\x{y} \\
            &= C_{7}'\sum_{p,q,r,s,t=1}^{d}H_{\phi}(\x{x})^{-\frac{1}{2}}_{kp}H_{\phi}(\x{x})^{-\frac{1}{2}}_{iq}H_{\phi}(\x{x})^{-\frac{1}{2}}_{jr}H_{\phi}(\x{x})^{-\frac{1}{2}}_{as}H_{\phi}(\x{x})^{-\frac{1}{2}}_{bt} \int_{\x{u} \in B}\exp\left(-\frac{1}{2}\x{u}^{\top}\x{u}\right)u_{p}u_{q}u_{r}u_{s}u_{t} \d\x{u}
        \end{align*}
        where $C_{7}' = r(\x{x})^{d+5}\det(H_{\phi}(\x{x}))^{-\frac{1}{2}}$. Regardless of the values of $p, q, r, s, t$, an odd power of at least one $u$ coordinate will remain, concluding that $Q(\x{x}) = 0$, hence $I_{7} = 0$.

        8. The result for $I_{8}$ is analogous to $I_{7}$ (similar to the relationship between steps 3 and 4). We have
        \begin{align*}
            I_{8} &= -\frac{1}{32r(\x{x})^{2}}\int_{\x{y} \in C^{\phi}_{r}(\x{x})}\exp\left(-\frac{(\x{y}-\x{x})^{\top}H_{\phi}(\x{x})(\x{y}-\x{x})}{2r(\x{x})^{2}}\right) \\
            & \qquad\qquad\qquad \Bigg(\sum_{k=1}^{d}\left((\x{y}-\x{x})^{\top}\frac{\partial}{\partial x_{k}}H_{\phi}(\x{x})(\x{y}-\x{x})\right)(y-x)_{k} \left(\nabla E_{\x{\theta}}(\x{x})^{\top}(\x{y}-\x{x})\right)^{2} \Bigg) \d\x{y} \\
            &= 0
        \end{align*}
        Since this calculation is very similar to step 7, we omit the detailed steps.

    So our integral without the order terms is
    \begin{align*}
        & \frac{1}{\det(H_{\phi}(\x{x}))^{\frac{1}{2}}}\left(C_{1}r(\x{x})^{d} - \frac{C_{3}r(\x{x})^{d+2}}{4}\tr\left(H_{\phi}(\x{x})^{-1}H_{E_{\x{\theta}}}(x)\right) + \frac{C_{3}r(\x{x})^{d+2}}{8}\nabla E_{\x{\theta}}(\x{x})^{\top}H_{\phi}(\x{x})^{-1}\nabla E_{\x{\theta}}(\x{x})\right) \\
        &\quad - \frac{C_{6}r(\x{x})^{d+2}}{4}\sum_{\ell=1}^{d}\nabla E_{\x{\theta}}(\x{x})_{\ell}\sum_{k=1}^{d}\frac{\partial}{\partial x_{k}}\left(\frac{1}{\det(H_{\phi}(\x{x}))^{\frac{1}{2}}}H_{\phi}(\x{x})^{-1}\right)_{k\ell}
    \end{align*}
    The order term involves the following integral
    \begin{align*}
        & \int_{\x{y} \in C^{\phi}_{r}(\x{x})} \exp\left(-\frac{(\x{y}-\x{x})^{\top}H_{\phi}(\x{x})(\x{y}-\x{x})}{2r(\x{x})^{2}}\right) \times \frac{1}{r(\x{x})^{4}}\O\left(\norm{\x{y} - \x{x}}^{6}\right) \times \\
        & \quad \Bigg(1 - \frac{1}{2}\nabla E_{\x{\theta}}(\x{x})^{\top}(\x{y}-\x{x}) - \frac{1}{4}(\x{y}-\x{x})^{\top}H_{E_{\x{\theta}}}(\x{x})(\x{y}-\x{x}) + \frac{1}{8}\left(\nabla E_{\x{\theta}}(\x{x})^{\top}(\x{y}-\x{x})\right)^{2} + \O\left(\norm{\x{y} - \x{x}}^{3}\right)\Bigg)\d\x{y}
    \end{align*}
    The odd powers will go to $0$, so the above integrals yields the terms
    \begin{align*}
        \frac{1}{\det(H_{\phi}(\x{x}))^{\frac{1}{2}}}\left(D_{1}r(\x{x})^{d+2} + D_{3}(\x{\theta})r(\x{x})^{d+4} + D_{4}(\x{\theta})r(\x{x})^{d+4}\right)
    \end{align*}
    where $D_{1}$ is constant and $D_{3},D_{4}$ are functions of $\x{\theta}$. Since our expansion retains terms only up to order $r(\x{x})^{d+2}$, we neglect the higher-order terms involving $D_{3}$ and $D_{4}$, and retain only the contribution of $D_{1}$. So we have
    \begin{align*}
        I^{\phi}_{r}(\x{x}; \x{\theta}) &= \frac{1}{\det(H_{\phi}(\x{x}))^{\frac{1}{2}}}\Bigg(C_{1}r(\x{x})^{d} - \frac{C_{3}r(\x{x})^{d+2}}{4}\tr\left(H_{\phi}(\x{x})^{-1}H_{E_{\x{\theta}}}(x)\right) + \\
        &\quad \frac{C_{3}r(\x{x})^{d+2}}{8}\nabla E_{\x{\theta}}(\x{x})^{\top}H_{\phi}(\x{x})^{-1}\nabla E_{\x{\theta}}(\x{x})\Bigg) + \frac{r(\x{x})^{d+2}D_{1}}{\det(H_{\phi}(\x{x}))^{\frac{1}{2}}} \\
        & \quad - \frac{C_{6}r(\x{x})^{d+2}}{4}\sum_{\ell=1}^{d}\nabla E_{\x{\theta}}(\x{x})_{\ell}\sum_{k=1}^{d}\frac{\partial}{\partial x_{k}}\left(\frac{1}{\det(H_{\phi}(\x{x}))^{\frac{1}{2}}}H_{\phi}(\x{x})^{-1}\right)_{k\ell}
    \end{align*}
    Removing terms that are not dependent on $\x{\theta}$ (terms involving $C_{1}, D_{1}$), the lowest order is $r(\x{x})^{d+2}$. So we consider the following limit
    \begin{align*}
        \E_{r \to 0}\left[\frac{I^{\phi}_{r}(\x{x}; \x{\theta})}{(r(\x{x})^{d+2}/4)}\right]
    \end{align*}
    One can construct function $r$ such that $r(\x{x})$ is bounded for all $\x{x} \in \Omega$ (see the \autoref{remark:radius-function-bounded} for more details). Consequently, the integrand is dominated by an integrable function, where integrability follows from the assumptions of the theorem. Therefore, by the dominated convergence theorem, the limit and expectation can be interchanged. The objective then becomes
    \begin{align*} 
        &\underset{\x{x} \sim p}{\E}\Bigg[\frac{C_{3}}{2}\nabla E_{\x{\theta}}(\x{x})^{\top}G_{\phi}(\x{x})\nabla E_{\x{\theta}}(\x{x}) - C_{3}\tr\left(G_{\phi}(\x{x})H_{E_{\x{\theta}}}(\x{x})\right) - C_{6}\nabla E_{\x{\theta}}(\x{x})^{\top}(\nabla \cdot G_{\phi})(\x{x})\Bigg]
    \end{align*}
    Minimizing the above objective is equivalent to minimizing the objective in \autoref{eq:proxy-scoring-rule-in-theorem-with-constants} with $\lambda = \frac{C_{6}}{C_{3}}$.
\end{proof}
\begin{remark} \label{remark:radius-function-bounded}
    Since $\Omega$ is an open set, for every $\x{x} \in \Omega$, there exists a radius function $r'$ such that $C^{\phi}_{r'}(\x{x}) \subset \Omega$. We now show that one can choose such a radius function to be uniformly bounded. Let $B > 0$ be any fixed constant, and define $r(\x{x}) = \min\{r'(\x{x}), B\}$. By construction, $r(\x{x}) \leq B$ for all $\x{x} \in \Omega$. Moreover, since reducing the radius can only shrink the corresponding set, we have $C^{\phi}_{r}(\x{x}) \subset C^{\phi}_{r'}(\x{x}) \subset \Omega$. Thus, $r$ is a uniformly bounded radius function satisfying the required containment condition.
\end{remark}
\begin{remark} \label{remark:main-objective-without-lambda-argument}
    The constant $\lambda$ in \autoref{eq:proxy-scoring-rule-in-theorem-with-constants} depends on the choice of the weighting function used in the local construction. In particular, consider the alternative weighting
    \begin{align*}
        w(\x{y}, \x{x}) &= \text{exp}\left(-\frac{(\x{y} - \x{x})^{\top}H_{\phi}(\x{x})(\x{y} - \x{x})}{2r(\x{x})^{2}}\right) \times \\
        &\quad \left(1 - \frac{C_{3}}{4r(\x{x})^{2}C_{6}}\sum_{k=1}^{d}\left((\x{y} - \x{x})^{\top}\frac{\partial}{\partial x_{k}}H_{\phi}(\x{x})(\x{y} - \x{x})\right)(y - x)_{k}\right).
    \end{align*}
    For this alternative weighting, the weighting function itself does not need to be expanded. Instead, we expand the integrand appearing in the MPF objective and proceed with the remainder of the proof. The resulting in this case gives a coefficient of $1$ for $\nabla E_{\x{\theta}}(\x{x})^{\top}(\nabla \cdot G_{\phi})(\x{x})$, and hence $\lambda=1$. Moreover, $\lambda=1$ is precisely the value for which the resulting objective is a proper second-order scoring rule, as established in \autoref{prop:gsm-proper-scoring-rule}.
\end{remark}
\section{Proof of \autoref{prop:main-objective-equi-to-gsm-special-case} and Its Extension to Unbounded Sets} \label{appendix:proof-main-objective-equi-to-gsm-special-case}
\subsection{Proof}
The proof follows by applying integration by parts on \autoref{eq:main-objective-in-compact-form}.
\begin{proof}
    For the energy-based model, we have $E_{\x{\theta}}(\x{x}) = -\log p_{\x{\theta}}(\x{x}) - \log \Z(\x{\theta})$, which yields
    \begin{align*}
        \nabla E_{\x{\theta}}(\x{x}) = - \nabla \log p_{\x{\theta}}(\x{x}) \quad \text{and} \quad H_{E_{\x{\theta}}}(\x{x}) = -H_{\log p_{\x{\theta}}}(\x{x})
    \end{align*}
    Rewriting \autoref{eq:main-objective-in-compact-form} in terms of $\log p_{\x{\theta}}$, we obtain
    \begin{align} \label{eq-main-objective-in-model-density-summation-form}
        \L^{\phi}(\x{\theta}) &= \E_{\x{x} \sim p}\left[\frac{1}{2}\nabla \log p_{\x{\theta}}(\x{x})^{\top}G_{\phi}(\x{x})\nabla \log p_{\x{\theta}}(\x{x}) + \tr\left(G_{\phi}(\x{x})H_{\log p_{\x{\theta}}}(x)\right) + \sum_{k=1}^{d}\sum_{\ell=1}^{d}\nabla \log p_{\x{\theta}}(\x{x})_{\ell}\frac{\partial}{\partial x_{k}}G_{\phi}(\x{x})_{k\ell}\right] \nonumber \\
        &= \E_{\x{x} \sim p}\left[\frac{1}{2}\nabla\log p_{\x{\theta}}(\x{x})^{\top}G_{\phi}(\x{x})\nabla \log p_{\x{\theta}}(\x{x})\right] + \sum_{k=1}^{d}\sum_{\ell=1}^{d}\int_{\x{x} \in \Omega}G_{\phi}(\x{x})_{k\ell}H_{\log p_{\x{\theta}}}(\x{x})_{k\ell}p(\x{x})\d\x{x} \nonumber \\
        & \quad + \underbrace{\sum_{k=1}^{d}\sum_{\ell=1}^{d}\int_{\x{x} \in \Omega}p(\x{x})\nabla \log p_{\x{\theta}}(\x{x})_{\ell}\frac{\partial}{\partial x_{k}}G_{\phi}(\x{x})_{k\ell}\d\x{x}}_{\mathcal{C}(\x{x})}
    \end{align}
    Denote $\d s$ is surface element on $\partial\Omega$, consider
    \begin{align*}
        \mathcal{C}(\x{x}) &= \sum_{k=1}^{d}\sum_{\ell=1}^{d}\int_{\x{x} \in \Omega}p(\x{x})\nabla \log p_{\x{\theta}}(\x{x})_{\ell}\frac{\partial}{\partial x_{k}}G_{\phi}(\x{x})_{k\ell}\d\x{x} \\
        &= \sum_{k=1}^{d}\sum_{\ell=1}^{d}\int_{\x{x} \in \partial\Omega} p(\x{x})\nabla \log p_{\x{\theta}}(\x{x})_{\ell}G_{\phi}(\x{x})_{k\ell}n_{k}(\x{x})\d s- \int_{\x{x} \in \Omega} G_{\phi}(\x{x})_{k\ell}\frac{\partial}{\partial x_{k}}(p(\x{x})\nabla \log p_{\x{\theta}}(\x{x})_{\ell})\d\x{x} \\
        &= - \sum_{k=1}^{d}\sum_{\ell=1}^{d}\int_{\x{x} \in \Omega}G_{\phi}(\x{x})_{k\ell}\frac{\partial}{\partial x_{k}}\left(p(\x{x})\nabla \log p_{\x{\theta}}(\x{x})_{\ell}\right)\d\x{x} \\
        &= -\sum_{k=1}^{d}\sum_{\ell=1}^{d}\int_{\x{x} \in \Omega}\nabla p(\x{x})_{k}G_{\phi}(\x{x})_{k\ell}\nabla \log p_{\x{\theta}}(\x{x})_{\ell}\d\x{x} - \sum_{k=1}^{d}\sum_{\ell=1}^{d}\int_{\x{x} \in \Omega}G_{\phi}(\x{x})_{k\ell}H_{\log p_{\x{\theta}}}(\x{x})_{k\ell}p(\x{x})\d\x{x} \\
        &= -\sum_{k=1}^{d}\sum_{\ell=1}^{d}\int_{\x{x} \in \Omega}\nabla \log p(\x{x})_{k}G_{\phi}(\x{x})_{k\ell}\nabla \log p_{\x{\theta}}(\x{x})_{\ell}p(\x{x})\d\x{x} - \sum_{k=1}^{d}\sum_{\ell=1}^{d}\int_{\x{x} \in \Omega}G_{\phi}(\x{x})_{k\ell}H_{\log p_{\x{\theta}}}(\x{x})_{k\ell}p(\x{x})\d\x{x} \nonumber \\
        &= -E_{\x{x} \sim p}\left[\nabla \log p_{\x{\theta}}(\x{x})^{\top}G_{\phi}(\x{x})\nabla \log p(\x{x})\right] - \sum_{k=1}^{d}\sum_{\ell=1}^{d}\int_{\x{x} \in \Omega}G_{\phi}(\x{x})_{k\ell}H_{\log p_{\x{\theta}}}(\x{x})_{k\ell}p(\x{x})\d\x{x}
    \end{align*}
    where in the second and third step we have used integration by parts along with the regularity condition given in \autoref{eq:regularity-conditions-for-bounded-convex-set}. Substituting this in \autoref{eq-main-objective-in-model-density-summation-form} yields
    \begin{align*}
        \L^{\phi}(\x{\theta}) &= \E_{\x{x} \sim p}\left[\frac{1}{2}\nabla\log p_{\x{\theta}}(\x{x})^{\top}G_{\phi}(\x{x})\nabla \log p_{\x{\theta}}(\x{x}) - \nabla \log p_{\x{\theta}}(\x{x})^{\top}G_{\phi}(\x{x})\nabla \log p(\x{x})\right]
    \end{align*}
    Observe that minimizing $\L^{\phi}(\x{\theta})$ with respect to $\x{\theta}$ is equivalent to minimizing
    \begin{align*}
        \L^{\phi}(\x{\theta}) + \E_{\x{x} \sim p}\left[\frac{1}{2}\nabla \log p(\x{x})^{\top}G_{\phi}(\x{x})\nabla \log p(\x{x})\right]
    \end{align*}
    since the added term does not depend on $\x{\theta}$. Therefore, we have
    \begin{align*}
        \L^{\phi}_{GSM}(\x{\theta}) = \L^{\phi}(\x{\theta}) + \E_{\x{x} \sim p}\left[\frac{1}{2}\nabla \log p(\x{x})^{\top}G_{\phi}(\x{x})\nabla \log p(\x{x})\right] = \frac{1}{2}\E_{\x{x} \sim p}\left[\norm{\nabla \log p_{\x{\theta}}(\x{x}) - \nabla \log p(\x{x})}^{2}_{G_{\phi}(\x{x})}\right]
    \end{align*}
    where the last equality follows from expanding the weighted norm. This completes the proof.
\end{proof}
\subsection{Treatment of Unbounded sets}
In this section, we address how to handle the case where the set $\Omega$ is unbounded. The difficulty is that the integration by parts argument used above may no longer be valid in this setting. To overcome this issue, we impose additional regularity conditions beyond those stated in \autoref{prop:main-objective-equi-to-gsm-special-case}. Consider a function $f : [0, \infty) \to \R_{+}$ defined as
\begin{align*}
    f(x) = \begin{cases}
        1 & \text{if } x \leq 1 \\
        1 - \frac{\text{exp}\left(-\frac{1}{x - 1}\right)}{\text{exp}(-\frac{1}{x - 1}) + \text{exp}\left(-\frac{1}{2 - x}\right)} & \text{ if } 1 < x < 2 \\
        0 & \text{if } x \geq 2
    \end{cases}
\end{align*}
The function $f$ is flat in the regions $\leq 1$ and $\geq 2$ and infinitely smooth in $(1, 2)$. Using this, define $\rho_{n} : \R^{d} \to \R_{+}$ as
\begin{align*}
    \rho_{n}(\x{x}) = \begin{cases}
        1 & \text{if } \norm{\x{x}} \leq n \\
        f(\norm{\x{x}} - n + 1) & \text{ if } \norm{\x{x}} > n
    \end{cases}
\end{align*}
where $n \in \mathbb{N}$. Observe that $\rho_{n}$ is equal to $1$ inside the ball of radius $n$, and equal to $0$ outside the ball of radius $n+1$. In the intermediate region, it transitions smoothly from $1$ to $0$. For any $k, \ell \in \{1,2,\dots,d\}$, assume that the following quantities
\begin{enumerate}
    \item $\int_{\x{x} \in \Omega}\abs{\nabla \log p_{\x{\theta}}(\x{x})_{\ell}\frac{\partial}{\partial x_{k}}G_{\phi}(\x{x})_{k\ell}}p(\x{x})\d\x{x}$
    \item $\int_{\x{x} \in \partial \Omega}\abs{\nabla \log p_{\x{\theta}}(\x{x})_{\ell}G_{\phi}(\x{x})_{k\ell}n_{k}(\x{x})}p(\x{x})\d s$
    \item $\int_{\x{x} \in \Omega}\abs{\nabla p(\x{x})_{k}G_{\phi}(\x{x})_{k\ell}\nabla \log p_{\x{\theta}}(\x{x})_{\ell}}\d\x{x}$
    \item $\int_{\x{x} \in \Omega}\abs{G_{\phi}(\x{x})_{k\ell}H_{\log p_{\theta}}(\x{x})_{k\ell}}p(\x{x})\d\x{x}$
    \item $\int_{\x{x} \in \Omega}\abs{G_{\phi}(\x{x})_{k\ell}\nabla \log p_{\x{\theta}}(\x{x})_{\ell}}p(\x{x})\d\x{x}$
\end{enumerate}
are finite and \autoref{eq:regularity-conditions-for-bounded-convex-set} holds. Using assumptions $3, 4$, we infer that
\begin{align*}
    \int_{\x{x} \in \Omega}\abs{G_{\phi}(\x{x})_{k\ell}\frac{\partial}{\partial x_{k}}\left(p(\x{x})\nabla \log p_{\x{\theta}}(\x{x})_{\ell}\right)}\d\x{x} &\leq \int_{\x{x} \in \Omega}\abs{\nabla p(\x{x})_{k}G_{\phi}(\x{x})_{k\ell}\nabla \log p_{\x{\theta}}(\x{x})_{\ell}}\d\x{x}\nonumber\\
    &+ \int_{\x{x} \in \Omega}\abs{G_{\phi}(\x{x})_{k\ell}H_{\log p_{\theta}}(\x{x})_{k\ell}}p(\x{x})\d\x{x} \\
    &< \infty
\end{align*}
Then for any $k, \ell$ and $n \in \mathbb{N}$, we have
\begin{align*}
    \E_{\x{x} \sim p}\left[\abs{\rho_{n}(\x{x})\nabla \log p_{\x{\theta}}(\x{x})_{\ell}\frac{\partial}{\partial x_{k}}G_{\phi}(\x{x})_{k\ell}}\right] &\leq \E_{\x{x} \sim p}\left[\abs{\nabla \log p_{\x{\theta}}(\x{x})_{\ell}\frac{\partial}{\partial x_{k}}G_{\phi}(\x{x})_{k\ell}}\right] \\
    &< \infty
\end{align*}
Consider
\begin{align*}
    \lim_{n \to \infty} \int_{\x{x} \in \Omega} \rho_{n}(\x{x})p(\x{x})\nabla \log p_{\x{\theta}}(\x{x})_{\ell}\frac{\partial}{\partial x_{k}}G_{\phi}(\x{x})_{k\ell}\d\x{x} &= \int_{\x{x} \in \Omega} \lim_{n \to \infty} \rho_{n}(\x{x})p(\x{x})\nabla \log p_{\x{\theta}}(\x{x})_{\ell}\frac{\partial}{\partial x_{k}}G_{\phi}(\x{x})_{k\ell}\d\x{x} \\
    &= \int_{\x{x} \in \Omega}p(\x{x})\nabla \log p_{\x{\theta}}(\x{x})_{\ell}\frac{\partial}{\partial x_{k}}G_{\phi}(\x{x})_{k\ell}\d\x{x}
\end{align*}
which is exactly equal $\mathcal{C}(\x{x})$ from the proof of \autoref{prop:main-objective-equi-to-gsm-special-case}. The first step is due to dominated convergence theorem and second step is using the fact that $\lim_{n \to \infty} \rho_{n}(\x{x}) = 1$ for any $\x{x} \in \R^{d}$. Each integral on left hand side can be written
\begin{align*}
    \int_{\x{x} \in \Omega} \rho_{n}(\x{x})p(\x{x})\nabla \log p_{\x{\theta}}(\x{x})_{\ell}\frac{\partial}{\partial x_{k}}G_{\phi}(\x{x})_{k\ell}\d\x{x} &= \int_{\x{x} \in \Omega \cap \bar{B}_{n+1}}\rho_{n}(\x{x})p(\x{x})\nabla \log p_{\x{\theta}}(\x{x})_{\ell}\frac{\partial}{\partial x_{k}}G_{\phi}(\x{x})_{k\ell}\d\x{x}
\end{align*}
where $\bar{B}_{n+1}$ is closure of open ball centered at origin of radius $n + 1$. As $\Omega \cap \bar{B}_{n+1}$ is convex and bounded, we can apply integration by parts for each individual integral. Knowing this, we have
\begin{align*}
    \int_{x \in \Omega}p(\x{x})\nabla \log p_{\x{\theta}}(\x{x})_{\ell}\frac{\partial}{\partial x_{k}}G_{\phi}(\x{x})_{k\ell}\d\x{x} &= \lim_{n \to \infty} \int_{\x{x} \in \Omega} \rho_{n}(\x{x})p(\x{x})\nabla \log p_{\x{\theta}}(\x{x})_{\ell}\frac{\partial}{\partial x_{k}}G_{\phi}(\x{x})_{k\ell}\d\x{x} \\
    &= \lim_{n \to \infty}\int_{\x{x} \in \partial\Omega} \rho_{n}(\x{x})p(\x{x})\nabla \log p_{\x{\theta}}(\x{x})_{\ell}G_{\phi}(\x{x})_{k\ell}n_{k}(\x{x})\d s \\
    &\quad - \lim_{n \to \infty} \int_{\x{x} \in \Omega}G_{\phi}(\x{x})_{k\ell} \frac{\partial}{\partial x_{k}}(\rho_{n}(\x{x})p(\x{x})\nabla \log p_{\x{\theta}}(\x{x})_{\ell})\d \x{x} \\
    &= \lim_{n \to \infty} \int_{\x{x} \in \partial\Omega} \rho_{n}(\x{x})p(\x{x})\nabla \log p_{\x{\theta}}(\x{x})_{\ell}G_{\phi}(\x{x})_{k\ell}n_{k}(\x{x})\d s \\
    &\quad - \lim_{n \to \infty} \int_{\x{x} \in \Omega}\rho_{n}(\x{x})G_{\phi}(\x{x})_{k\ell} \frac{\partial}{\partial x_{k}}(p(\x{x})\nabla \log p_{\x{\theta}}(\x{x})_{\ell})\d \x{x} \\
    &\quad - \lim_{n \to \infty} \int_{\x{x} \in \Omega}p(\x{x})\nabla \log p_{\x{\theta}}(\x{x})_{\ell} G_{\phi}(\x{x})_{k\ell} \frac{\partial}{\partial x_{k}}\rho_{n}(\x{x})\d \x{x} \\
    &= \int_{\x{x} \in \partial\Omega} \lim_{n \to \infty}\rho_{n}(\x{x})p(\x{x})\nabla \log p_{\x{\theta}}(\x{x})_{\ell}G_{\phi}(\x{x})_{k\ell}n_{k}(\x{x})\d s \\
    &\quad - \int_{\x{x} \in \Omega}\lim_{n \to \infty} \rho_{n}(\x{x})G_{\phi}(\x{x})_{k\ell} \frac{\partial}{\partial x_{k}}(p(\x{x})\nabla \log p_{\x{\theta}}(\x{x})_{\ell})\d \x{x} \\
    &\quad - \int_{\x{x} \in \Omega}\lim_{n \to \infty} p(\x{x})\nabla \log p_{\x{\theta}}(\x{x})_{\ell} G_{\phi}(\x{x})_{k\ell} \frac{\partial}{\partial x_{k}}\rho_{n}(\x{x})\d \x{x}
\end{align*}
where all the interchanging of limits and integral is valid due to dominated convergence theorem. For last term which has $\frac{\partial}{\partial x_{k}}\rho_{n}(\x{x})$, interchange is possible because $\rho \in C^{\infty}$ and it's first derivative is continuous over a compact set implying that it is bounded. As $\lim_{n \to \infty} \frac{\partial}{\partial x_{k}}\rho_{n}(\x{x}) = 0$ and first term is $0$ due to assumption of \autoref{eq:regularity-conditions-for-bounded-convex-set}, we have
\begin{align*}
    \int_{\x{x} \in \Omega}p(\x{x})\nabla \log p_{\x{\theta}}(\x{x})_{\ell}\frac{\partial}{\partial x_{k}}G_{\phi}(\x{x})_{k\ell}\d\x{x} &= - \int_{\x{x} \in \Omega}\lim_{n \to \infty} \rho_{n}(\x{x})G_{\phi}(\x{x})_{k\ell} \frac{\partial}{\partial x_{k}}(p(\x{x})\nabla \log p_{\x{\theta}}(\x{x})_{\ell})\d \x{x} \\
    &= - \int_{\x{x} \in \Omega} G_{\phi}(\x{x})_{k\ell} \frac{\partial}{\partial x_{k}}(p(\x{x})\nabla \log p_{\x{\theta}}(\x{x})_{\ell})\d \x{x}
\end{align*}
One can now follow same proof as bounded case to arrive at completing the squares argument.
\begin{remark}
    The treatment of unbounded domains requires stronger technical assumptions and may not be directly applicable in all practical settings. For structured spaces such as $\Omega = \R^{d}_{+}$, one can instead follow the approach of \citet{generalized_score}, where suitable conditions are imposed to justify the use of Fubini-Tonelli theorem. Adopting analogous assumptions in our setting would likewise allow us to carry out the {\it completing-the-squares} argument. With this remark, we aim to convey that when additional structure is available on set $\Omega$, the assumptions imposed here can be relaxed.
\end{remark}
\section{Proof of \autoref{prop-linear-operator-is-complete}} \label{appendix:proof-prop-linear-operator-is-complete}
\begin{proof}
    This follows as $G_{\phi}(\x{x})^{\frac{1}{2}}\nabla \log p_{1}(\x{x}) = G_{\phi}(\x{x})^{\frac{1}{2}}\nabla \log p_{2}(\x{x})$ implies $\nabla \log p_{1}(\x{x}) = \nabla \log p_{2}(\x{x})$ almost everywhere. This is same as $\nabla \log \frac{p_{1}(\x{x})}{p_{2}(\x{x})} = 0$. As $\Omega$ is convex and convex subsets are connected in $\R^{d}$, we conclude that $\frac{p_{1}(\x{x})}{p_{2}(\x{x})} = c$ almost everywhere for some constant $c$. As both $p_{1}, p_{2}$ are densities, we conclude that $p_{1}(\x{x}) = p_{2}(\x{x})$ almost everywhere.
\end{proof}
\section{Generalization of \autoref{theorem:modified-ed-to-gsm} and \autoref{theorem:modified-ed-to-sm-convex}} \label{appendix:generalization-of-main-theorems}
In this section, we derive the GSM objective from a more general starting objective than the one considered in the main text (\autoref{eq-mpf-objective}). Motivated by the discussion in \cref{appendix:sec-cond-nce-discussion}, we propose the following objective as the starting point
\begin{align} \label{eq:general-objective-f-convex}
    \L_{f}(\x{\theta}) &= \underset{\x{x} \sim p}{\E}\left[\int g(\x{y}, \x{x})\left(-f'\left(\frac{p_{\x{\theta}}(\x{x})}{p_{\x{\theta}}(\x{y})}\right) + \frac{p_{\x{\theta}}(\x{y})}{p_{\x{\theta}}(\x{x})}f'\left(\frac{p_{\x{\theta}}(\x{y})}{p_{\x{\theta}}(\x{x})}\right) - f\left(\frac{p_{\x{\theta}}(\x{y})}{p_{\x{\theta}}(\x{x})}\right)\right)\d\x{y}\right]
\end{align}
where $f : \R_{\geq 0} \to \R$ is strictly convex function. Choosing $f(z) = -\sqrt{z}$ recovers the MPF objective (\autoref{eq-mpf-objective}). Further more, if the connectivity function $g(\x{y}, \x{x})$ is chosen as conditional density $q(\x{y} \mid \x{x})$ and satisfies the symmetry condition $q(\x{y} \mid \x{x}) = q(\x{x} \mid \x{y})$, then the above objective coincides with \autoref{eq:nce-formulation-symmetric-channel}. We first establish a supporting lemma and then present the corresponding generalizations.
\begin{lemma} \label{lemma:taylor-expansion-new-objective}
    Consider the term
    \begin{align*}
        h(\x{\theta}) = -f'\left(\frac{p_{\x{\theta}}(\x{x})}{p_{\x{\theta}}(\x{y})}\right) + \frac{p_{\x{\theta}}(\x{y})}{p_{\x{\theta}}(\x{x})}f'\left(\frac{p_{\x{\theta}}(\x{y})}{p_{\x{\theta}}(\x{x})}\right) - f\left(\frac{p_{\x{\theta}}(\x{y})}{p_{\x{\theta}}(\x{x})}\right)
    \end{align*}
    For $\x{y}$ in a sufficiently small neighborhood of $\x{x}$, the second order Taylor expansion of $h$ around $\x{y}$ is given by
    \begin{align*}
        -f(1) + f''(1)\left(\frac{1}{2}\left(\nabla E_{\x{\theta}}(\x{x})^{\top}\x{\delta}\right)^{2} - \x{\delta}^{\top}H_{E_{\x{\theta}}}(\x{x})\x{\delta} - 2\nabla E_{\x{\theta}}(\x{x})^{\top}\x{\delta}\right) + \O\left(\norm{\x{\delta}}^{3}\right)
    \end{align*}
    where $\x{\delta} = \x{y} - \x{x}$.
\end{lemma}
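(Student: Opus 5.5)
The plan is to reduce $h$ to a function of a single scalar variable, Taylor expand that function to second order, and then substitute the second-order expansion of the energy difference that was already used in the proofs of \autoref{theorem:modified-ed-to-gsm} and \autoref{theorem:modified-ed-to-sm-convex}. Since $p_{\x{\theta}} \propto \exp(-E_{\x{\theta}})$, the partition function cancels in both ratios. Setting $a := E_{\x{\theta}}(\x{y}) - E_{\x{\theta}}(\x{x})$, we get $\frac{p_{\x{\theta}}(\x{y})}{p_{\x{\theta}}(\x{x})} = e^{-a}$ and $\frac{p_{\x{\theta}}(\x{x})}{p_{\x{\theta}}(\x{y})} = e^{a}$. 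Therefore $h = F(a)$, where
\begin{align*}
    F(t) = -f'(e^{t}) + e^{-t}f'(e^{-t}) - f(e^{-t}).
\end{align*}
I will assume $f \in C^{3}$ near $1$, which is implicit in writing $f''(1)$ and a remainder of order $\norm{\x{\delta}}^{3}$. I also read ``around $\x{y}$'' as expansion in $\x{\delta}$ about $\x{\delta} = \x{0}$, that is, $\x{y} = \x{x}$.

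\emph{Step 1.} Compute $F(0)$, $F'(0)$ and $F''(0)$. First, $F(0) = -f'(1) + f'(1) - f(1) = -f(1)$. Differentiating, the $f'$ terms cancel and
\begin{align*}
    F'(t) = -e^{t}f''(e^{t}) - e^{-2t}f''(e^{-t}),
\end{align*}
so $F'(0) = -2f''(1)$. Differentiating once more,
\begin{align*}
    F''(t) = -e^{2t}f'''(e^{t}) - e^{t}f''(e^{t}) + 2e^{-2t}f''(e^{-t}) + e^{-3t}f'''(e^{-t}).
\end{align*}
At $t = 0$ the $f'''(1)$ terms cancel, leaving $F''(0) = f''(1)$. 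Hence $F(a) = -f(1) - 2f''(1)\,a + \tfrac12 f''(1)\,a^{2} + \O(a^{3})$.

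\emph{Step 2.} Substitute the expansion $a = \nabla E_{\x{\theta}}(\x{x})^{\top}\x{\delta} + \tfrac12\x{\delta}^{\top}H_{E_{\x{\theta}}}(\x{x})\x{\delta} + \O(\norm{\x{\delta}}^{3})$, valid because $E_{\x{\theta}} \in C^{2}$ (as in the proof of \autoref{theorem:modified-ed-to-gsm}). Since $a = \O(\norm{\x{\delta}})$, we have $\O(a^{3}) = \O(\norm{\x{\delta}}^{3})$ and $a^{2} = (\nabla E_{\x{\theta}}(\x{x})^{\top}\x{\delta})^{2} + \O(\norm{\x{\delta}}^{3})$.

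Collecting terms gives $-2f''(1)$ times the linear term, $-f''(1)\,\x{\delta}^{\top}H_{E_{\x{\theta}}}(\x{x})\x{\delta}$ from the Hessian piece of $-2f''(1)a$, and $\tfrac12 f''(1)(\nabla E_{\x{\theta}}(\x{x})^{\top}\x{\delta})^{2}$ from the quadratic term. This is exactly the claimed expression.

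The only delicate point is the second derivative in Step 1. One must check that the $f'''(1)$ contributions cancel, so that the expansion depends on $f$ only through $f(1)$ and $f''(1)$. One must also track the chain-rule factors $e^{\pm t}$ carefully; everything else is bookkeeping. As a sanity check, for $f(z) = -\sqrt{z}$ we get $-f(1) = 1$ and $f''(1) = \tfrac14$. The expansion then reproduces the MPF expansion $1 - \tfrac12\nabla E^{\top}\x{\delta} - \tfrac14\x{\delta}^{\top}H\x{\delta} + \tfrac18(\nabla E^{\top}\x{\delta})^{2}$ from \autoref{theorem:modified-ed-to-gsm}, which confirms the constants.
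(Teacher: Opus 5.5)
Your proof is correct, but it is organized differently from the paper's. The paper sets $\rho_{\x{\theta}}(\x{x},\x{y}) = \exp(E_{\x{\theta}}(\x{y}) - E_{\x{\theta}}(\x{x}))$ and expands the three composite terms $f(\rho_{\x{\theta}}(\x{y},\x{x}))$, $f'(\rho_{\x{\theta}}(\x{y},\x{x}))$ and $f'(\rho_{\x{\theta}}(\x{x},\x{y}))$ separately in $\x{\delta}$. It carries the $f'(1)$ and $f'''(1)$ terms through each expansion. It then leaves to the reader the product $\rho_{\x{\theta}}(\x{y},\x{x})\, f'(\rho_{\x{\theta}}(\x{y},\x{x}))$ and the final collection, where all the cancellations happen at once.

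You instead observe that $h$ depends on $\x{\delta}$ only through the scalar $a$, so all the $f$-dependent work is done once by one-variable calculus. The $f'$ terms already cancel in $F'$, the $f'''(1)$ terms cancel in $F''(0)$, and a single substitution of the expansion of $a$ finishes. This avoids multiplying two second-order expansions and makes it transparent why only $f(1)$ and $f''(1)$ survive. It also yields the more compact intermediate form $h = -f(1) - 2f''(1)\,a + \tfrac12 f''(1)\,a^{2} + \O(a^{3})$. Your check with $f(z) = -\sqrt{z}$ against the MPF expansion is a useful consistency test of the constants.

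One small correction to your regularity remark. $f \in C^{3}$ gives only $F \in C^{2}$, hence an $o(a^{2})$ remainder. Likewise, $E_{\x{\theta}} \in C^{2}$ gives only an $o(\norm{\x{\delta}}^{2})$ remainder in the expansion of $a$. A genuine $\O(\norm{\x{\delta}}^{3})$ needs, e.g., $f \in C^{4}$ near $1$ and $E_{\x{\theta}} \in C^{3}$. The paper's proof makes the same implicit assumptions (it expands $f'$ to second order with an $\O(\norm{\x{\delta}}^{3})$ remainder), so this does not separate your argument from theirs.
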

\begin{proof}
    Define $\rho_{\x{\theta}}(\x{x}, \x{y}) = \frac{p_{\x{\theta}}(\x{x})}{p_{\x{\theta}}(\x{y})} = \exp\left(E_{\x{\theta}}(\x{y}) - E_{\x{\theta}}(\x{x})\right)$. Then
    \begin{align*}
        h(\x{\theta}) &= -f'(\rho_{\x{\theta}}(\x{x}, \x{y})) + \rho_{\x{\theta}}(\x{y}, \x{x})f'(\rho_{\x{\theta}}(\x{y}, \x{x})) - f(\rho_{\x{\theta}}(\x{y}, \x{x}))
    \end{align*}
    We expand each term separately around $\x{y}$. First,
    \begin{align*}
        f(\rho_{\x{\theta}}(\x{y}, \x{x})) &= f(1) - f'(1)\nabla E_{\x{\theta}}(\x{x})^{\top}\x{\delta} + \frac{1}{2}\left(f'(1) + f''(1)\right)\left(\nabla E_{\x{\theta}}(\x{x})^{\top}\x{\delta}\right)^{2} \\
        &\quad - \frac{1}{2}f'(1)\x{\delta}^{\top}H_{E_{\x{\theta}}}(\x{x})\x{\delta} + \O\left(\norm{\x{\delta}}^{3}\right)
    \end{align*}
    Similarly
    \begin{align*}
        f'(\rho_{\x{\theta}}(\x{y}, \x{x})) &= f'(1) - f''(1)\nabla E_{\x{\theta}}(\x{x})^{\top}\x{\delta} + \frac{1}{2}\left(f''(1) + f'''(1)\right)\left(\nabla E_{\x{\theta}}(\x{x})^{\top}\x{\delta}\right)^{2} \\ 
        & \quad - \frac{1}{2}f''(1)\x{\delta}^{\top}H_{E_{\x{\theta}}}(\x{x})\x{\delta} + \O\left(\norm{\x{\delta}}^{3}\right)
    \end{align*}
    and
    \begin{align*}
        f'(\rho_{\x{\theta}}(\x{x}, \x{y})) &= f'(1) + f''(1) \nabla E_{\x{\theta}}(\x{x})^{\top}\x{\delta} + \frac{1}{2}\left(f''(1) + f'''(1)\right)\left(\nabla E_{\x{\theta}}(\x{x})^{\top}\x{\delta}\right)^{2} \\ 
        & \quad + \frac{1}{2}f''(1)\x{\delta}^{\top}H_{E_{\x{\theta}}}(\x{x})\x{\delta} + \O\left(\norm{\x{\delta}}^{3}\right)
    \end{align*}
    Substituting the above expansions into the definition of $h(\x{\theta})$ and collecting terms up to second order yields the desired expression.
\end{proof}
We now state the following generalization of \autoref{theorem:modified-ed-to-gsm}.
\begin{theorem} \label{theorem:gsm-convex}
    Suppose the assumption of \autoref{theorem:modified-ed-to-gsm} hold. Choosing $g(\x{y}, \x{x})$ to be conditional density $q_{\varepsilon}(\x{y} \mid \x{x})$ in \autoref{eq:general-objective-f-convex} defined as
    \begin{align*}
        q_{\varepsilon}(\x{y} \mid \x{x}) = \dfrac{1}{(2\pi)^{d/2}\sqrt{\det \varepsilon D(\xx)}}\exp\left(-\frac{\norm{\y-\xx-\frac{\varepsilon}{2} b(\xx)}^{2}_{D(\xx)^{-1}}}{2\varepsilon}\right)
    \end{align*}
    where $\varepsilon > 0$. Then for very small $\varepsilon$, the objective can be written as
    \begin{align*}
        \L_{f}(\x{\theta}) &= -f(1) + \varepsilon f''(1)\underset{\x{x} \sim p}{\E}\Bigg[\frac{1}{2}\nabla E_{\x{\theta}}(\x{x})^{\top}D(\x{x})\nabla E_{\x{\theta}}(\x{x}) -  \tr\left(D(\x{x})H_{E_{\x{\theta}}}(\x{x})\right) \\
        &\quad - \nabla E_{\x{\theta}}(\x{x})^{\top}(\nabla \cdot D)(\x{x})\Bigg] + \O\left(\varepsilon^{\frac{3}{2}}\right)
    \end{align*}
\end{theorem}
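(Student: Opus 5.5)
The plan mirrors the proof of \autoref{theorem:modified-ed-to-gsm}, with the exponential integrand replaced by the quantity $h(\x{\theta})$ from \autoref{lemma:taylor-expansion-new-objective}. Fix $\x{x}$ and write the inner integral as
\begin{align*}
    \int q_{\varepsilon}(\x{y}\mid\x{x})\,h(\x{\theta})\,\d\x{y} = \E_{\x{\delta}}\left[h(\x{\theta})\right],
\end{align*}
where $\x{\delta} = \x{y}-\x{x} \sim \N\left(\frac{\varepsilon}{2}b(\x{x}), \varepsilon D(\x{x})\right)$. Since $q_\varepsilon$ concentrates at scale $\varepsilon^{1/2}$, the second-order expansion of \autoref{lemma:taylor-expansion-new-objective} applies, and its remainder satisfies $\O(\norm{\x{\delta}}^3)=\O(\varepsilon^{3/2})$ after integration. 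Gaussian moments of every order are finite, so this bound holds in expectation.

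Substituting the expansion and integrating term by term requires only the moments already used in the proof of \autoref{theorem:modified-ed-to-gsm}:
\begin{align*}
    \E[\x{\delta}]=\frac{\varepsilon}{2}(\nabla\cdot D)(\x{x}), \qquad \E[\x{\delta}\x{\delta}^\top]=\varepsilon D(\x{x})+\O(\varepsilon^2).
\end{align*}
The constant term integrates to $-f(1)$, because $q_\varepsilon$ is a density. The quadratic terms give $f''(1)\varepsilon\left(\frac12\nabla E_{\x{\theta}}^\top D\nabla E_{\x{\theta}} - \tr(DH_{E_{\x{\theta}}})\right)$, using the trace identities $\x{\delta}^\top A\x{\delta}=\tr(A\x{\delta}\x{\delta}^\top)$ and the cyclic property. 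The linear term gives $-2f''(1)\nabla E_{\x{\theta}}^\top\frac{\varepsilon}{2}(\nabla\cdot D) = -\varepsilon f''(1)\nabla E_{\x{\theta}}^\top(\nabla\cdot D)$. Collecting these and taking $\E_{\x{x}\sim p}$ yields the claimed expression, with the finiteness assumptions of \autoref{theorem:modified-ed-to-gsm} guaranteeing that each expectation exists.

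The main obstacle is bookkeeping in the Taylor coefficients of \autoref{lemma:taylor-expansion-new-objective}. The drift $\frac{\varepsilon}{2}b$ must produce exactly coefficient $1$ on the divergence term, and it does only because the linear coefficient in the lemma is $-2f''(1)$, twice the quadratic one. I would re-derive that coefficient explicitly to be sure. As a sanity check, the case $f(z)=-\sqrt z$ (so $f''(1)=1/4$) should reproduce $\varepsilon/4$ times the integrand of \autoref{eq:main-objective-in-compact-form-gsm}, consistent with \autoref{theorem:modified-ed-to-gsm}.

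A secondary issue is making the $\O(\varepsilon^{3/2})$ remainder uniform enough in $\x{x}$ to pass under the outer expectation. I would assume, as the earlier proof implicitly does, local bounds on the third derivatives of $E_{\x{\theta}}$ and on $f'''$ near $1$. The Gaussian tails, where $\x{\delta}$ is not small, contribute an exponentially small amount, which is absorbed into the remainder.
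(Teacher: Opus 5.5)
Your proposal is correct and matches the paper's intended argument: the paper omits this proof, saying it follows by substituting the expansion of \autoref{lemma:taylor-expansion-new-objective} into the Gaussian-moment computation of \autoref{theorem:modified-ed-to-gsm}, and your coefficient bookkeeping, including the $f''(1)=1/4$ consistency check, is right. One caution, which applies equally to the paper: the claim that the Gaussian tails contribute negligibly needs a growth assumption on $E_{\x{\theta}}$ and $f$, since for example with $f(z)=z^{2}$ and a quartic energy the inner $\x{y}$-integral diverges for every $\varepsilon>0$.
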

For \autoref{theorem:modified-ed-to-sm-convex}, we consider a hard neighborhood case same as in main text. In this case, the objective would be
\begin{align} \label{eq:general-objective-f-convex-hard-neigh}
    \L^{\phi}_{f, r}(\x{\theta}) &= \underset{\x{x} \sim p}{\E}\left[\int_{C^{\phi}_{r}(\x{x})} w(\x{y}, \x{x})\left(-f'\left(\frac{p_{\x{\theta}}(\x{x})}{p_{\x{\theta}}(\x{y})}\right) + \frac{p_{\x{\theta}}(\x{y})}{p_{\x{\theta}}(\x{x})}f'\left(\frac{p_{\x{\theta}}(\x{y})}{p_{\x{\theta}}(\x{x})}\right) - f\left(\frac{p_{\x{\theta}}(\x{y})}{p_{\x{\theta}}(\x{x})}\right)\right)\d\x{y}\right]
\end{align}
where $C^{\phi}_{r}(\x{x})$ is defined as in \autoref{eq:local-bregman-ball}. We now state the following generalization
\begin{theorem} \label{theorem:gsm-convex-general}
    Suppose the assumption of \autoref{theorem:modified-ed-to-sm-convex} hold. Define $\bar{\x{x}}=(\x{x}+\x{y})/2$ and consider the weighting function
    \begin{align*}
        w(\x{y},\x{x}) = \exp\left(-\frac{(\x{y}-\x{x})^\top H_\phi(\bar{\x{x}})\,(\x{y}-\x{x})}{2r(\x{x})^2}\right)
    \end{align*}
    in \autoref{eq:general-objective-f-convex-hard-neigh}. Then for very small $r$, the objective can be written as
    \begin{align*}
        \L_{f}(\x{\theta}) &= -Cf(1) + f''(1)\underset{\x{x} \sim p}{\E}\Bigg[r(\x{x})^{d+2}\Big(\frac{C_{3}}{2}\nabla E_{\x{\theta}}(\x{x})^{\top}G_{\phi}(\x{x})\nabla E_{\x{\theta}}(\x{x}) \\
        &\quad - C_{3}\tr\left(G_{\phi}(\x{x})H_{E_{\x{\theta}}}(\x{x})\right) - C_{6}\nabla E_{\x{\theta}}(\x{x})^{\top}(\nabla \cdot G_{\phi}(\x{x}))\Big) + \O\left(r(\x{x})^{d+4}\right)\Bigg]
    \end{align*}
    where $G_{\phi}$ is defined as in \autoref{eq:definition-of-Gphi}  and $C, C_{3}, C_{6}$ are positive constants.
\end{theorem}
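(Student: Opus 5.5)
The plan is to reuse the computation from the proof of \autoref{theorem:modified-ed-to-sm-convex} almost verbatim. The only change is that the MPF integrand $\exp(\frac12[E_{\x{\theta}}(\x{x})-E_{\x{\theta}}(\x{y})])$ is replaced by the second-order expansion of $h$ from \autoref{lemma:taylor-expansion-new-objective}. Writing $\x{\delta}=\x{y}-\x{x}$, that lemma gives
\begin{align*}
h = -f(1) + f''(1)\Big(-2\nabla E_{\x{\theta}}(\x{x})^{\top}\x{\delta} - \x{\delta}^{\top}H_{E_{\x{\theta}}}(\x{x})\x{\delta} + \tfrac12\big(\nabla E_{\x{\theta}}(\x{x})^{\top}\x{\delta}\big)^{2}\Big) + \O(\norm{\x{\delta}}^{3}).
\end{align*}
Compare this with \autoref{eq:energy-function-approx}, which has $1 - \frac12\nabla E^{\top}\x{\delta} - \frac14\x{\delta}^{\top}H_E\x{\delta} + \frac18(\nabla E^{\top}\x{\delta})^2$. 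The new bracket is exactly $4f''(1)$ times the old non-constant part, and the constant $1$ is replaced by $-f(1)$. This linear correspondence is the structural fact the whole argument rests on.

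The key steps are as follows.
\begin{enumerate}
\item Expand the weight $w$ exactly as in \autoref{eq:smooth-connectivity-approx}, i.e.\ as the Gaussian factor in $H_\phi(\x{x})$ times $(1 - \frac{1}{4r^2}\sum_k \x{\delta}^{\top}\partial_k H_\phi(\x{x})\x{\delta}\,\delta_k + \dots)$.
\item Apply the change of variables $\x{u}=r(\x{x})^{-1}H_\phi(\x{x})^{1/2}\x{\delta}$, which maps $C^{\phi}_{r}(\x{x})$ to the unit ball $B$ and produces the Jacobian factor $r^d\det H_\phi^{-1/2}$.
\item Multiply the two expansions and integrate term by term, using \autoref{lemma-linear-exp-term-integral}, \autoref{lemma-tri-exp-term-integral}, \autoref{remark-outer-product-exp-term-integral} and \autoref{lemma-quartic-term-exp-integral}.
\end{enumerate}

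By linearity of the integral, each term scales from the earlier computation.
\begin{enumerate}
\item The constant $-f(1)$ against the Gaussian gives $-f(1)C_1 r^d\det H_\phi^{-1/2}$. Together with the $\x{\theta}$-free $D_1$-type correction this is the term $-Cf(1)$, where the $r$-dependent prefactor is absorbed into $C$, as the statement does loosely.
\item The odd terms ($I_2,I_5,I_7,I_8$ analogues) vanish by symmetry.
\item The Hessian and squared-gradient terms become $4f''(1)$ times $I_3$ and $I_4$. This gives $f''(1)r^{d+2}\big(\frac{C_3}{2}\nabla E^{\top}G_\phi\nabla E - C_3\tr(G_\phi H_E)\big)$ after absorbing $\det H_\phi^{-1/2}H_\phi^{-1}=G_\phi$.
\item The cross term between the linear part $-2f''(1)\nabla E^{\top}\x{\delta}$ and the cubic correction in $w$ is $4f''(1)$ times $I_6$. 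Reusing the identification of $\mathcal{R}$ with $\nabla E^{\top}(\nabla\cdot G_\phi)$ via the matrix-derivative identities gives $-f''(1)C_6 r^{d+2}\nabla E^{\top}(\nabla\cdot G_\phi)$.
\end{enumerate}

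Next come the remainders. The $\O(\norm{\x{\delta}}^3)$ remainder of $h$ and the $\O(\norm{\x{\delta}}^6)/r^4$ remainder of $w$ produce only odd moments, which vanish, or terms of order $r^{d+4}$. These are collected into $\O(r(\x{x})^{d+4})$. Taking expectation over $\x{x}\sim p$ then yields the stated form. The finiteness assumptions inherited from \autoref{theorem:modified-ed-to-sm-convex}, together with a bounded $r$ (\autoref{remark:radius-function-bounded}), justify integrating the expansion.

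The main obstacle is bookkeeping rather than ideas. First, one must check that the constant $-f(1)$ times the $\O(r^2)$ correction of $w$ contributes only $\x{\theta}$-independent terms. Second, one must verify the $4f''(1)$ rescaling on each surviving term, in particular the $I_6$ analogue, where the cubic weight correction multiplies the linear term with coefficient $-2f''(1)$ instead of $-\frac12$. I would verify this carefully, and then conclude that the $\lambda=C_6/C_3$ ratio is preserved.
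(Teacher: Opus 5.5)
Your proposal matches the paper's own argument, which the paper omits: substitute the expansion of \autoref{lemma:taylor-expansion-new-objective} for \autoref{eq:energy-function-approx} in the proof of \autoref{theorem:modified-ed-to-sm-convex}. Your observation that the non-constant part of $h$ is exactly $4f''(1)$ times that of the MPF integrand, with $1$ replaced by $-f(1)$, is precisely why $C_3$, $C_6$, and hence $\lambda=C_6/C_3$, carry over unchanged. One caveat, which the paper shares: killing the remainders by parity to get $\O(r(\x{x})^{d+4})$ rather than merely $o(r(\x{x})^{d+2})$ requires expanding one order further into explicit odd/even polynomials, and that needs more smoothness (e.g.\ $E_{\x{\theta}}\in C^4$, $\phi\in C^5$, $f$ smooth near $1$) than the $C^2$/$C^3$ hypotheses inherited from \autoref{theorem:modified-ed-to-sm-convex}.
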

\begin{remark}
    Both of the above generalizations closely parallel Theorem~3.3 in \citet{unified-nce} and were inspired by that result. In particular, Theorem~3.3 derives the score matching, whereas our generalizations extend the same perspective to the generalized score matching.
\end{remark}
\begin{remark}
    We omit the proofs of \autoref{theorem:gsm-convex} and \autoref{theorem:gsm-convex-general}, as they follow by replacing the Taylor expansions of $\sqrt{\frac{p_{\x{\theta}}(\x{y})}{p_{\x{\theta}}(\x{x})}}$ used in the proofs of \autoref{theorem:modified-ed-to-gsm} and \autoref{theorem:modified-ed-to-sm-convex} with the expansion given in \autoref{lemma:taylor-expansion-new-objective}. The similarity of the arguments is further reflected in the fact that the constants $C_{3}$ and $C_{6}$ appearing in \autoref{theorem:gsm-convex-general} are identical to those arising in the proof of \autoref{theorem:modified-ed-to-sm-convex}.
\end{remark}

\section{Proper Scoring Rules of the Second Order} \label{appendix:proper-scoring-rules}
\subsection{Relevance}
Proper scoring rules ensure that minimizing the score matching loss would lead to the model density matching the true data density almost everywhere. This property is important in several applications, for instance, forecasting, where the use of an improper scoring rule can lead to an incorrect assessment of extreme events, which can result in inaccurate predictions~\cite{forecaster-lerch}, and generative modeling, where \citet{bortoli2025distributional} demonstrate that a carefully constructed scoring rule can be leveraged to accelerate sampling. Specifically, in~\cref{appendix:subsec-exponential-dist}, we consider the problem of estimating the rate parameter $\lambda$ of an exponential random variable defined over the domain $\R_{+}$, and show that the original score matching objective yields the estimate $\hat{\lambda} = 0$. This is clearly inaccurate because the original score matching loss is not a proper scoring rule in this setting. We proceed to show that convex functions that are appropriately defined on this domain lead to better estimates of $\lambda$. Moreover, the relevance of second-order locality, beyond properness, is three-fold. \citet{parry2012proper} show that  proper local scoring rules of odd order do not exist, and that the log-likelihood is the unique proper local rule of order $0$. Therefore, order $2$ is the minimal non-trivial order for a proper scoring rule. They also establish that all proper local scoring rules of order $\geq 2$ can be evaluated without the normalising constant. This explains why score matching circumvents the partition function. Finally, \citet{parry2012proper} and \cite{local-proper-scoring-rule-ehm} provide a complete characterisation of all second-order local proper scoring rules via a generating matrix $G(\x{x})$.

\subsection{Proof of~\autoref{prop:gsm-proper-scoring-rule}} \label{appendix:gsm-proper-scoring-rule}
\begin{proof}
This proposition holds when one compares \autoref{eq:main-objective-scoring-rule} and \autoref{eq:parry_eq4}, followed by substituting $G(\xx) = G_{\phi}(\xx)$. Expanding the divergence term in \autoref{eq:main-objective-scoring-rule}, we have
\begin{align*}
    S^{\phi}(\x{x}; \x{\theta}) &= \frac{1}{2}\nabla E_{\x{\theta}}(\x{x})^{\top}G_{\phi}(\x{x})\nabla E_{\x{\theta}}(\x{x}) - \tr\left(G_{\phi}(\x{x})H_{E_{\x{\theta}}}(\x{x})\right) - \sum_{i=1}^{d}\sum_{j=1}^{d}\nabla E_{\x{\theta}}(\x{x})_{j}\frac{\partial}{\partial x_{i}}G_{\phi}(\x{x})_{ij}
\end{align*}
Following the same as proof of \autoref{prop:main-objective-equi-to-gsm-special-case}, writing the above equation in terms of $\log p_{\x{\theta}}$, we have
\begin{align*}
    S^{\phi}(\x{x}; \x{\theta}) &= \frac{1}{2}\sum_{i=1}^{d}\sum_{j=1}^{d}G_{\phi}(\x{x})_{ij}\nabla \log p_{\x{\theta}}(\x{x})_{i}\nabla \log p_{\x{\theta}}(\x{x})_{j} + \sum_{i=1}^{d}\sum_{j=1}^{d}G_{\phi}(\x{x})_{ij}H_{\log p_{\x{\theta}}}(\x{x})_{ji}\nonumber\\
    &+ \sum_{i=1}^{d}\sum_{j=1}^{d}\nabla \log p_{\x{\theta}}(\x{x})_{j}\frac{\partial}{\partial x_{i}}G_{\phi}(\x{x})_{ij}
\end{align*}
Using the relation
\begin{align*}
    H_{\log p_{\x{\theta}}}(\x{x}) &= \frac{1}{p_{\x{\theta}}(\x{x})}H_{p_{\x{\theta}}}(\x{x}) - \frac{1}{p_{\x{\theta}}(\x{x})^{2}}\nabla p_{\x{\theta}}(\x{x})\nabla p_{\x{\theta}}(\x{x})^{\top}
\end{align*}
Substituting it back, we have
\begin{align*}
    S^{\phi}(\x{x}; \x{\theta}) &= \sum_{i=1}^{d}\sum_{j=1}^{d}\frac{1}{p_{\x{\theta}}(\x{x})}G_{\phi}(\x{x})_{ij}H_{p_{\x{\theta}}}(\x{x})_{ij} - \frac{1}{2}\sum_{i=1}^{d}\sum_{j=1}^{d}\frac{1}{p_{\x{\theta}}(\x{x})^{2}}G_{\phi}(\x{x})_{ij}\nabla p_{\x{\theta}}(\x{x})_{i}\nabla p_{\x{\theta}}(\x{x})_{j}\nonumber\\
    &+ \sum_{i=1}^{d}\sum_{j=1}^{d}\nabla \log p_{\x{\theta}}(\x{x})_{j}\frac{\partial}{\partial x_{i}}G_{\phi}(\x{x})_{ij}
\end{align*}
Comparing it with \autoref{eq:parry_eq4}, we conclude that $S^{\phi}(\x{x}; \x{\theta})$ is a proper scoring rule with $G(\x{x}) = G_{\phi}(\x{x})$ and $q = p_{\x{\theta}}$.
\end{proof}

\section{Convexity for Exponential Family} \label{appendix:convexity_proof}

\begin{proof}
    We compute the first order and second order partial derivatives of $\log p_{\ttheta}(\xx)$ from \autoref{eq-exp-fam}
    \begin{align*}
        \dfrac{\partial \log p_{\ttheta}(\xx)}{\partial x_j} = \sum\limits_{l=1}^{d} \theta_l\dfrac{\partial t_{l}(\xx)}{\partial x_j} + \dfrac{\partial b(\xx)}{\partial x_j}, \dfrac{\partial^{2} \log p_{\ttheta}(\xx)}{\partial x_j \partial x_k} = \sum\limits_{l=1}^{d} \theta_l\dfrac{\partial^{2} t_{l}(\xx)}{\partial x_j \partial x_k} + \dfrac{\partial^{2} b(\xx)}{\partial x_j \partial x_k}
    \end{align*}
    We rewrite $\hat{\L}(\x{\theta})$ as
    \begin{align} \label{eq-finite-sample-form-pre-sub}
        \hat{\L}(\ttheta) = \dfrac{1}{N}\sum\limits_{i=1}^{N} \sum\limits_{j,k=1}^{d} D_{jk}(\xx_i)\left(\dfrac{\partial^{2} \log p_{\ttheta}(\xx_i)}{\partial x_j\partial x_k} + \dfrac{1}{2}\dfrac{\partial \log p_{\ttheta}(\xx_i)}{\partial x_j}\dfrac{\partial \log p_{\ttheta}(\xx_i)}{\partial x_k} \right) + \dfrac{\partial D_{jk}(\xx_i)}{\partial x_j}\dfrac{\partial \log p_{\ttheta}(\xx_i)}{\partial x_k}
    \end{align}
    Substituting the partial derivatives from above in \autoref{eq-finite-sample-form-pre-sub}, we get
    \begin{align*}
        \hat{\L}(\ttheta) =& \dfrac{1}{N}\sum\limits_{i=1}^{N} \sum\limits_{j,k=1}^{d}\Bigg(D_{jk}(\xx_i)\sum\limits_{l=1}^{d} \theta_l\dfrac{\partial^{2} t_{l}(\xx)}{\partial x_j \partial x_k} + D_{jk}(\xx_i)\dfrac{\partial^{2} b(\xx_i)}{\partial x_j \partial x_k} + \dfrac{\partial D_{jk}(\xx_i)}{\partial x_j}\Bigg(\sum\limits_{l=1}^{d} \theta_l\dfrac{\partial t_{l}(\xx_i)}{\partial x_j} + \dfrac{\partial b(\xx_i)}{\partial x_j}\Bigg) \nonumber \\
        &\qquad\qquad\qquad\qquad + \dfrac{1}{2}D_{jk}(\xx_i)\Bigg(\sum\limits_{l=1}^{d} \theta_l\dfrac{\partial t_{l}(\xx_i)}{\partial x_j} + \dfrac{\partial b(\xx_i)}{\partial x_j}\Bigg)\Bigg(\sum\limits_{m=1}^{d} \theta_m\dfrac{\partial t_{m}(\xx)}{\partial x_k} + \dfrac{\partial b(\xx_i)}{\partial x_k}\Bigg)\Bigg)
    \end{align*}
    Collecting all the terms linear in $\ttheta$, quadratic in $\ttheta$ and constant with respect to $\ttheta$ separately, we get
    \begin{align} \label{eq-full-convex-loss}
        \hat{\L}(\ttheta) =& \dfrac{1}{N}\sum\limits_{i=1}^{N}\Bigg(L(\xx_i;\ttheta) + Q(\xx_i;\ttheta) + C(\xx_i;\ttheta)\Bigg)
    \end{align}
    where
    \begin{align*}
        L(\xx_i;\ttheta) =& \sum\limits_{j,k,l=1}^{d} \theta_l D_{jk}(\xx_i) \dfrac{\partial^{2} t_{l}(\xx_i)}{\partial x_j \partial x_k} + \theta_l\dfrac{\partial D_{jk}(\xx_i)}{\partial x_j}\dfrac{\partial t_{l}(\xx_i)}{\partial x_k} +  \theta_l D_{jk}(\xx_i)\dfrac{\partial t_{l}(\xx_i)}{\partial x_j}\dfrac{\partial b(\xx_i)}{\partial x_k} \nonumber\\
        =& \sum\limits_{j,k,l=1}^{d} \theta_l D_{jk}(\xx_i) \dfrac{\partial^{2} t_{l}(\xx_i)}{\partial x_j \partial x_k} + \sum\limits_{j,k=1}^{d} \dfrac{\partial D_{jk}(\xx_i)}{\partial x_j}\sum\limits_{l=1}^{d}[J^{\top}_{t}(\xx_i)]_{kl}\theta_l + \sum\limits_{j,k=1}^{d} D_{jk}(\xx_i)\sum\limits_{l=1}^{d}[J^{\top}_{t}(\xx_i)]_{jl}\theta_l\dfrac{\partial b(\xx_i)}{\partial x_k} \nonumber\\
        =& \sum\limits_{j,k,l=1}^{d} \theta_l D_{jk}(\xx_i) H_{t_{l}}(\xx)_{kj} + \sum\limits_{j,k=1}^{d} \dfrac{\partial D_{jk}(\xx_i)}{\partial x_j}[J^{\top}_{t}(\xx_i)\ttheta]_{k} + \sum\limits_{k,j=1}^{d} \dfrac{\partial b(\xx_i)}{\partial x_k} D_{jk}(\xx_i)[J^{\top}_{t}(\xx_i)\ttheta]_{j} \nonumber\\
        =& \sum\limits_{l=1}^{d} E_{l}(\xx_i)\theta_l + \sum\limits_{k=1}^{d} (\nabla \cdot D (\xx_i))_k [J^{\top}_{t}(\xx_i)\ttheta]_{k} + \nabla b(\xx_i)^{\top} D(\xx_i) J^{\top}_{t}(\xx_i) \ttheta \nonumber\\
        =& E(\xx_i)^{\top} \ttheta + (\nabla \cdot D(\xx_i))^{\top} J^{\top}_{t}(\xx_i)\ttheta + \nabla b(\xx_i)^{\top} D(\xx_i) J^{\top}_{t}(\xx_i) \ttheta \nonumber\\
        =& \Bigg(E(\xx_i) + J_{t}(\xx_i)(\nabla \cdot D(\xx_i)) + J_{t}(\xx_i) D(\xx_i)\nabla b(\xx_i) \Bigg)^{\top} \ttheta
    \end{align*}
    where $E_{l}(\xx_i) = \sum\limits_{j,k=1}^{d} D_{jk}(\xx_i) H_{t_{l}}(\xx)_{kj}$ for $l \in \{1, \dots, d\}$.
    \begin{align*}
        Q(\xx_i;\ttheta) &= \dfrac{1}{2}\sum\limits_{j,k,l,m=1}^{d} \theta_l \theta_m D_{jk}(\xx_i)\dfrac{\partial t_{m}(\xx_i)}{\partial x_j} \dfrac{\partial t_{l}(\xx_i)}{\partial x_k} \\
        &= \dfrac{1}{2} \sum\limits_{j=1}^{d} \sum\limits_{k=1}^{d}  D_{jk}(\xx_i) \sum\limits_{m=1}^{d}\dfrac{\partial t_{m}(\xx_i)}{\partial x_j}\theta_m \sum\limits_{l=1}^{d}\dfrac{\partial t_{l}(\xx_i)}{\partial x_k}\theta_l\nonumber\\
        &= \dfrac{1}{2} \sum\limits_{j=1}^{d} \sum\limits_{k=1}^{d}  D_{jk}(\xx_i) \sum\limits_{m=1}^{d}[J_{t}(\xx_i)^{\top}]_{jm}\theta_m \sum\limits_{l=1}^{d}[J_{t}(\xx_i)^{\top}]_{kl}\theta_l\nonumber\\
        &= \dfrac{1}{2} \sum\limits_{j=1}^{d} \sum\limits_{k=1}^{d} [J_{t}(\xx_i)^{\top}\ttheta]_{j} D_{jk}(\xx_i) [J_{t}(\xx_i)^{\top}\ttheta]_{k}\nonumber\\
        &= \dfrac{1}{2} \left(J_{t}(\xx_i)^{\top}\ttheta\right)^{\top} D(\xx_i) J_{t}(\xx_i)^{\top}\ttheta\nonumber\\
        &= \dfrac{1}{2} \ttheta^{\top}J_{t}(\xx_i) D(\xx_i) J_{t}(\xx_i)^{\top}\ttheta.
    \end{align*}
    \begin{align*}
        C(\xx_i;\ttheta) &= \dfrac{1}{2}\sum\limits_{j=1}^{d} \sum\limits_{k=1}^{d}D_{jk}(\xx_i)\dfrac{\partial b(\xx_i)}{\partial x_j} \dfrac{\partial b(\xx_i)}{\partial x_k} + \sum\limits_{j=1}^{d} \sum\limits_{k=1}^{d}D_{jk}(\xx_i)\dfrac{\partial^{2} b(\xx_i)}{\partial x_j \partial x_k} + \sum\limits_{j=1}^{d} \sum\limits_{k=1}^{d} \dfrac{\partial D_{jk}(\xx_i)}{\partial x_j}\dfrac{\partial b(\xx_i)}{x_j \partial x_k}
    \end{align*}
    Then~\autoref{eq-full-convex-loss} can be written as
    \begin{align*}
        \hat{\L}(\ttheta) =&  \dfrac{1}{2} \ttheta^{\top}\dfrac{1}{N}\sum\limits_{i=1}^{N}\Bigg(J_{t}(\xx_i) D(\xx_i) J_{t}(\xx_i)^{\top}\Bigg)\ttheta \\
        &+ \dfrac{1}{N}\sum\limits_{i=1}^{N}\Bigg(E(\xx_i) + J_{t}(\xx_i)(\nabla \cdot D(\xx_i)) + J_{t}(\xx_i) D(\xx_i)\nabla b(\xx_i)\Bigg)^{\top}\ttheta + \dfrac{1}{N}\sum\limits_{i=1}^{N}C(\xx_i;\ttheta)
    \end{align*}
    and since we know that $\Gamma_N = \dfrac{1}{N}\sum\limits_{i=1}^{N}\Bigg(J_{t}(\xx_i) D(\xx_i) J_{t}(\xx_i)^{\top}\Bigg)$, $C = \dfrac{1}{N}\sum\limits_{i=1}^{N}C(\xx_i;\ttheta)$ and $\x{g}_N =\dfrac{1}{N}\sum\limits_{i=1}^{N}\Bigg(E(\xx_i) + J_{t}(\xx_i)(\nabla \cdot D(\xx_i)) + J_{t}(\xx_i) D(\xx_i)\nabla b(\xx_i)\Bigg)$, and retrieve the loss in~\autoref{eq-convex-gsm-loss-exp}.
\end{proof}

\section{Proof of \autoref{theorem:consistency}} \label{appendix:consistency_proof}

\begin{proof}
    From \autoref{theorem:convexity}, we have
    \begin{align*}
        \hat{\L}(\ttheta) = \frac{1}{2}\ttheta^{\top}\Gamma_N\ttheta + \x{g}_{N}^{\top}\ttheta + C
    \end{align*}

    \paragraph{Existence and Uniqueness of the Minimizer}
    The gradient of $\hat{\L}(\ttheta)$ with respect to $\ttheta$ is $\nabla_{\ttheta}\hat{\L}(\ttheta) = \Gamma_N\ttheta + \x{g}_N$. Setting the gradient to zero yields the first-order condition $\Gamma_N\ttheta = -\x{g}_N$. By assumption, we know that $\Gamma_N$ is positive definite a.s.. Therefore, the minimizer is a.s. unique and has the closed-form solution $\hat{\ttheta}_N = -\Gamma_N^{-1}\x{g}_N$. 

    \paragraph{Almost Sure Convergence}
    
    Since $\Gamma_N$ and $\x{g}_N$ are sample averages of i.i.d. random variables $\{\xx_i\}_{i=1}^{N}$ drawn from the true distribution $p$, we apply the strong law of large numbers. We know that $\Gamma_0$ and $\x{g}_0$ exist and are entry-wise finite. Therefore, by the strong law of large numbers, $\Gamma_N \xrightarrow{a.s.} \Gamma_0 \quad \text{and} \quad \x{g}_N \xrightarrow{a.s.} \x{g}_0 \quad \text{as } N \to \infty$. The true parameter $\ttheta_0$ minimizes the population loss $\E_{\xx \sim p}[\hat{\L}(\ttheta)]$. By taking expectations in $\hat{\L}(\ttheta)$ and applying the first-order condition, $\Gamma_0\ttheta_0 = -\x{g}_0$. Since $\Gamma_N \xrightarrow{a.s.} \Gamma_0$ and $\Gamma_0$ is invertible (which guarantees $\Gamma_0^{-1}$ exists), and $\x{g}_N \xrightarrow{a.s.} \x{g}_0$. From the continuous mapping theorem for the matrix-valued function $w(A) = A^{-1}$~\citep{billing}, we get
    \begin{align*}
       \hat{\ttheta}_N = -\Gamma_N^{-1}\x{g}_N \xrightarrow{a.s.} \ttheta_0 = -\Gamma_0^{-1}\x{g}_0
    \end{align*}
    \paragraph{Asymptotic Normality.} Since $\Gamma_N\hat{\ttheta}_N = -\x{g}_N$ and $\Gamma_0\ttheta_0 = -\x{g}_0$, subtracting these equations gives
    \begin{align*}
        \Gamma_N(\hat{\ttheta}_N - \ttheta_0) = -(\x{g}_N - \x{g}_0) - (\Gamma_N - \Gamma_0)\ttheta_0,
    \end{align*}
    Multiplying both sides by $\Gamma_N^{-1}$ yields $\hat{\ttheta}_N - \ttheta_0 = -\Gamma_N^{-1}[(\x{g}_N - \x{g}_0) + (\Gamma_N - \Gamma_0)\ttheta_0]$. Multiplying by $\sqrt{N}$, we obtain
    \begin{align*}
        \sqrt{N}(\hat{\ttheta}_N - \ttheta_0) = -\Gamma_N^{-1}\sqrt{N}[(\x{g}_N - \x{g}_0) + (\Gamma_N - \Gamma_0)\ttheta_0].
    \end{align*}
    Now observe that $\sqrt{N}[(\x{g}_N - \x{g}_0) + (\Gamma_N - \Gamma_0)\ttheta_0] = \frac{1}{\sqrt{N}}\sum_{i=1}^{N} \x{Z}_i$ where
    \begin{align*}
        \x{Z}_i = \Bigg[ J_{t}(\xx_i) D(\xx_i)\nabla b(\xx_i) + J_{t}(\xx_i)(\nabla \cdot D)(\xx_i)+ E(\xx_i) - \x{g}_0\Bigg] + (J_{t}(\xx_i) D(\xx_i) J_{t}(\xx_i)^{\top} - \Gamma_0)\ttheta_0
    \end{align*}
    By construction, $\E[\x{Z}_i] = \x{0}$ since we defined $\Gamma_0$ and $\x{g}_0$ as 
    \begin{align*}
    \Gamma_0 = \E[J_{t}(\xx_i) D(\xx_i) J_{t}(\xx_i)^{\top}], \quad
    \x{g}_0 =\E[J_{t}(\xx_i) D(\xx_i)\nabla b(\xx_i) + J_{t}(\xx_i)(\nabla \cdot D)(\xx_i)+ E(\xx_i)]. 
    \end{align*}
    Leveraging the central limit theorem, we know that $\frac{1}{\sqrt{N}}\sum_{i=1}^{N} \x{Z}_i \xrightarrow{d} \N(\x{0}, \Sigma_0)$. Furthermore, since $\Gamma_N \xrightarrow{a.s.} \Gamma_0$ and $\Gamma_0$ is invertible, we have $\Gamma_{N}^{-1} \xrightarrow{a.s.} \Gamma_0^{-1}$. Finally, from Slutsky's theorem, we get
    \begin{align*}
        \sqrt{N}(\hat{\ttheta}_N - \ttheta_0) = -\Gamma_{N}^{-1} \cdot \frac{1}{\sqrt{N}}\sum_{i=1}^{N} \x{Z}_i \xrightarrow{d} \N(\x{0}, \Gamma_0^{-1}\Sigma_0\Gamma_0^{-1}).
    \end{align*}
\end{proof}

\section{Experiments} \label{appendix:more-experiments}
In the subsequent sections, we provide additional experimental results.
\subsection{1D data - Exponential Distribution} \label{appendix:subsec-exponential-dist}
Suppose we have $N$ samples namely $x_{1},\dots,x_{N}$ sampled from density $p$ and let the model density be exponential distribution that is $p_{\theta}(x) = \theta\text{exp}(-\theta x)$ where $\theta > 0$. The maximum-likelihood estimate (MLE) is given by
\begin{align*}
    \hat{\theta}_{MLE} = \frac{N}{\sum_{i=1}^{N}x_{i}}
\end{align*}
The objective function for original score matching is
\begin{align*}
    \underset{x \sim p}{\E}\left[\frac{1}{2}\left(\frac{\d}{\d  x}\log p_{\theta}(x)\right)^{2} + \frac{\d^{2}}{\d x^{2}}\log p_{\theta}(x)\right] = \E_{x \sim p}\left[\frac{1}{2}\theta^{2}\right]
\end{align*}
The estimate from score matching is $\hat{\theta}_{SM} = 0$. Observe that this estimate is not useful because it doesn't depend on data and always gives $0$. In fact, in this setting the original score matching is \textbf{not} a proper scoring rule. 

Considering the objective proposed in \citet{generalized_score}, for any positive function $h : (0, \infty) \to \R_{+}$, we have
\begin{align*}
    \underset{x \sim p}{\E}\left[\frac{1}{2}h(x)\left(\frac{\d}{\d x}\log p_{\theta}(x)\right)^{2} + h'(x)\frac{\d}{\d x}\log p_{\theta}(x) + h(x)\frac{\d^{2}}{\d x^{2}}\log p_{\theta}(x)\right] &= \underset{x \sim p}{\E}\left[\frac{1}{2}h(x)\theta^{2} - h'(x)\theta\right]
\end{align*}
In this case, the estimator will be
\begin{align*}
    \hat{\theta}_{GSM-NN} = \frac{\sum_{i=1}^{N}h'(x_{i})}{\sum_{i=1}^{N}h(x_{i})}
\end{align*}
If we choose $h(x) = x^{2}$, then we get score matching proposed in \citet{Hyvaerinen2007}. The estimate will be
\begin{align*}
    \hat{\theta}_{NSM} = \frac{2\sum_{i=1}^{N}x_{i}}{\sum_{i=1}^{N}x_{i}^{2}}
\end{align*}
As the data $x$ lies on $\R_{+}$, we define convex function $\phi(x) : (0, \infty) \to \R$. Then $g_{\phi}(x)= \frac{1}{\phi''(x)^{\frac{3}{2}}}$. In this case, then objective (\autoref{eq:main-objective-in-compact-form}) becomes
\begin{align*}
    \underset{x \sim p}{\E}\left[\frac{1}{2}g_{\phi}(x)\left(\frac{\d}{\d x}\log p_{\theta}(x)\right)^{2} + g_{\phi}'(x)\frac{\d}{\d x}\log p_{\theta}(x) + g_{\phi}(x)\frac{\d^{2}}{\d x^{2}}\log p_{\theta}(x)\right] &= \underset{x \sim p}{\E}\left[\frac{1}{2}g_{\phi}(x)\theta^{2} - g'_{\phi}(x)\theta\right]
\end{align*}
The estimate in this case will be
\begin{align*}
    \hat{\theta}_{OURS} &= \frac{\sum_{i=1}^{N}g_{\phi}'(x_{i})}{\sum_{i=1}^{N}g_{\phi}(x_{i})}
\end{align*}
We have
\begin{enumerate}
    \item If we choose $\phi(x) = x\log x$, then $g_{\phi}(x)= x^{\frac{3}{2}}$ and the estimator will be $\frac{3\sum_{i=1}^{N}x_{i}^{\frac{1}{2}}}{2\sum_{i=1}^{N}x_{i}^{\frac{3}{2}}}$.
    \item If we choose $\phi(x) = -\log x$, then $g_{\phi}(x) = x^{3}$ and the estimator will be $\frac{3\sum_{i=1}^{N}x_{i}^{2}}{\sum_{i=1}^{N}x_{i}^{3}}$.
\end{enumerate}
If we choose $\phi(x) = \frac{9}{4}x^{\frac{4}{3}}$, then $g_{\phi}(x) = x$ and our estimator will be \textbf{exactly} equal to MLE. Interestingly \citet{Hyvarinen05a} showed that for gaussian case (where the support is $\R^{d}$) original score matching gives the same estimator as the MLE. Note that if we choose $h(x) = x$, then also we get MLE. To compare the estimators derived above, we additionally consider data generated from an exponential distribution with parameter equal to $2$ i.e. $p(x) = 2\exp(-2x), x > 0$. For each estimator, we evaluate the empirical mean and standard deviation of the estimates as functions of the sample size $N$. For a fixed $N$, these statistics are computed over $50$ independent runs. We exclude the original score matching estimator from the comparison, as it is identically zero in this setting. The results are summarized in \autoref{fig:comparison-of-different-estimators-exponential}. 

\begin{figure*}
    \centering
    \begin{minipage}{0.49\textwidth}
        \includegraphics[width=\linewidth]{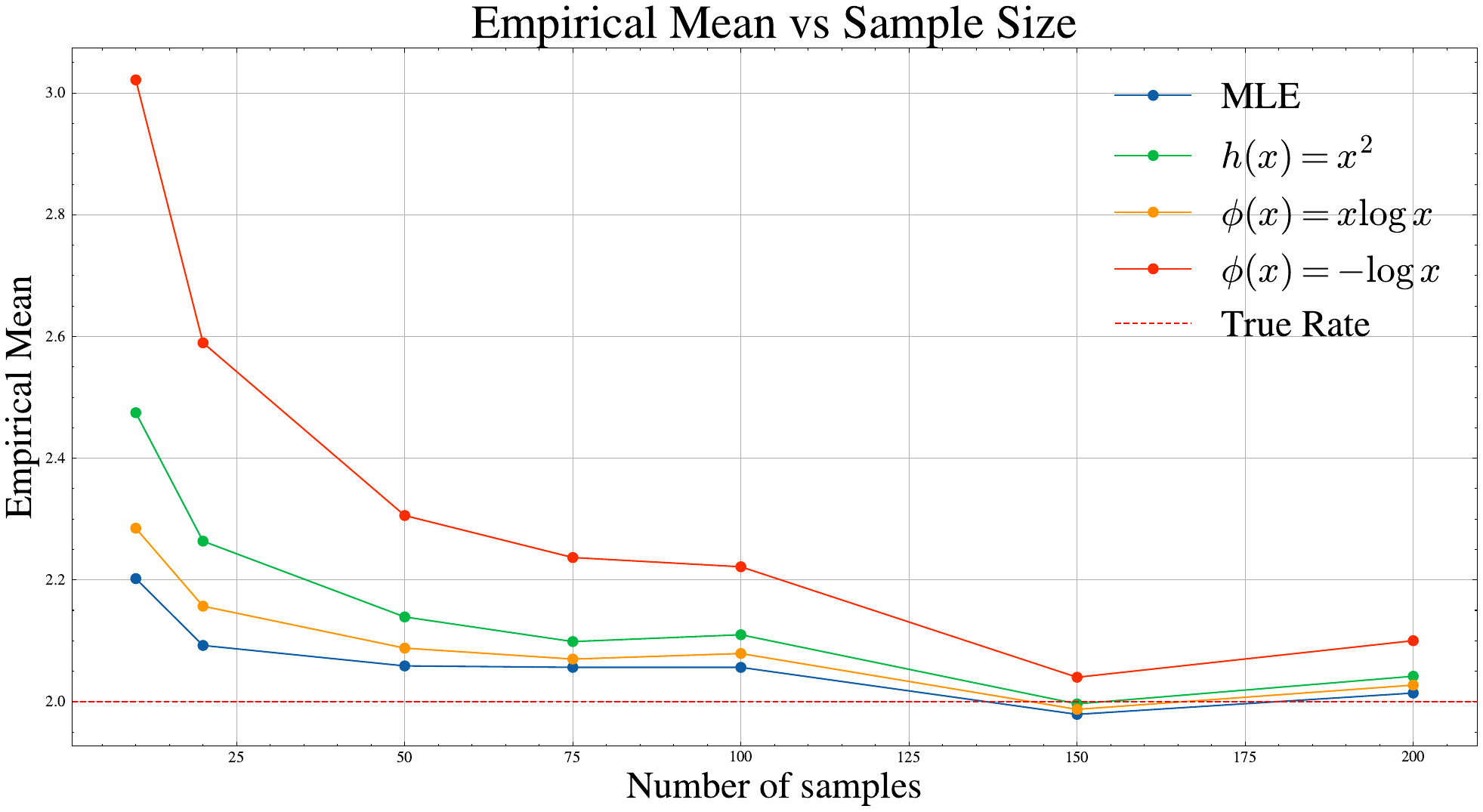}
    \end{minipage}
     \hfill
     \begin{minipage}{0.49\textwidth}
         \includegraphics[width=\linewidth]{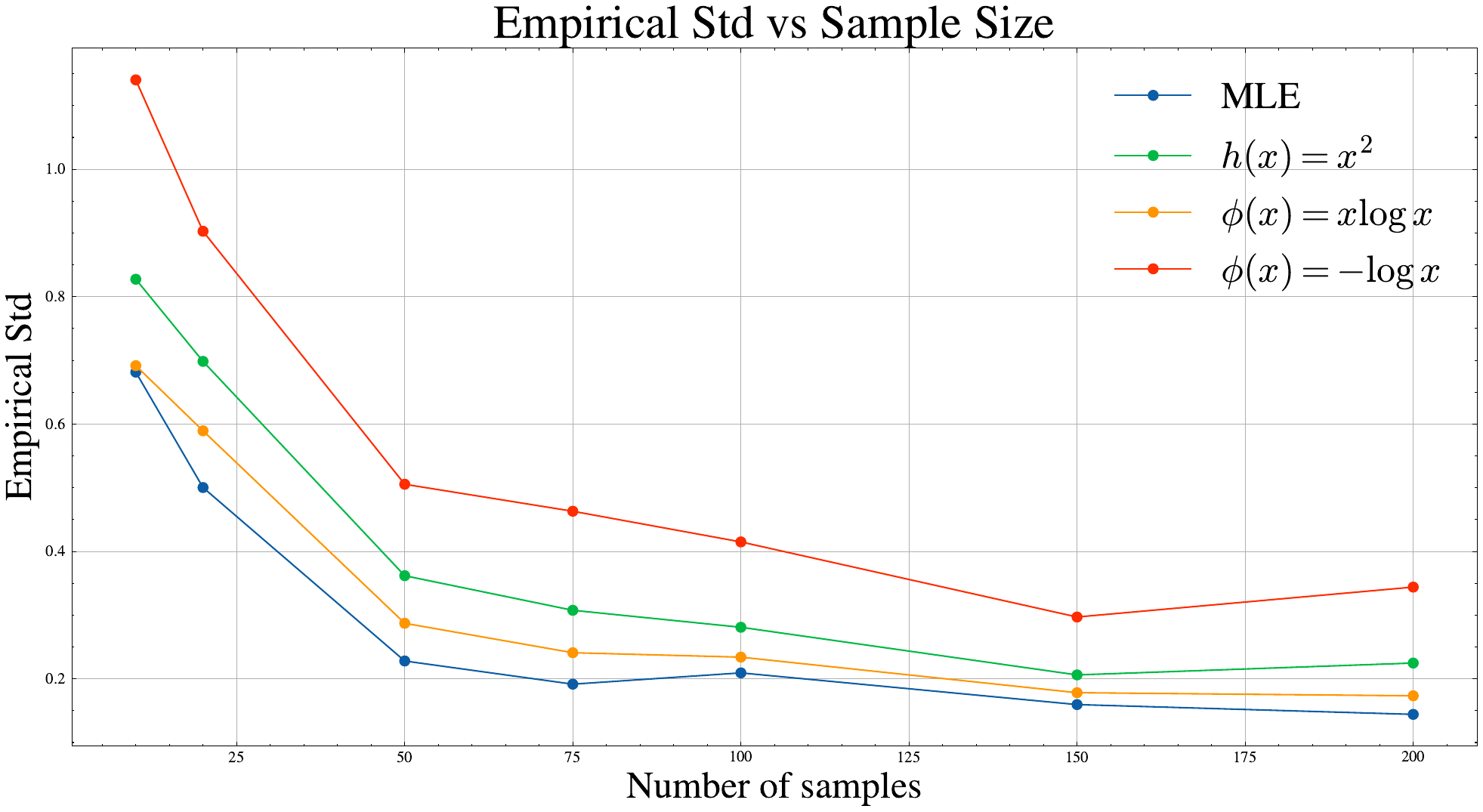}
     \end{minipage}
     \caption{Comparison of various estimators of $\theta$ when model density is exponential distribution}
     \label{fig:comparison-of-different-estimators-exponential}
\end{figure*}

\subsection{Truncated Gaussian Model} \label{appendix:subsec-truncated-gaussian-pos-orthant-exp}
Following the experimental setup described in main text, we report results on unbounded positive orthant $\Omega = \R^{d}_{+}$ with $d = 10$. In this setting, the estimators from \citet{truncated_sm} are omitted as their formulation requires densities with bounded support. Consequently, we compare our proposed estimators against the estimators proposed in \citet{generalized_score} which explicitly designed weight functions $h(\x{x})$ for the positive orthant. We denote our choices of $\phi$ as $\phi_{1}(\x{x}) = \frac{9}{4}\sum_{i} x_{i}^{4/3}$, $\phi_{2}(\x{x}) = \sum_{i} x_{i} \log x_{i}$, and $\phi_{3}(\x{x}) = -\sum_{i} \log x_{i}$, alongside the baseline choices $h_{1}(\x{x}) = \x{x}$ and $h_2(\x{x}) = \x{x}^2$ from \citet{generalized_score}. \autoref{fig:trunc-gm-pos-orthant-10d-results} illustrates MSE for $\x{\mu}$ and $K$ across sample sizes, and \autoref{tab:trunc-gm-pos-orthant-10d-results} reports quantitative MSE at $N = 800$.

We observe that the baseline $h_{1}$ achieves the lowest median MSE across all sample sizes $N$, reaching $0.0016$ for $\x{\mu}$ and $4.70 \times 10^3$ for $K$ at $N=800$. Among the proposed estimators, $\phi_{1}$ achieves the lowest median MSE across all $N$, tracking closely with the quadratic choice $h_{2}$ on $K$ ($8.47 \times 10^3$ versus $7.93 \times 10^3$ at $N=800$). While $\phi_{2}$ and $h_{2}$ display extreme outliers at $N=200$ (with mean MSEs of $17.37$ and $65.53$ on $\x{\mu}$), their error distributions concentrate tightly at $N=500$ and $N=800$. In contrast, $\phi_{3}$ maintains the highest median MSE and widest spread of outliers across all sample sizes, with its median MSE for $K$ exceeding $10^5$.

\begin{table*}
    \caption{Performance comparison of parameter estimators ($\x{\mu}$ and $K$) for the truncated Gaussian model on $\Omega = \mathcal{S}^{10}$ at $N=800$ across 50 independent trials, where $\phi_1(\x{x}) = \frac{9}{4}\left(\sum_{i}x_{i}^{4/3} + \left(1 - \sum_{i}x_{i}\right)^{4/3}\right)$, $\phi_2(\x{x}) = \sum_{i}x_{i}\log(x_{i}) + \left(1 - \sum_{i}x_{i}\right)\log\left(1 - \sum_{i}x_{i}\right)$, $\phi_3(\x{x}) = -\sum_{i}\log(x_{i}) - \log\left(1 - \sum_{i}x_{i}\right)$, and $h_{1}(\x{x}) = \x{x}$. We report the Mean, Median, and Standard Deviation of the MSE.}
    \label{tab:trunc-gm-polytope-10d-results}
    \begin{center}
        \resizebox{0.98\textwidth}{!}{
            \begin{sc}
                \begin{tabular}{@{}clcccccc@{}}
                    \toprule
                    & & \multicolumn{3}{c}{$\text{MSE}_{\x{\mu}}$} & \multicolumn{3}{c}{$\text{MSE}_{K}$} \\
                    \cmidrule(lr){3-5} \cmidrule(lr){6-8}
                    Methodology & $\phi(\x{x})$; $h(\x{x})$ & Mean & Median & Std. & Mean & Median & Std. \\
                    \midrule
                    OURS & $\phi_{1}$ & $\mathbf{0.077}$ & $\mathbf{0.007}$ & $\mathbf{0.234}$ & $\mathbf{2.48 \times 10^4}$ & $\mathbf{2.17 \times 10^4}$ & $\mathbf{1.51 \times 10^4}$ \\
                    OURS & $\phi_{2}$ & $0.254$ & $0.023$ & $0.728$ & $3.58 \times 10^4$ & $2.91 \times 10^4$ & $2.25 \times 10^4$ \\
                    OURS & $\phi_{3}$ & $739.031$ & $0.083$ & $5051.283$ & $1.50 \times 10^5$ & $1.26 \times 10^5$ & $7.79 \times 10^4$ \\
                    Truncated SM \cite{truncated_sm} & --- & $170.848$ & $0.105$ & $1192.049$ & $5.19 \times 10^4$ & $4.77 \times 10^4$ & $2.35 \times 10^4$ \\
                    \citet{generalized_score} & $h_{1}$ & $103.371$ & $0.083$ & $715.077$ & $6.05 \times 10^5$ & $5.81 \times 10^5$ & $1.05 \times 10^5$ \\
                    \bottomrule
                \end{tabular}
            \end{sc}
        }
    \end{center}
\end{table*}

\begin{table*}
    \caption{Performance comparison of parameter estimators ($\x{\mu}$ and $K$) for the truncated Gaussian model on $\Omega = \mathbb{R}_{+}^{10}$ at $N=800$ across 50 independent trials, where $\phi_1(\x{x}) = \frac{9}{4}\sum_{i} x_{i}^{4/3}$, $\phi_2(\x{x}) = \sum_{i} x_{i}\log x_{i}$, $\phi_3(\x{x}) = -\sum_{i}\log x_{i}$, $h_{1}(\x{x}) = \x{x}$, and $h_{2}(\x{x}) = \x{x}^{2}$. We report the Mean, Median, and Standard Deviation of the MSE.}
    \label{tab:trunc-gm-pos-orthant-10d-results}
    \begin{center}
        \resizebox{0.98\textwidth}{!}{
            \begin{sc}
                \begin{tabular}{@{}clcccccc@{}}
                    \toprule
                    & & \multicolumn{3}{c}{$\text{MSE}_{\x{\mu}}$} & \multicolumn{3}{c}{$\text{MSE}_{K}$} \\
                    \cmidrule(lr){3-5} \cmidrule(lr){6-8}
                    Methodology & $\phi(\x{x})$; $h(\x{x})$ & Mean & Median & Std. & Mean & Median & Std. \\
                    \midrule
                    OURS & $\phi_{1}$ & $0.0038$ & $0.0032$ & $0.0023$ & $8.67 \times 10^3$ & $8.47 \times 10^3$ & $2.44 \times 10^3$ \\
                    OURS & $\phi_{2}$ & $0.0272$ & $0.0108$ & $0.0587$ & $1.81 \times 10^4$ & $1.67 \times 10^4$ & $6.00 \times 10^3$ \\
                    OURS & $\phi_{3}$ & $1.9524$ & $0.0960$ & $5.5063$ & $2.04 \times 10^5$ & $1.90 \times 10^5$ & $7.12 \times 10^4$ \\
                    \citet{generalized_score} & $h_{1}$ & $\mathbf{0.0019}$ & $\mathbf{0.0016}$ & $\mathbf{0.0011}$ & $\mathbf{4.89 \times 10^3}$ & $\mathbf{4.70 \times 10^3}$ & $\mathbf{1.18 \times 10^3}$ \\
                    \citet{generalized_score} & $h_{2} $ & $0.0060$ & $0.0043$ & $0.0056$ & $8.44 \times 10^3$ & $7.93 \times 10^3$ & $2.33 \times 10^3$ \\
                    \bottomrule
                \end{tabular}
            \end{sc}
        }
    \end{center}
\end{table*}

\begin{figure}
    \centering
    \includegraphics[width=\linewidth]{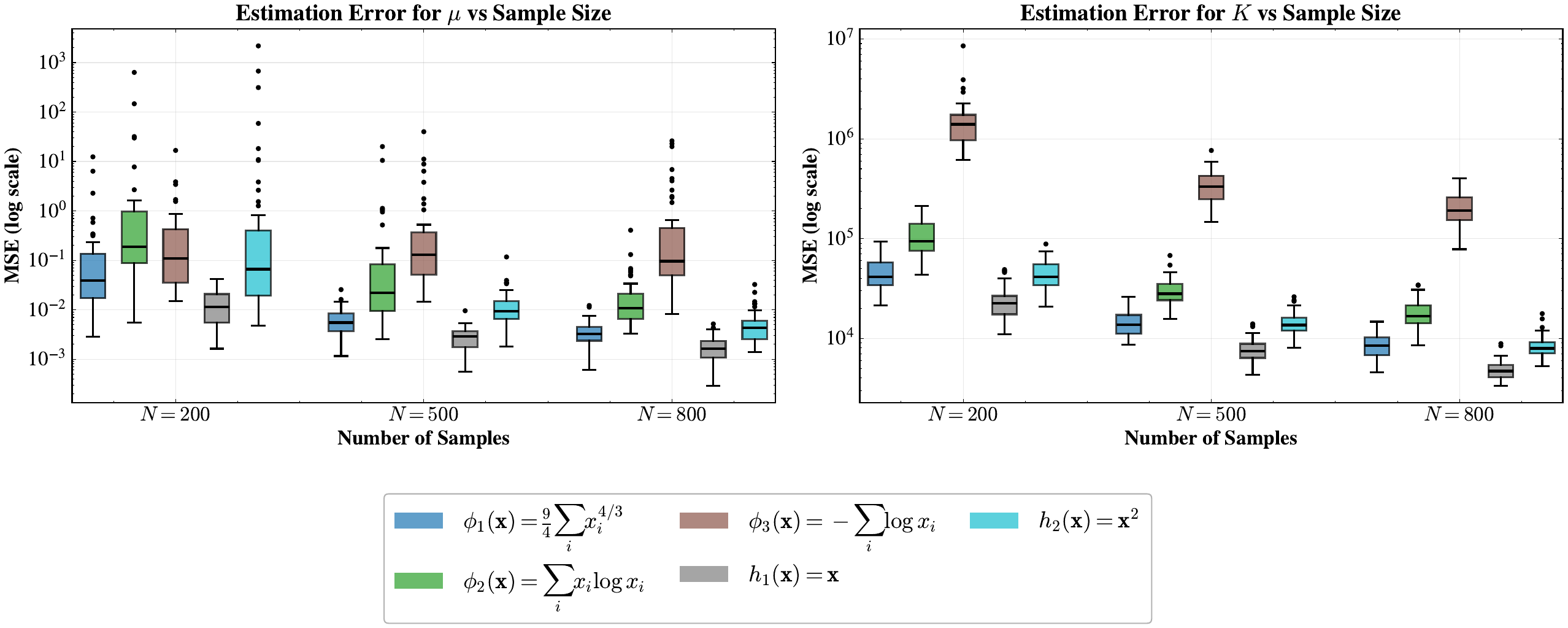}
    \caption{Comparison of parameter estimation error (MSE) for $\x{\mu}$ and $K$ on the positive orthant $\mathbb{R}_{+}^{10}$ across sample sizes $N \in \{200, 500, 800\}$, evaluated over 50 independent trials. Baselines include $h_i(\x{x}) = x_i$ and $h_i(\x{x}) = x_i^2$ from \citet{generalized_score}.}
    \label{fig:trunc-gm-pos-orthant-10d-results}
\end{figure}

\subsection{Dirichlet Model} \label{appendix:subsec-dirichlet-simplex-exp}

We consider a Dirichlet distribution with concentration parameters $\x{\alpha}$ in $\R^{d}_{+}$, whose density is given by
\begin{align*}
    p_{\x{\alpha}}(\x{x}) \propto \prod_{i=1}^{d}x_{i}^{\alpha_{i} - 1} \mathbbm{1}_{\x{x} \in \Delta^{d-1}}
\end{align*}
where $\Delta^{d-1} = \{\x{x} \in \R^{d}: x_{i} > 0, \sum_{i=1}^{d}x_{i} = 1\}$ denotes the $d-1$ simplex. Since $\Delta^{d-1}$ has empty interior in $\R^{d}$, \autoref{theorem:modified-ed-to-sm-convex} cannot be applied directly. However, we can parameterize the simplex using $d-1$ coordinates and formulate the score matching problem on a subset of $\R^{d-1}$. Specifically, by dropping the last coordinate, we obtain the parameterization
\begin{align*}
    \x{y} = (x_{1},\dots,x_{d-1}) \in \mathcal{S}^{d-1} \quad \& \quad x_{d} = 1 - \sum_{i=1}^{d-1} y_{i}
\end{align*}
The corresponding density in the new coordinates is given by
\begin{align*}
    \tilde{p}_{\x{\alpha}}(\x{y}) &= p_{\x{\alpha}}\left(\left[y_{1},\dots,y_{d-1}, 1 - \sum_{i=1}^{d-1}y_{i}\right]^{\top}\right) \\
    &\propto \left(1 - \sum_{i=1}^{d-1}y_{i}\right)^{\alpha_{d} - 1} \: \prod_{i=1}^{d-1}y_{i}^{\alpha_{i} - 1} \mathbbm{1}_{\x{y} \in \mathcal{S}^{d-1}}
\end{align*}

\begin{figure}
    \centering
    \includegraphics[width=\linewidth]{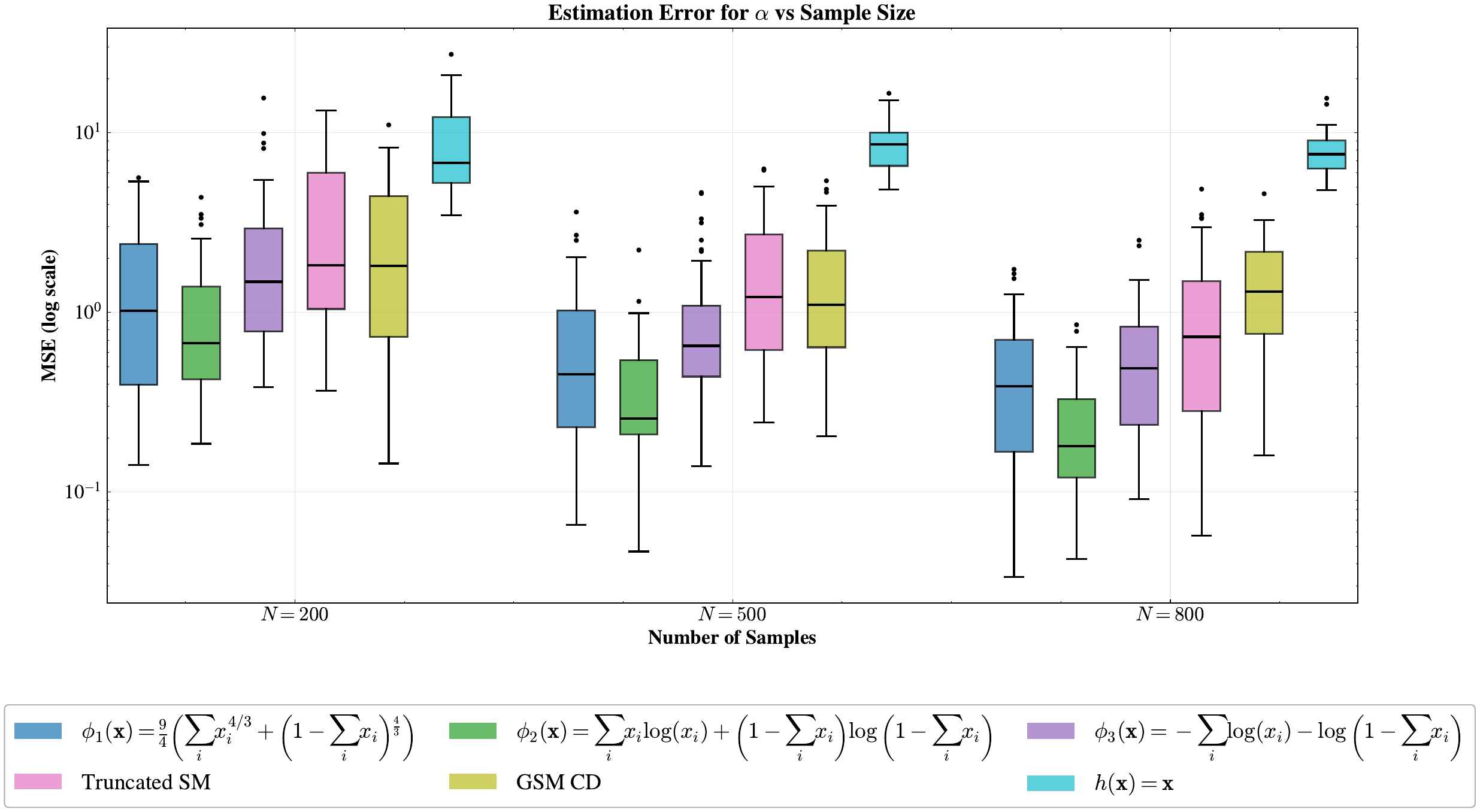}
    \caption{Comparison of parameter estimation error (MSE) for $\x{\alpha}$ of the Dirichlet distribution on the simplex $\Delta^{9}$ ($d=10$) across sample sizes $N \in \{200, 500, 800\}$, evaluated over 50 independent trials. Baselines include Truncated Score Matching (Truncated SM) \cite{truncated_sm}, GSM CD with $h(\x{x}) = \x{x}$ from \citet{gsm-on-compositional-data}, and $h(\x{x}) = \x{x}$ from \citet{generalized_score}.}
    \label{fig:dirichlet-simplex-10d-results}
\end{figure}

\begin{table*}
    \caption{Performance comparison of parameter estimators for the Dirichlet distribution concentration parameter $\x{\alpha}$ on the simplex $\Delta^9$ ($d=10$) at $N=800$ across 50 independent trials, where $\phi_1(\x{x}) = \frac{9}{4}\left(\sum_{i}x_{i}^{4/3} + \left(1 - \sum_{i}x_{i}\right)^{4/3}\right)$, $\phi_2(\x{x}) = \sum_{i}x_{i}\log(x_{i}) + \left(1 - \sum_{i}x_{i}\right)\log\left(1 - \sum_{i}x_{i}\right)$, and $\phi_3(\x{x}) = -\sum_{i}\log(x_{i}) - \log\left(1 - \sum_{i}x_{i}\right)$. We report the Mean, Median, and Standard Deviation of the MSE.}
    \label{tab:dirichlet-simplex-10d-results}
    \begin{center}
        \resizebox{0.95\textwidth}{!}{
            \begin{sc}
                \begin{tabular}{@{}clccc@{}}
                    \toprule
                    Methodology & $\phi(\x{x})$; $h(\x{x})$ & Mean MSE & Median MSE & Std. MSE \\
                    \midrule
                    OURS & $\phi_1$ & $0.5066$ & $0.3871$ & $0.4244$ \\
                    OURS & $\phi_2$ & $\mathbf{0.2423}$ & $\mathbf{0.1799}$ & $\mathbf{0.1806}$ \\
                    OURS & $\phi_3$ & $0.6311$ & $0.4879$ & $0.5178$ \\
                    Truncated SM \cite{truncated_sm} & --- & $1.1192$ & $0.7314$ & $1.0901$ \\
                    GSM CD \cite{gsm-on-compositional-data} & $h(\x{x}) = \x{x}$ & $1.5547$ & $1.3099$ & $0.9400$ \\
                    \citet{generalized_score} & $h(\x{x}) = \x{x}$ & $7.9691$ & $7.6004$ & $2.1477$ \\
                    \bottomrule
                \end{tabular}
            \end{sc}
        }
    \end{center}
\end{table*}

We compare our proposed estimators against Truncated Score Matching (Truncated SM) \citep{truncated_sm}, Generalized Score Matching for Compositional Data (GSM CD) \citep{gsm-on-compositional-data} configured with $h(\x{x}) = \x{x}$, and the estimator $h(\x{x}) = \x{x}$ from \citet{generalized_score}. We denote our choices of $\phi$ as $\phi_{1}(\x{x}) = \frac{9}{4}(\sum_{i} x_{i}^{4/3} + (1 - \sum_{i} x_{i})^{4/3}), \phi_{2}(\x{x}) = \sum_{i} x_{i} \log x_{i} + (1 - \sum_{i} x_{i})\log(1 - \sum_{i} x_{i})$, and $\phi_{3}(\x{x}) = -\sum_{i} \log x_{i} - \log(1 - \sum_{i} x_{i})$. \autoref{fig:dirichlet-simplex-10d-results} illustrates MSE for $\x{\alpha}$ across sample sizes, and \autoref{tab:dirichlet-simplex-10d-results} reports quantitative MSE at $N = 800$. 

We observe that the proposed estimator with $\phi_2$ achieves the lowest mean ($0.2423$) and median ($0.1799$) MSE across all sample sizes, followed by $\phi_1$ and $\phi_3$. All three proposed choices achieve lower median MSE than the baselines across every evaluated sample size. Among the baselines, Truncated SM attains a median MSE of $0.7314$ at $N=800$, while GSM CD with $h(\x{x}) = \x{x}$ yields a median MSE of $1.3099$. The coordinate weighting $h(\x{x}) = \x{x}$ from \citet{generalized_score} displays the highest error throughout, maintaining a median MSE around $6.8$--$8.6$ across all sample sizes. As $N$ increases from $200$ to $800$, the error spreads of $\phi_1, \phi_2$, and $\phi_3$ decrease monotonically.

\subsection{Ablation Studies}
\subsubsection{Robustness Across Diverse Ground Truth Parameter Initializations}
The primary experiments performed for Truncated Gaussian model on $\mathcal{S}^{d}, \mathbb{R}^{d}_{+}$ and Dirichlet model on $\Delta^{d-1}$ were for fixed choices of ground truth parameters. To verify that the performance advantages of the proposed estimators are not artifacts of a specific parameter initialization, we systematically evaluate estimator robustness across $50$ distinct ground truth parameter configurations.

To aggregate performance across diverse parameter initializations, we evaluate estimators using three complementary summary metrics: \emph{Win Rate}, \emph{Average Rank}, and the median MSE across configurations. For each ground-truth configuration $c \in \{1, \dots, C\}$, we compute the median MSE across 50 independent Monte Carlo trials for every candidate estimator. An estimator is assigned a win for configuration $c$ if it achieves the strictly lowest trial-median MSE, yielding a cumulative win rate $\frac{1}{C}\sum_{c=1}^C \mathbbm{1}_{\text{rank}_c = 1}$. Similarly, candidate methods are ranked from $1$ (best) to $M$ (worst) on each configuration based on trial-median MSE, from which we report the average rank $\frac{1}{C}\sum_{c=1}^C \text{rank}_c$. Evaluating average rank alongside win rate prevents misleading conclusions where an estimator wins narrowly in select regimes but suffers catastrophic numerical degradation in others. \autoref{tab:trunc-gm-polytope-10d-ablation-results}, \autoref{tab:trunc-gm-pos-orthant-10d-ablation-results} and \autoref{tab:dirichlet-simplex-10d-ablation-results} shows the quantitative results for Truncated Gaussian model on $\mathcal{S}^{10}, \mathbb{R}^{10}_{+}$ and Dirichlet model on $\Delta^{9}$ respectively. We observe the following

\begin{enumerate}
    \item For the truncated Gaussian on $\mathcal{S}^{10}$, $\phi_1$ achieves the lowest error across configurations, securing a $100\%$ win rate on precision matrix estimation ($K$) with an average rank of $1.00$ and median MSE of $4.06 \times 10^4$. On $\x{\mu}$, it attains a $90\%$ win rate with an average rank of $1.16$ and median MSE of $0.0118$, with $\phi_2$ ranking second (average rank $2.20$). In contrast, $h_{1}$ ranks last on estimation of $K$ across all 50 configurations with a median MSE of $1.52 \times 10^6$.
    \item For the Dirichlet model on $\Delta^9$, $\phi_2$ achieves the lowest error across configurations, obtaining a $96\%$ win rate, an average rank of $1.04$, and a median MSE of $0.0912$. $\phi_3$ and $\phi_1$ follow with average ranks of $2.28$ and $3.16$, respectively. All three proposed estimators achieve lower average ranks than Truncated SM \cite{truncated_sm} ($4.22$), GSM CD with $h(\x{x})=\x{x}$ \cite{gsm-on-compositional-data} ($4.66$), and \citet{generalized_score} ($5.64$).
    \item On $\mathbb{R}^{10}_{+}$, the baseline $h_1$ achieves the lowest error, obtaining a $94\%$ win rate on $K$ (average rank $1.06$, median MSE $2.39 \times 10^3$) and an $82\%$ win rate on $\x{\mu}$ (average rank $1.24$, median MSE $0.0268$). Among our proposed estimators, $\phi_1$ achieves average ranks of $2.94$ on $\x{\mu}$ and $2.66$ on $K$, trailing $h_2(\x{x}) = \x{x}^2$ (average rank $2.36$ on $K$). $\phi_3$ ranks lowest on estimation of $K$ with an average rank of $5.00$ and a median MSE of $1.83 \times 10^5$.
\end{enumerate}

\begin{table*}
    \caption{Ablation study evaluating the robustness of parameter estimators for the truncated Gaussian model on the standard simplex polytope $\mathcal{S}^{10}$ across $50$ diverse ground-truth configurations ($N=800$, $50$ trials per configuration). We report the Win Rate, Average Rank, and Median MSE across all configurations.}
    \label{tab:trunc-gm-polytope-10d-ablation-results}
    \begin{center}
        \resizebox{0.98\textwidth}{!}{
            \begin{sc}
                \begin{tabular}{@{}clcccccc@{}}
                    \toprule
                    & & \multicolumn{3}{c}{$\x{\mu}$} & \multicolumn{3}{c}{$K$} \\
                    \cmidrule(lr){3-5} \cmidrule(lr){6-8}
                    Methodology & $\phi(\x{x})$; $h(\x{x})$ & Win Rate & Avg. Rank & Median MSE & Win Rate & Avg. Rank & Median MSE \\
                    \midrule
                    OURS & $\phi_1$ & $\mathbf{0.90}$ & $\mathbf{1.16}$ & $\mathbf{0.0118}$ & $\mathbf{1.00}$ & $\mathbf{1.00}$ & $\mathbf{4.06 \times 10^4}$ \\
                    OURS & $\phi_2$ & $0.02$ & $2.20$ & $0.0209$ & $0.00$ & $2.02$ & $6.20 \times 10^4$ \\
                    OURS & $\phi_3$ & $0.06$ & $3.68$ & $0.1005$ & $0.00$ & $4.00$ & $2.80 \times 10^5$ \\
                    Truncated SM \cite{truncated_sm} & --- & $0.00$ & $3.04$ & $0.0400$ & $0.00$ & $2.98$ & $7.95 \times 10^4$ \\
                    \citet{generalized_score} & $h_{1}$ & $0.02$ & $4.92$ & $2.2088$ & $0.00$ & $5.00$ & $1.52 \times 10^6$ \\
                    \bottomrule
                \end{tabular}
            \end{sc}
        }
    \end{center}
\end{table*}

\begin{table*}
    \caption{Ablation study evaluating the robustness of parameter estimators for the truncated Gaussian model on the positive orthant $\mathbb{R}_{+}^{10}$ across $50$ diverse ground-truth configurations ($N=800$, $50$ trials per configuration). We report the Win Rate, Average Rank, and Median MSE across all configurations.}
    \label{tab:trunc-gm-pos-orthant-10d-ablation-results}
    \begin{center}
        \resizebox{0.98\textwidth}{!}{
            \begin{sc}
                \begin{tabular}{@{}clcccccc@{}}
                    \toprule
                    & & \multicolumn{3}{c}{$\x{\mu}$} & \multicolumn{3}{c}{$K$} \\
                    \cmidrule(lr){3-5} \cmidrule(lr){6-8}
                    Methodology & $\phi(\x{x})$; $h(\x{x})$ & Win Rate & Avg. Rank & Median MSE & Win Rate & Avg. Rank & Median MSE \\
                    \midrule
                    OURS & $\phi_{1}$ & $0.06$ & $2.94$ & $0.1066$ & $0.06$ & $2.66$ & $4.02 \times 10^3$ \\
                    OURS & $\phi_{2}$ & $0.04$ & $3.78$ & $0.2346$ & $0.00$ & $3.92$ & $8.68 \times 10^3$ \\
                    OURS & $\phi_{3}$ & $0.04$ & $4.18$ & $0.2588$ & $0.00$ & $5.00$ & $1.83 \times 10^5$ \\
                    \citet{generalized_score} & $h_{1}$ & $\mathbf{0.82}$ & $\mathbf{1.24}$ & $\mathbf{0.0268}$ & $\mathbf{0.94}$ & $\mathbf{1.06}$ & $\mathbf{2.39 \times 10^3}$ \\
                    \citet{generalized_score} & $h_{2}$ & $0.04$ & $2.86$ & $0.0653$ & $0.00$ & $2.36$ & $3.54 \times 10^3$ \\
                    \bottomrule
                \end{tabular}
            \end{sc}
        }
    \end{center}
\end{table*}

\begin{table*}
    \caption{Ablation study evaluating the robustness of parameter estimators for the Dirichlet distribution concentration parameter $\x{\alpha}$ on the simplex $\Delta^9$ ($d=10$) across $50$ diverse ground-truth configurations ($N=800$, $50$ trials per configuration). We report the Win Rate, Average Rank, and Median MSE across all configurations.}
    \label{tab:dirichlet-simplex-10d-ablation-results}
    \begin{center}
        \resizebox{0.95\textwidth}{!}{
            \begin{sc}
                \begin{tabular}{@{}clccc@{}}
                    \toprule
                    Methodology & $\phi(\x{x})$; $h(\x{x})$ & Win Rate & Avg. Rank & Median MSE \\
                    \midrule
                    OURS & $\phi_1$ & $0.02$ & $3.16$ & $0.7411$ \\
                    OURS & $\phi_2$ & $\mathbf{0.96}$ & $\mathbf{1.04}$ & $\mathbf{0.0912}$ \\
                    OURS & $\phi_3$ & $0.02$ & $2.28$ & $0.2661$ \\
                    Truncated SM \cite{truncated_sm} & --- & $0.00$ & $4.22$ & $1.5181$ \\
                    GSM CD \cite{gsm-on-compositional-data} & $h(\x{x}) = \x{x}$ & $0.00$ & $4.66$ & $3.8273$ \\
                    \citet{generalized_score} & $h(\x{x}) = \x{x}$ & $0.00$ & $5.64$ & $8.9018$ \\
                    \bottomrule
                \end{tabular}
            \end{sc}
        }
    \end{center}
\end{table*}

\subsubsection{Effect of Power Barrier Exponent}
As noted in Section \ref{subsec:discussion-and-practical-considerations}, the rate of attenuation at the boundary influences finite sample estimator performance. To isolate the impact of this decay rate, we study the power barrier family by systematically varying its exponent. For a general convex polytope $\Omega = \{\x{x} \in \mathbb{R}^{d} : \x{a}_{k}^{\top}\x{x} < b_{k}, 1 \leq k \leq m\}$, let $s_{k}(\x{x}) = b_{k} - \x{a}_{k}^{\top}\x{x}$ denote the slack. We consider $\phi$ of the form 
\begin{align*}
    \phi(\x{x}) = \frac{1}{p(p-1)}\sum_{k=1}^{m}s_{k}^{p}
\end{align*}
where $p \in (0, 2) \setminus \{1\}$. The corresponding Hessian is
\begin{align*}
    H_{\phi}(\x{x}) &= \sum_{k=1}^{m}s_{k}^{p-2}\x{a}_{k}\x{a}_{k}^{\top}
\end{align*}
Because the generator matrix scales inversely with the Hessian, enforcing boundary attenuation requires elements of Hessian to diverge near boundary, which implies $p < 2$. We evaluate this sweep on the $10$ dimensional truncated Gaussian model on $\mathbb{R}^{10}_{+}$ at sample size $N = 800$, across exponents $p \in \{0.2, 0.5, 0.8, 1.2, 4/3, 1.6, 1.8\}$ over 50 independent trials. \autoref{fig:trunc-gauss-pos-orthant-10d-power-barrier-exponent-ablation} summarizes the results, reporting the median MSE alongside the inter-quartile range (IQR). 

As shown in the \autoref{fig:trunc-gauss-pos-orthant-10d-power-barrier-exponent-ablation}, for smaller exponents ($p \in \{0.2, 0.5\}$), both $\x{\mu}$ and $K$ suffer from substantial error degradation, with median MSE reaching $0.132$ for $\x{\mu}$ and exceeding $10^5$ for $K$. These exponents induce rapid generator decay near boundaries, resulting in broad IQR bands. Performance improves steadily once $p > 1$, where the chosen power barrier in experiments $\phi_1$ ($p = 4/3$) recovers reliable parameter estimates (median MSE of $0.0038$ on $\x{\mu}$ and $8.43 \times 10^3$ on $K$), and $p = 1.6$ achieves the lowest error among all evaluated powers (median MSE of $0.0021$ on $\x{\mu}$ and $5.68 \times 10^3$ on $K$), approaching the performance of the linear baseline $h_1(\x{x}) = \x{x}$ \citep{generalized_score}. However, as $p$ approaches $2$ ($p = 1.8$), error rebounds sharply on both parameters (median MSE rises to $0.0149$ for $\x{\mu}$ and $1.17 \times 10^4$ for $K$) with widening of the IQR, indicating that under attenuating boundary samples reintroduces boundary noise into the GSM objective.

\begin{figure}
    \centering
    \includegraphics[width=\linewidth]{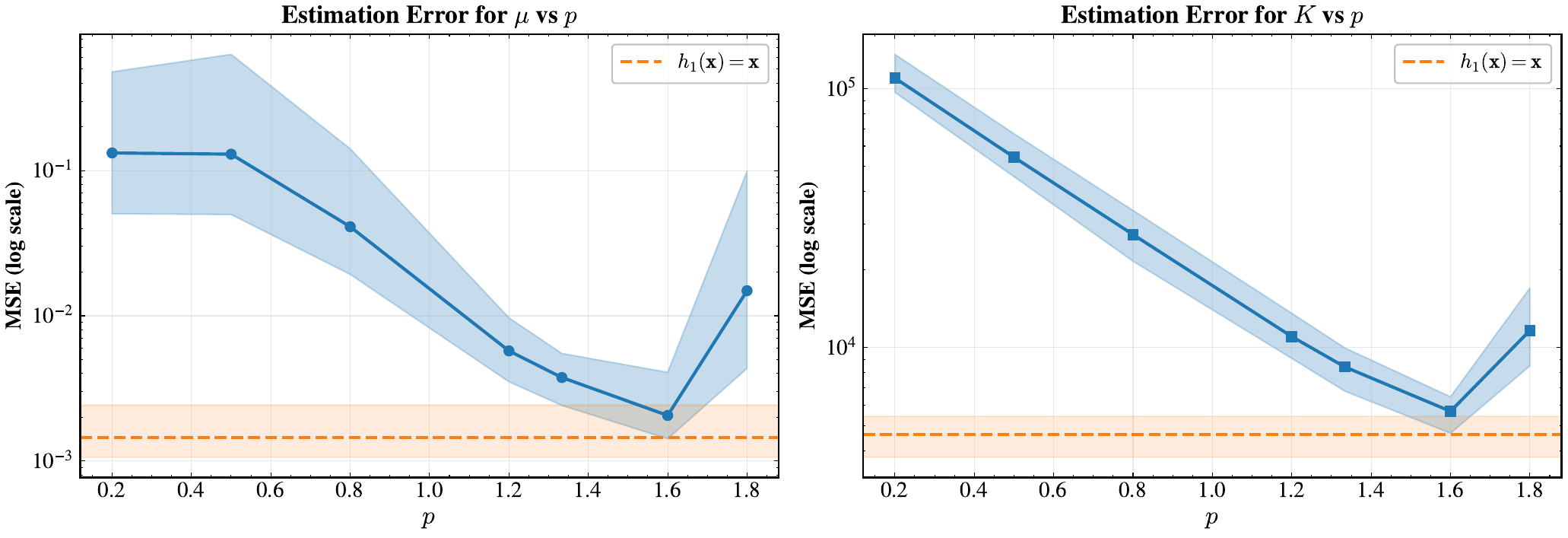}
    \caption{Estimation error for $\x{\mu}$ (left) and $K$ (right) as a function of the power barrier exponent $p$ on $\mathbb{R}_{+}^{10}$ ($d=10, N=800$). Solid curves and shaded regions display the median MSE and inter-quartile range (IQR) across 50 trials, respectively. The dashed horizontal line with shaded band indicates the median and IQR of the linear baseline $h_1(\x{x}) = \x{x}$ \citep{generalized_score}.}
    \label{fig:trunc-gauss-pos-orthant-10d-power-barrier-exponent-ablation}
\end{figure}

\subsubsection{Evaluation Under Non-Convex Support}
In practical applications, observed data may be supported on non-convex domains. Although our theoretical framework assumes a convex support, we investigate the empirical behavior of our estimator when applied to non-convex geometries via a convex relaxation. Specifically, we consider the non-convex domain
\begin{align*}
    \Omega = \{\x{x} \in \mathbb{R}^{d} : \norm{\x{x}} < a\} \cup \{\x{x} \in \mathbb{R}^{d} : b < \norm{\x{x}} < c\}, \quad 0 < a < b < c.
\end{align*}
Because our methodology requires a convex domain, a natural heuristic is to evaluate the estimator on the convex hull of $\Omega$, which corresponds to the open ball
\begin{align*}
    \Omega' = \{\x{x} \in \mathbb{R}^{d} : \norm{\x{x}} < c\}.
\end{align*}
We construct a logarithmic barrier potential directly on $\Omega'$:
\begin{align*}
    \phi(\x{x}) = -\log\left(c^{2} - \norm{\x{x}}^{2}\right).
\end{align*}
We generate samples from a standard Gaussian density ($\x{\mu}_0 = \mathbf{0}, K_0 = I_d$) truncated to $\Omega$ with radii $a=0.2$, $b=0.5$, and $c=1.0$ in dimension $d=3$. We compare our convex-hull estimator against Truncated Score Matching \citep{truncated_sm}, which accounts for domain boundaries directly by weighting the objective.

\begin{figure}[ht]
    \centering
    \includegraphics[width=\linewidth]{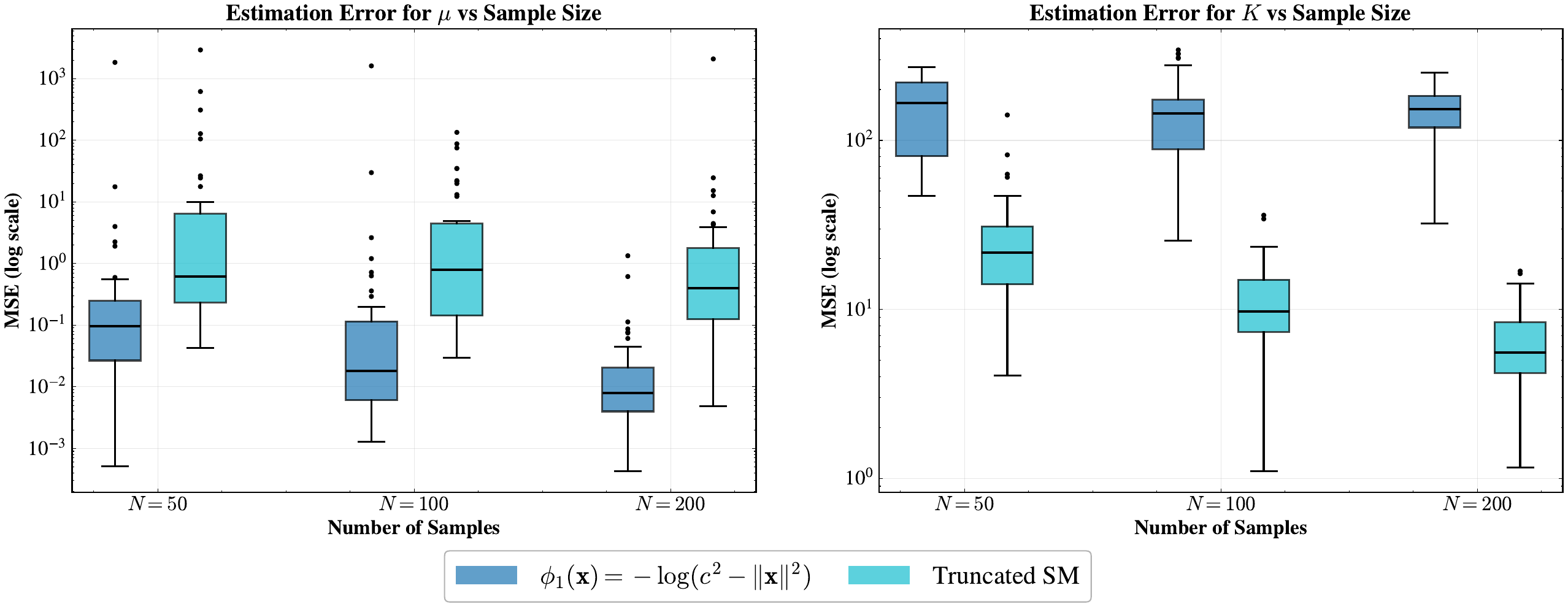}
    \caption{Parameter estimation error for $\x{\mu}$ (left) and $K$ (right) on the non-convex domain $\Omega$ with $a = 0.2$, $b = 0.5$, and $c = 1.0$ across sample sizes $N \in \{50, 100, 200\}$ over 50 independent trials.}
    \label{fig:trunc-gauss-non-convex-annulus-3d-ablation}
\end{figure}
Because the barrier $\phi$ is constructed with respect to the relaxed domain $\Omega'$, its generator $G_{\phi}(\x{x})$ does not vanish on the internal boundaries ($\norm{\x{x}} = a$ and $\norm{\x{x}} = b$), causing standard boundary conditions to fail. Consequently, we expect estimation quality to degrade. As shown in \autoref{fig:trunc-gauss-non-convex-annulus-3d-ablation}, precision matrix estimation ($K$) degrades substantially, with the MSE plateauing around $150$ across all sample sizes. Surprisingly, location parameter estimation ($\x{\mu}$) remains accurate: the median MSE decreases from $0.0959$ at $N=50$ to $0.0079$ at $N=200$, outperforming Truncated Score Matching. To explain this asymmetric performance, we examine the integration by parts step that connects the tractable score matching loss to the weighted Fisher divergence:
\begin{align*}
    \mathcal{L}^{\text{GSM}}_{\phi}(\x{\theta}) &= \frac{1}{2}\E_{\x{x} \sim p}\left[\norm{\nabla \log p_{\x{\theta}}(\x{x}) - \nabla \log p(\x{x})}^{2}_{G_{\phi}(\x{x})}\right] \\
    &= \frac{1}{2}\E_{\x{x} \sim p}\left[\nabla \log p_{\x{\theta}}(\x{x})^{\top}G_{\phi}(\x{x})\nabla \log p_{\x{\theta}}(\x{x})\right] + \E_{\x{x} \sim p}\left[\nabla \cdot \left(G_{\phi}(\x{x})\nabla \log p_{\x{\theta}}(\x{x})\right)\right] \\
    &\quad - \mathcal{B}(\x{\theta}) + C,
\end{align*}
where $C$ is a parameter-independent constant and $\mathcal{B}(\x{\theta})$ denotes the boundary integral over $\partial \Omega$:
\begin{align*}
    \mathcal{B}(\x{\theta}) = \int_{\partial \Omega} p(\x{x}) \, \x{n}(\x{x})^{\top}G_{\phi}(\x{x})\nabla \log p_{\x{\theta}}(\x{x}) \, \mathrm{d}s.
\end{align*}
Here $\partial \Omega$ comprises three concentric spheres: $\norm{\x{x}} = a$, $\norm{\x{x}} = b$, and $\norm{\x{x}} = c$. On the outer boundary $\norm{\x{x}} = c$, $G_{\phi}(\x{x}) \to \mathbf{0}$ by construction, so its contribution vanishes. The remaining boundary integral evaluates over the internal surfaces:
\begin{align*}
    \mathcal{B}(\x{\theta}) = \int_{\norm{\x{x}} = a} p(\x{x})\left(\frac{\x{x}}{a}\right)^{\top}G_{\phi}(\x{x})\nabla \log p_{\x{\theta}}(\x{x}) \, \mathrm{d}s - \int_{\norm{\x{x}} = b} p(\x{x})\left(\frac{\x{x}}{b}\right)^{\top}G_{\phi}(\x{x})\nabla \log p_{\x{\theta}}(\x{x}) \, \mathrm{d}s,
\end{align*}
where the sign inversion on $\norm{\x{x}} = b$ reflects the inward-pointing normal relative to the outer annulus. For a standard Gaussian centered at the origin, the density $p(\x{x}) \propto \exp\left(-\frac{1}{2}\norm{\x{x}}^{2}\right)$ is constant on any sphere $\norm{\x{x}} = r$. Furthermore, $\x{x}^{\top}G_{\phi}(\x{x}) = \gamma(r)\x{x}^{\top}$ for a scalar constant $\gamma(r)$. Substituting the score $\nabla \log p_{\x{\theta}}(\x{x}) = -K(\x{x} - \x{\mu})$, the integrand on each spherical shell becomes proportional to
\begin{align*}
    \x{x}^{\top}K(\x{x} - \x{\mu}) = \x{x}^{\top}K\x{x} - \x{x}^{\top}K\x{\mu}.
\end{align*}
By symmetry ($\x{x} \mapsto -\x{x}$), the linear term integrates to zero:
\begin{align*}
    \int_{\norm{\x{x}} = r} \x{x}^{\top}K\x{\mu} \, \mathrm{d}s = 0.
\end{align*}
In contrast, the quadratic term $\x{x}^{\top}K\x{x}$ is strictly positive and non-zero:
\begin{align*}
    \int_{\norm{\x{x}} = r} \x{x}^{\top}K\x{x} \, \mathrm{d}s = \frac{r^{2}\mathrm{Tr}(K)}{d} \mathrm{Area}(\mathbb{S}_{r}^{d-1}) \neq 0.
\end{align*}
Consequently, the boundary term simplifies to
\begin{align*}
    \mathcal{B}(\x{\theta}) = - C_K \mathrm{Tr}(K),
\end{align*}
for a positive constant $C_K$ independent of $\x{\mu}$. This yields an important insight --- the boundary discrepancy does not depend on $\x{\mu}$, meaning $\nabla_{\x{\mu}}\mathcal{B}(\x{\theta}) = \mathbf{0}$. Therefore, the gradient of our sample objective with respect to $\x{\mu}$ remains an asymptotically unbiased estimator of the true Fisher divergence gradient, allowing $\hat{\x{\mu}}_N \to \x{\mu}_0$ as $N \to \infty$. Conversely, $\nabla_K \mathcal{B}(\x{\theta}) \neq \mathbf{0}$, which introduces an irreducible asymptotic bias into the loss for $K$. This explains why $K$ exhibits non-convergent error, while $\x{\mu}$ converges stably.

In comparison, Truncated Score Matching \citep{truncated_sm} uses the exact boundary distance weighting to ensure boundary vanishing across all bounding surfaces, which preserves theoretical consistency for both $\x{\mu}$ and $K$. While this consistency is reflected in the steady decrease of its precision error with sample size, Truncated Score Matching exhibits noticeably higher estimation error and wider spread for $\x{\mu}$ at small sample sizes compared to our approach. We hypothesize that this could be due to finite-sample gradient fluctuations arising from the non-smooth Euclidean distance function; however, a thorough investigation of this behavior is beyond the scope of this work.

\begin{remark}
    This decoupling between $\x{\mu}$ and $K$ relies on the joint spherical symmetry of the domain cuts and the centered density ($\x{\mu}_0 = \mathbf{0}$). If the true mean were translated away from the origin, $p(\x{x})$ would vary across the internal boundaries, breaking the cancellation and coupling $\x{\mu}$ to the non-zero boundary integral. Nonetheless, this experiment highlights that boundary attenuation is a sufficient, rather than strictly necessary, condition for recovering subset parameters in constrained settings.
\end{remark}

\subsection{Data Driven Ranking of Candidate Choices of \texorpdfstring{$\phi$}{phi}} \label{appendix:subsec-data-driven-ranking} 
In practical setting, a practitioner must select a suitable potential function $\phi$ without access to the ground-truth parameter. To address this, we propose a diagnostic procedure to rank candidate choices of $\phi$ purely based on the observed data.  A natural starting point is \autoref{theorem:consistency}, which establishes that the asymptotic variance of our estimator is given by $\Gamma_{0}^{-1}\Sigma_{0}\Gamma_{0}^{-1}$. Intuitively, a potential $\phi$ that induces a higher asymptotic variance is generally less desirable. Computing this variance directly presents two issues: it relies on population expectations, and it requires the unknown true parameter $\x{\theta}_0$. We can circumvent these issues by replacing the population expectations with their finite sample empirical estimates, and substituting the true parameter with the estimated parameter $\hat{\theta}_N$. While this yields a practical heuristic rather than an exact finite sample bound, it remains firmly grounded in the asymptotic theory of our framework.

Let $\widehat{\Sigma}$ denote this sample-estimated asymptotic covariance matrix. We propose scoring candidate potentials using the trace of this matrix, $\text{Tr}(\widehat{\Sigma})$, which we refer to as the \textit{estimated total variance}. A lower estimated total variance suggests a more statistically efficient choice of $\phi$. As an alternative heuristic, we consider the geometric curvature of the objective. Because the empirical loss is quadratic in $\theta$ (true in case of exponential family, cf. \autoref{eq-convex-gsm-loss-exp}), its Hessian is exactly $\Gamma_N$. We propose ranking the candidates using the condition number of this Hessian, denoted as $\kappa(\Gamma_N)$. A lower condition number implies a better conditioned optimization landscape. To evaluate these heuristics, we ran ranking experiments on our primary settings: the truncated Gaussian model on $\mathcal{S}^{10}$ and $\mathbb{R}^{10}_{+}$, and the Dirichlet model on the simplex $\Delta^9$. We fixed the sample size to $N=800$ for these diagnostic tests; because our proposed methodology evaluates gradients at the empirical estimate $\hat{\theta}_N$, ranking criteria computed on very small sample sizes may be excessively noisy and unreliable.

To quantify the quality of our proposed scores, we compare the data-driven rankings against the true rankings, which are determined by the actual parameter Mean Squared Error (MSE) computed using the ground truth. We report Top-1 Accuracy (the proportion of trials where the heuristic successfully identifies the candidate with the lowest MSE) alongside Worst Regret and Mean Regret to measure the relative error penalty incurred when a suboptimal candidate is selected. We also report both Spearman ($\rho$) and Kendall ($\tau$) rank correlation coefficients between the predicted and true ranks. For the truncated Gaussian model, we separate the ranking evaluation for $\x{\mu}$ and $K$.

\begin{table*}[ht]
    \caption{Evaluation of data-driven candidate ranking heuristics across 50 independent trials at $N=800$. Top-1 Accuracy measures how often the heuristic selects the MSE-optimal potential. Worst and Mean Regret quantify the relative sub optimality when a misselection occurs. Rank correlations ($\rho$ and $\tau$) evaluate monotonic agreement with ground-truth error ordering.}
    \label{tab:candidate-ranking-diagnostics}
    \begin{center}
        \resizebox{\textwidth}{!}{
            \begin{sc}
                \begin{tabular}{@{}lllccccc@{}}
                    \toprule
                    Domain & Parameter & Score & Top-1 Acc. & Worst Regret & Mean Regret & $\rho$ & $\tau$ \\
                    \midrule
                    $\mathcal{S}^{10}$ & $K$ & $\text{Tr}(\widehat{\Sigma})$ & $0.980$ & $0.2714$ & $0.0054$ & $0.9900$ & $0.9867$ \\
                    & & $\kappa(\Gamma_N)$ & $0.980$ & $0.2714$ & $0.0054$ & $0.9900$ & $0.9867$ \\
                    \cmidrule(lr){2-8}
                    & $\x{\mu}$ & $\text{Tr}(\widehat{\Sigma})$ & $0.780$ & $245.8676$ & $5.5259$ & $0.5700$ & $0.5600$ \\
                    & & $\kappa(\Gamma_N)$ & $0.780$ & $245.8676$ & $5.5259$ & $0.5700$ & $0.5600$ \\
                    \midrule
                    $\mathbb{R}^{10}_{+}$ & $K$ & $\text{Tr}(\widehat{\Sigma})$ & $1.000$ & $0.0000$ & $0.0000$ & $1.0000$ & $1.0000$ \\
                    & & $\kappa(\Gamma_N)$ & $1.000$ & $0.0000$ & $0.0000$ & $1.0000$ & $1.0000$ \\
                    \cmidrule(lr){2-8}
                    & $\x{\mu}$ & $\text{Tr}(\widehat{\Sigma})$ & $0.980$ & $0.6111$ & $0.0122$ & $0.9500$ & $0.9467$ \\
                    & & $\kappa(\Gamma_N)$ & $0.980$ & $0.6111$ & $0.0122$ & $0.9500$ & $0.9467$ \\
                    \midrule
                    $\Delta^{9}$ & $\x{\alpha}$ & $\text{Tr}(\widehat{\Sigma})$ & $0.580$ & $10.5635$ & $0.7870$ & $0.3800$ & $0.3200$ \\
                    & & $\kappa(\Gamma_N)$ & $0.080$ & $30.1927$ & $3.9781$ & $-0.1100$ & $-0.1200$ \\
                    \bottomrule
                \end{tabular}
            \end{sc}
        }
    \end{center}
\end{table*}

\autoref{tab:candidate-ranking-diagnostics} summarizes the results of this evaluation. In general, the estimated total variance metric $\text{Tr}(\widehat{\Sigma})$ performs well, successfully identifying the most stable choices of $\phi$ with high accuracy and positive rank correlations across all evaluated domains. The condition number $\kappa(\Gamma_N)$ performs comparably well on the truncated Gaussian settings but fails on the Dirichlet model (yielding only $8\%$ Top-1 accuracy and negative rank correlation). This indicates that while the condition number effectively captures the curvature of the loss function, it is not a universally reliable metric for ranking $\phi$ across diverse geometries. For the Dirichlet model, the estimated total variance yields a moderate Top-1 Accuracy of $58\%$. Upon inspecting the individual trial logs, we observed that the performance gap between $\phi_1$ and $\phi_2$ in this specific setting is minimal. Consequently, while the diagnostic occasionally selects the second best potential which dampens the Top-1 accuracy score --- the actual performance penalty for doing so is marginal, resulting in relatively low overall regret. Most importantly, the estimated total variance reliably rejected highly unstable choices across the evaluated trials.

\begin{remark}
    It should be noted that this diagnostic approach becomes computationally infeasible in very high-dimensional settings, as computing the estimated total variance $\text{Tr}\left(\widehat{\Sigma}\right)$ requires computing the inverse of  $\Gamma_N$.
\end{remark}

\subsection{Implicit Variational Autoencoders with Score Matching} \label{appendix:subsec-implicit-vae}
To demonstrate that the proposed framework is applicable to generative modeling, beyond parameter estimation problems, we consider implicit variational autoencoders (VAEs)~\citep{song2019sliced}, where score matching naturally arises in the training procedure. We emphasize the novelty of this evaluation: to the best of our knowledge, we are the first to apply the generalized score matching objective, rigorously derived in \autoref{theorem:modified-ed-to-gsm}, to this specific generative problem. We demonstrate how the proposed objective can be scaled to high-dimensional settings. Consider the objective in \autoref{eq:main-objective-in-compact-form-2-gsm}:
\begin{align*}
    \L(\x{\theta}) &= \underset{\x{x} \sim p}{\E}\Bigg[\frac{1}{2}\nabla \log p_{\x{\theta}}(\x{x})^{\top}D(\x{x})\nabla \log p_{\x{\theta}}(\x{x}) + \tr\left(D(\x{x})H_{\log p_{\theta}}(\x{x})\right) + \nabla \log p_{\x{\theta}}(\x{x})^{\top}(\nabla \cdot D)(\x{x})\Bigg]
\end{align*}
While this formulation circumvents the computation of the partition function $\Z(\x{\theta})$, it introduces an additional computational challenge, namely computation of the trace of $D(\x{x})H_{\log p_{\theta}}(\x{x})$. A practical alternative is to consider an unbiased estimator of the trace term. This is typically done using Hutchinson-style trace estimator~\citep{Hutchinson01011990, Epperlyetal2024}. Such estimators have been used gainfully in several prior works~\citep{fdsm, Shetty-mcsm}. The key idea is to estimate the trace by considering projections onto random directions and computing the corresponding quadratic form. In particular, we consider
\begin{align} \label{eq:gsm-obj-in-hutchison}
    \hat{\L}(\x{\theta}) &= \underset{\substack{\x{x} \sim p, \\ \x{v} \sim p_{\x{v}}}}{\E}\Bigg[\frac{1}{2}\nabla \log p_{\x{\theta}}(\x{x})^{\top}D(\x{x})\nabla \log p_{\x{\theta}}(\x{x}) + \x{v}^{\top}D(\x{x})H_{\log p_{\theta}}(\x{x})\x{v}\nonumber \\
    &\qquad\qquad\qquad\qquad\qquad\qquad\qquad\qquad\qquad\qquad  + \nabla \log p_{\x{\theta}}(\x{x})^{\top}(\nabla \cdot D)(\x{x})\Bigg]
\end{align}
where $p_{\x{v}}$ denotes the sampling distribution of the random direction vectors $\x{v}$, chosen independently of $\x{x}$ and satisfying $\mathbb{E}[\x{v}\x{v}^{\top}] = \mathbb{I}$. If we choose $D(\x{x}) = \mathbb{I}$ in \autoref{eq:gsm-obj-in-hutchison}, then we recover the original score matching objective, with the trace term replaced by corresponding Hutchinson estimator. This resulting objective coincides with the variance-reduced version of sliced score matching (SSM-VR) considered by \citet{song2019sliced}.

For implicit VAEs, we follow the training methodology of \citet{song2019sliced}. We replace the SSM-VR objective used to train the score network with the GSM objective given by \autoref{eq:gsm-obj-in-hutchison}, which we refer to as Hutchinson-GSM (H-GSM). The model architecture, hyperparameter choices, and training procedure are identical to those followed by \citet{song2019sliced}. In the implementation, the prior of the latent is chosen to be a standard Gaussian supported on $\R^{d}$. Therefore, we work with the unconstrained GSM objective derived in \autoref{theorem:modified-ed-to-gsm}. This allows us to directly explore the effect of different choices of $D$. Specifically, we consider three choices of $D$ with $D_{1}(\x{x}) = \text{diag}(\sigma(\x{x}))$, $D_{2}(\x{x}) = \text{diag}(\exp(-\abs{\x{x}}))$, and $D_3(\x{x}) = \text{diag}(\exp(-\x{x}^{2}))$, where $\sigma(\x{x})$ denotes the element-wise sigmoid function, and $|\x{x}|, \x{x}^{2}$ and $\exp(\x{x})$ denote element-wise absolute value, square and exponential operations, respectively. 

We compare implicit VAEs trained using SSM-VR and the H-GSM objective on MNIST and CelebA datasets. Following the evaluation protocol of \citet{song2019sliced}, we report the negative log-likelihood (NLL) for MNIST and Fr\'echet Inception Distance (FID) for CelebA over various choices of $D$. 

The results for MNIST are summarized in \autoref{tab:mnist_implicit_vae_results_multi_runs_averaged}. As observed, H-GSM performs competitively with SSM-VR, in particular NLL for $D_{1}(\x{x}) = \text{diag}(\sigma(\x{x}))$ with latent dimension $8$, and NLL for $D_{2}(\x{x}) = \text{diag}(\exp(-\abs{\x{x}}))$ with latent dimension $32$. In both cases, the H-GSM objective exhibits higher variance across runs than SSM-VR.

The results for CelebA are summarized in \autoref{tab:celeba_implicit_vae_fid_results_multi_runs_averaged}. We observe that H-GSM with $D_{3}(\x{x}) = \text{diag}(\exp(-\x{x}^{2}))$ performs competitively with the SSM-VR baseline, with its FID approaching that of SSM-VR as training progresses. The $D_{2}(\x{x}) = \text{diag}(\exp(-\abs{\x{x}}))$ variant is closer to the baseline SSM-VR than $D_{1}(\x{x}) = \text{diag}(\sigma(\x{x}))$. In terms of variability across runs, $D_{3}$ exhibits a variance comparable to or lower than that of SSM-VR at later checkpoints, whereas $D_{1}$ and $D_{2}$ generally exhibit higher variance. These results further highlight the sensitivity of H-GSM performance to the choice of the diagonal matrix $D(\x{x})$.

We provide uncurated generated samples for CelebA and MNIST in \autoref{fig:vae-celeba} and \autoref{fig:vae-mnist}, respectively.

\begin{table}[ht]
\caption{NLL comparison of implicit VAE training objectives across different latent dimensions averaged over 5 runs. The expressions $\sigma(\x{x})$, $\exp(-\abs{\x{x}})$, and $\exp(-\x{x}^{2})$ are evaluated element-wise.}
\label{tab:mnist_implicit_vae_results_multi_runs_averaged}
\begin{center}
    \resizebox{0.99\columnwidth}{!}{
    \begin{sc}
        \begin{tabular}{@{}lcccc@{}}
            \toprule
            & \multicolumn{1}{c}{SSM-VR} & \multicolumn{3}{c}{H-GSM} \\
            \cmidrule(lr){2-2} \cmidrule(lr){3-5}
            Latent Dimension
            & -
            & $D_{1}(\x{x}) = \text{diag}(\sigma(\x{x}))$
            & $D_{2}(\x{x}) = \text{diag}(\exp(-\abs{\x{x}}))$
            & $D_{3}(\x{x}) = \text{diag}(\exp(-\x{x}^{2}))$ \\
            \midrule
            $8$  & $96.17 \pm 0.10$ & $96.72 \pm 0.39$ & $99.61 \pm 1.95$ & $130.02 \pm 6.05$ \\
            $32$ & $89.31 \pm 0.12$ & $89.81 \pm 0.20$ & $90.37 \pm 0.65$ & $95.91 \pm 2.34$ \\
            \bottomrule
        \end{tabular}
    \end{sc}
    }
\end{center}
\end{table}

\begin{table}[ht]
    \caption{FID comparison of implicit VAE training objectives on CelebA dataset at different training checkpoints averaged over 5 runs. The latent dimension is fixed at $32$. The expressions $\sigma(\x{x})$, $\exp(-\abs{\x{x}})$, and $\exp(-\x{x}^{2})$ are evaluated element-wise.}
    \label{tab:celeba_implicit_vae_fid_results_multi_runs_averaged}
    \begin{center}
        \resizebox{0.99\columnwidth}{!}{
        \begin{sc}
            \begin{tabular}{@{}lcccc@{}}
                \toprule
                & \multicolumn{1}{c}{SSM-VR} & \multicolumn{3}{c}{H-GSM} \\
                \cmidrule(lr){2-2} \cmidrule(lr){3-5}
                Iteration
                & -
                & $D_1(\x{x}) = \text{diag}(\sigma(\x{x}))$
                & $D_2(\x{x}) = \text{diag}(\exp(-\abs{\x{x}}))$
                & $D_3(\x{x}) = \text{diag}(\exp(-\x{x}^{2}))$ \\
                \midrule
                $10$k  & $101.07 \pm 1.54$ & $140.41 \pm 10.13$ & $122.01 \pm 6.03$ & $121.51 \pm 2.43$ \\
                $20$k  & $83.10 \pm 1.10$ & $119.66 \pm 11.49$ & $101.79 \pm 5.88$ & $99.75 \pm 5.68$ \\
                $30$k  & $76.34 \pm 1.45$ & $106.44 \pm 10.34$ & $87.57 \pm 5.91$ & $86.45 \pm 4.99$ \\
                $40$k  & $72.12 \pm 1.38$ & $100.20 \pm 11.57$ & $80.66 \pm 5.96$ & $79.82 \pm 3.94$ \\
                $50$k  & $68.89 \pm 0.69$ & $93.31 \pm 11.75$ & $75.77 \pm 6.27$ & $74.31 \pm 1.50$ \\
                $60$k  & $67.80 \pm 1.70$ & $87.77 \pm 10.27$ & $73.22 \pm 5.90$ & $70.90 \pm 2.14$ \\
                $70$k  & $66.94 \pm 2.06$ & $85.85 \pm 10.72$ & $71.21 \pm 5.74$ & $68.87 \pm 1.71$ \\
                $80$k  & $65.70 \pm 1.97$ & $82.20 \pm 9.48$ & $69.48 \pm 6.55$ & $66.97 \pm 1.13$ \\
                $90$k  & $64.49 \pm 1.48$ & $79.80 \pm 7.57$ & $67.29 \pm 4.67$ & $65.37 \pm 1.31$ \\
                $100$k & $64.10 \pm 1.82$ & $78.11 \pm 7.15$ & $66.60 \pm 5.06$ & $64.15 \pm 0.53$ \\
                \bottomrule
            \end{tabular}
        \end{sc}
        }
    \end{center}
\end{table}

\begin{remark}
    The SSM-VR results reported in \autoref{tab:mnist_implicit_vae_results_multi_runs_averaged} and \autoref{tab:celeba_implicit_vae_fid_results_multi_runs_averaged} were obtained by re-running the original implementation provided by \citet{song2019sliced} under the same experimental settings.
\end{remark}

\begin{remark}
    This experiment illustrates an important distinction between \autoref{theorem:modified-ed-to-gsm} and \autoref{theorem:modified-ed-to-sm-convex}. In the unconstrained setting of \autoref{theorem:modified-ed-to-gsm}, one can specify the weighting matrix $D$ directly, without requiring an underlying convex function $\phi$. By contrast, in the convex domain setting of \autoref{theorem:modified-ed-to-sm-convex}, the generator is induced by the choice of $\phi$. In this sense, the unconstrained formulation offers greater flexibility in practice but requires stronger assumptions.
\end{remark}

\begin{remark}
    Although the experiments in this section are based on the unconstrained objective over $\R^{d}$, the scalability discussion based on the Hutchinson estimator applies equally to the generalized score matching objective for densities supported on convex domains (\autoref{eq:main-objective-in-compact-form}).
\end{remark}

\begin{figure}[ht]
    \centering
    \includegraphics[width=0.9\linewidth]{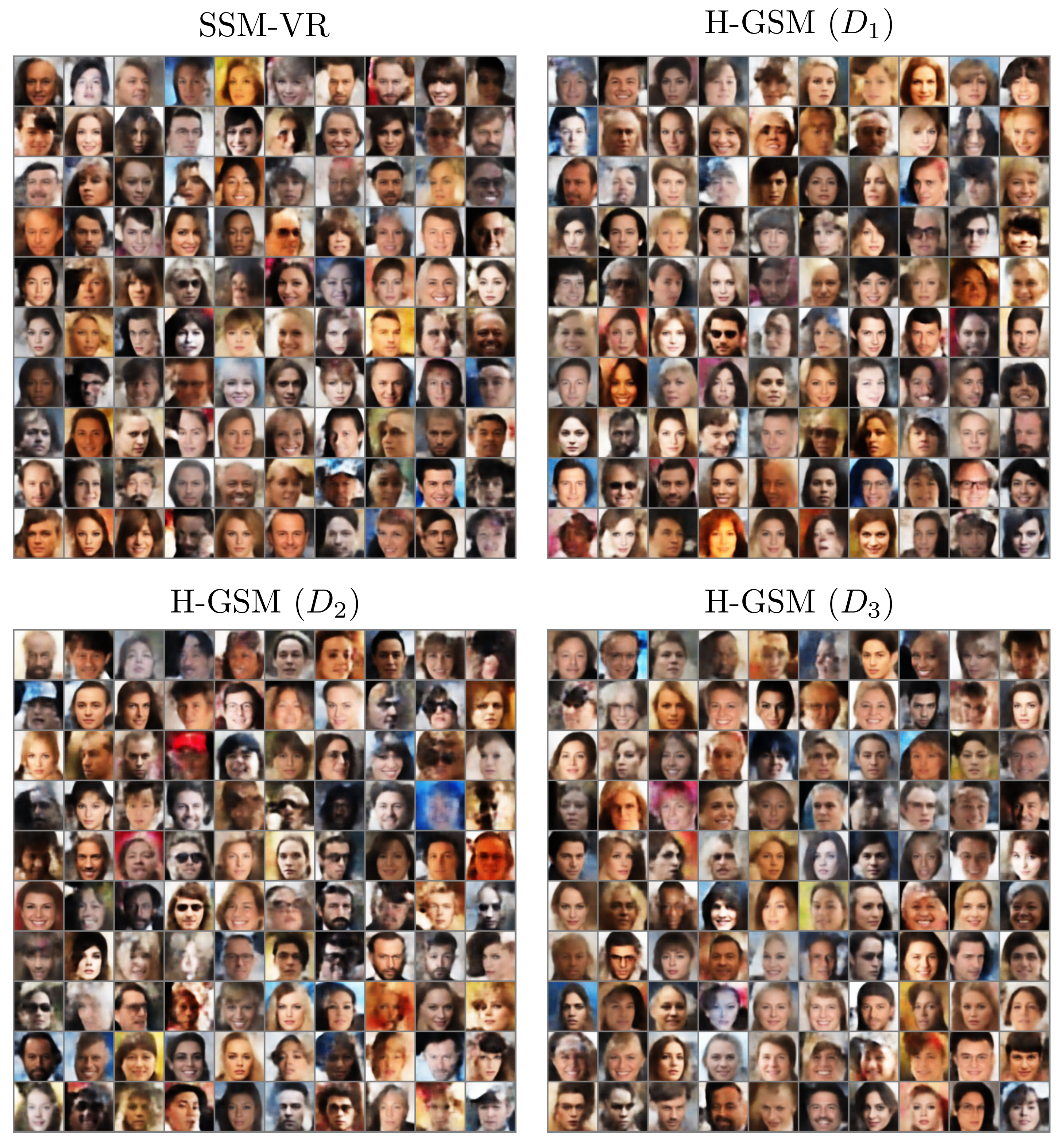}
    \caption{For the CelebA dataset, we compare the visual sample quality of implicit VAEs trained using the proposed H-GSM objective for various choices of $D$ against the SSM-VR baseline~\cite{song2019sliced}. The generated samples are of comparable or superior visual quality with respect to the baseline (SSM-VR) across the choices of $D(\x{x})$ with $D_1(\x{x}) = \text{diag}(\sigma(\x{x}))$, $D_2(\x{x}) = \text{diag}(\exp(-\abs{\x{x}}))$, and $D_3(\x{x}) = \text{diag}(\exp(-\x{x}^{2}))$. The latent dimension is fixed at $32$ for all settings shown here. A quantitative evaluation of these models via Fréchet Inception Distance (FID) is provided in \autoref{tab:celeba_implicit_vae_fid_results_multi_runs_averaged}.}
    \label{fig:vae-celeba}
\end{figure}

\begin{figure}[ht]
    \centering
    \includegraphics[width=0.9\linewidth]{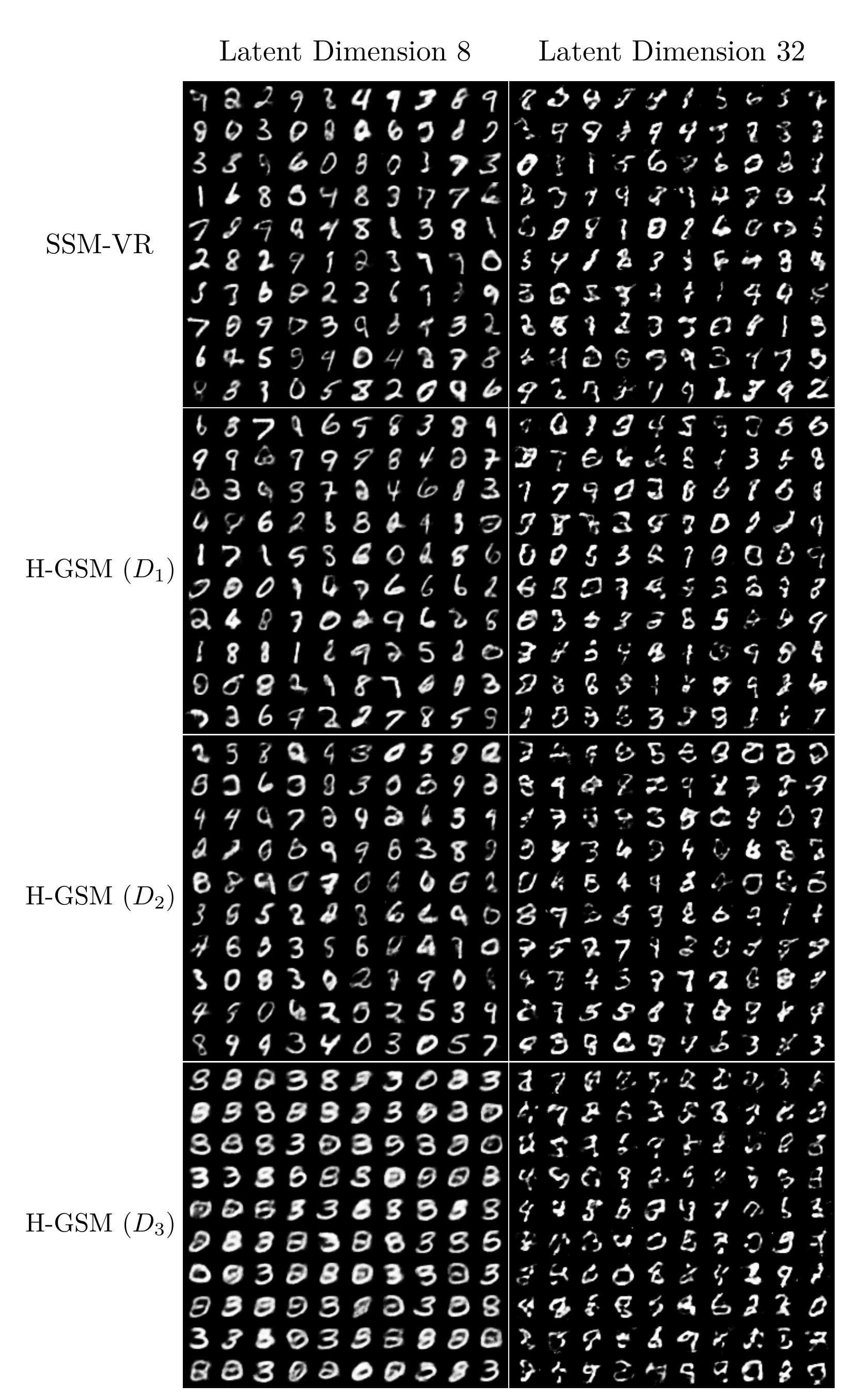}
    \caption{For the MNIST dataset, we compare the visual quality of samples generated by implicit VAEs trained using the proposed H-GSM objective for various choices of $D$ against the SSM-VR baseline~\cite{song2019sliced}. The proposed generalized score matching-based estimator consistently generates visual samples of comparable fidelity to the baseline across various latent dimensions and choices of $D(\x{x})$ with $D_1(\x{x}) = \text{diag}(\sigma(\x{x}))$, $D_2(\x{x}) = \text{diag}(\exp(-\abs{\x{x}}))$, and $D_{3}(\x{x}) = \text{diag}(\exp(-\x{x}^{2}))$. Across all methods, latent dimension $8$ outperforms latent dimension $32$ and this is consistent with the results reported by~\citet{song2019sliced}.}
    \label{fig:vae-mnist}
\end{figure}

\end{document}